
\documentclass{article}

\usepackage{graphicx}
\usepackage{subfigure}
\usepackage{booktabs} 

\usepackage{hyperref}
\usepackage{macros}
\input{1-defs}



\usepackage[accepted]{icml2021}

\icmltitlerunning{Exponential Lower Bounds for Batch Reinforcement Learning}

\begin{document}

\twocolumn[
\icmltitle{Exponential Lower Bounds for Batch Reinforcement Learning: \\ Batch RL can be Exponentially Harder than Online RL}




\begin{icmlauthorlist}
\icmlauthor{Andrea Zanette}{to}
\end{icmlauthorlist}

\icmlaffiliation{to}{Institute for Computational and Mathematical Engineering, Stanford University, Stanford, USA}

\icmlcorrespondingauthor{Andrea Zanette}{zanette@stanford.edu}

\icmlkeywords{Machine Learning, ICML}

\vskip 0.3in
]



\printAffiliationsAndNotice{}  

\begin{abstract}
Several practical applications of reinforcement learning involve an agent learning from past data without the possibility of further exploration. Often these applications require us to 1) identify a near optimal policy or to 2) estimate the value of a target policy. For both tasks we derive \emph{exponential} information-theoretic lower bounds in discounted infinite horizon MDPs with a linear function representation for the action value function even if 1) \emph{realizability} holds, 2) the batch algorithm observes the exact reward and transition \emph{functions}, and 3) the batch algorithm is given the \emph{best} a priori data distribution for the problem class. Our work introduces a new `oracle + batch algorithm' framework to prove lower bounds that hold for every  distribution. The work shows an exponential separation between batch and online reinforcement learning.
\end{abstract}
\section{Introduction}
While the grand goal of reinforcement learning (RL) is to design fully autonomous agents capable of improving their performance over time by learning from past mistakes, oftentimes a \emph{batch} approach --- which predicts some quantity of interest using past observations only --- is preferable. For example, past data may be available in large quantities and should not be disregarded by adopting a purely online method. In other applications, safety concerns require that the dataset be collected by a carefully monitored procedure, involving a human or a safe controller. In addition, batch algorithms are key tools to build  complex online procedures. 

These considerations motivate us to investigate whether there exists any \emph{fundamental limitation} to using a batch approach for RL compared to online learning. Concretely, we consider two classical batch RL problems: 1) the \emph{off-policy evaluation} (OPE) problem, where the batch algorithm needs to predict the performance of a \emph{target policy} and 2) the \emph{best policy identification} (BPI) problem, where the batch algorithm needs to identify a near optimal policy. 

\paragraph{Linear function approximation} 
As many applications of RL require very large state and action spaces, the hope is that by leveraging strong prior knowledge, for example on the form of the optimal solution, we can learn the critical features of a Markov decision process (MDP) to solve the task at hand without probing the full state-action space. 
For the OPE problem, we assume that the action-value function $\Qpi$ of the target policy $\pi$ can be written at any state action pair $(s,a)$ as an inner product $\phi(s,a)^\top \theta$ between a known feature extractor $\phi$ and an unknown parameter $\theta$ that the learner seeks to identify. For the BPI problem, this representation condition applies to the action-value function $\Qstar$ of an optimal policy.

\paragraph{Quality of the dataset and assumptions}
Batch algorithms are limited by the quality and quantity of the available data: for example, in a tabular MDP, without adequate coverage of the area of the MDP that the optimal policy tends to visit, there is little hope that we can identify such policy or even predict its performance. However, given enough samples in each state-action pairs,  it becomes easy to identify the optimal policy \citep{Azar12} or to evaluate the value of another \citep{yin2020near}. In addition, without any prior knowledge about the problem or information about the target policy, a uniform distribution over the state-action space is a priori optimal. 

Very recently, some authors have derived exponential lower bounds for the off-policy evaluation problem with linear function approximations \citep{wang2020statistical,amortila2020variant}. They discover that even if a sampling distribution induces the best-conditioned covariance matrix, at least exponentially many samples are needed to estimate the value of a target policy reasonably well. As their hard instances are tabular MDPs (i.e., with small state-action spaces), certainly \emph{there exists a better batch distribution} for their setting: the OPE problem in tabular MDPs is easily solvable by using a uniform distribution over the state-actions coupled with the policy evaluation algorithm on the empirical model. Therefore, their works show that the assumption on the condition number of the covariance matrix is insufficient in batch RL. However, their works leave open the question of whether the OPE problem with linear value functions can \emph{always} be solved by using a better sampling distribution that generates a dataset of polynomial size (in the horizon and feature dimension), leading to the following, more fundamental  question in the context of batch RL with function approximation:
\begin{center}
\textbf{What can a batch algorithm learn using the best a priori distribution for the problem class at hand?}
\end{center}
In particular, we wonder whether  there is any penalty in using an entirely batch approach in place of an online (and adaptive) method if a good dataset is provided to the batch algorithm. To answer this question for both the OPE and BPI problems in the strongest possible way, we consider an auxiliary process or \emph{oracle} that provides the \emph{batch algorithm} with the best data distribution for the task; together they form a \emph{learning algorithm}. However, in contrast to fully online / adaptive algorithms, the oracle is not allowed to change its data acquisition strategy while information is being acquired. As we explain in \cref{lowerbound:sec:Benfits}, this framework allows us to derive strong batch RL lower bounds, because they \emph{will hold for every batch distribution}. Thanks to this framework, we can recover the concurrent lower bound by \cite{wang2020statistical} --- which holds for one specific choice of the sampling distribution --- in infinite horizon RL as a corollary.

\paragraph{Learning process and assumptions} 
We consider oracles that can decide on a set of strategic policies to gather data. 
Alternatively, the oracles can directly specify the state-actions where they want to observe the rewards and transitions (for example, with a simulator). In either case, the oracle selects a sampling strategy and the batch algorithm later receives a \emph{dataset} of states, actions, rewards and transitions. Using the dataset, the batch algorithm makes a prediction, i.e., it  estimates the value of a target policy (OPE problem) or returns a near optimal policy (BPI problem). 
We make two other assumptions that favor the learner:
\begin{itemize}
\item \emph{Realizability}, 
i.e., there is no misspecification.
\item \emph{Exact feedback}: the exact reward and transition \emph{function} is observed for each point in the dataset. This is equivalent to observing \emph{infinite data}.
\end{itemize} 
\subsection{Contributions}
With such strong assumptions and the freedom of selecting any state-action, identifying a near optimal policy or predicting the value of another should be easy tasks for the learner. For example, in tabular MDPs the oracle could prescribe one query in each state-action pair. This way, the reward and transition functions would become known everywhere, giving  the batch algorithm  complete knowledge of the MDP. For linear bandits with feature vectors spanning $\R^d$, we know that $d$ queries along a basis suffice to identify the full bandit instance. For $H$ horizon MDPs with linear representations, backward induction with $dH$ queries and exact feedback precisely identifies the MDP at hand. 
In all these cases the oracle can find a good sampling distribution for the batch learner, and there is little or no advantage to using an  oracle that adapts the query selection strategy as feedback is received. However, in infinite horizon problems the situation is quite different; we show that
\begin{enumerate}
\item there exists OPE and BPI problems where any batch algorithm must receive an \emph{exponential} $\approx (\frac{1}{1-\gamma})^d$ dataset to return a good answer, 
\item if the dataset does not originate from policy rollouts then the lower bounds hold even if the action-value function of \emph{every policy} admits a linear representation,
\item there exist exponentially hard batch BPI problems (even under the best data distribution) which are easy to solve with online / adaptive algorithms, showing \emph{an exponential separation between batch and online RL},
\item there exist exponentially hard problems for infinite horizon batch RL which cannot arise in finite horizon problems, showing \emph{exponential separation between finite and infinite horizon batch RL}.
\end{enumerate}
As a corollary, our work recovers \cite{wang2020statistical}'s lower bound in infinite horizon RL and also shows that the classical globally optimal experimental design yields \emph{provably bad} distributions for infinite horizon RL, making learning impossible even in the limit of infinite data; we discuss this in \cref{lowerbound:sec:Benfits}.

We make the following \emph{technical} contributions: 
\begin{enumerate}
\item we introduce a new `oracle + batch algorithm' framework to derive lower bounds for \emph{every a priori distribution}; this automatically yields fixed distribution lower bounds (e.g., \citep{wang2020statistical}) as a special case,
\item we help formalize the hardness induced by the \emph{deadly triad} \citep{sutton2018reinforcement}, i.e., the combination of off-policy learning, bootstrapping and function approximation; in particular, we explain that the  \emph{bootstrapping} problem is fundamentally different and potentially more severe than the \emph{extrapolation} problem,
\item we present new classes of hard MDPs where critical MDP information is `deferred’ --- through bootstrapping --- and hidden in an unknown and exponentially small region \emph{in feature space}, too small to be covered by a batch dataset but easy to locate using an online algorithm. 
\end{enumerate}

\subsection{Literature}
Polynomial lower bounds are often derived to certify that a certain algorithm is sample efficient \citep{jiang2016doubly,duan2020minimax,hao2020sparse}; in this work we are interested in exponential lower bounds. Divergence of dynamic programing algorithms with function approximation is well known \citep{baird1995residual,tsitsiklis1996feature} and has prompted researchers to look for information-theoretic lower bounds for generic predictors \citep{chen2019information} and in presence of misspecification \citep{du2019good}. Concurrently,  \citet{weisz2020exponential,wang2020statistical} also show information-theoretic lower bounds highlighting the danger of \emph{extrapolation}; 
for additional literature, please see app. \ref{app:Literature}.

\section{Preliminaries}
\label{sec:Defs}
Discounted infinite horizon MDPs \citep{puterman1994markov} are defined by a tuple $M = \langle \StateSpace,\ActionSpace,p,r,\gamma \rangle$, where
$\StateSpace$ is the state space, $\ActionSpace_s$ is the action space in state $s\in\StateSpace$ and $\ActionSpace = \cup_{s\in \StateSpace}\{\ActionSpace_s\} $ is the set of all state-dependent action spaces. We assume that for each state $s \in \StateSpace$ and action $a \in \ActionSpace_s$ there exists a measure $p (\cdot \mid s, a)$ representing the transition dynamics and a scalar reward function $r(s,a) \in [-1,1]$. 
The discount factor $\gamma$ is assumed to be in $(0,1)$. We consider the discounted return of any policy to be $\in [-1,+1]$  (without loss of generality upon rescaling the reward function). 
A deterministic policy $\pi$ maps every state $s\in \StateSpace$ to an action $a\in\ActionSpace_s$. 
The value function of policy $\pi$ in state $s \in \StateSpace$ is defined as $V^\pi(s)= \sum_{t=0}^\infty \gamma^{t}\E_{x_t\sim \pi \mid x_0 = s}r(x_t,\pi(x_t))$;  the limit always exists and it is finite under our assumptions as long as the expectation is well defined. The action-value function $Q^\pi$ of policy $\pi$ in $(s,a)$ also exists and is defined as $Q^\pi(s,a)= r(s,a) + \sum_{t=1}^\infty \gamma^{t}\E_{x_t\sim \pi \mid (s,a)}r(x_t,\pi(x_t))$.
An optimal policy $\pistar$, when it exists, is defined in every state as $\pistar(s) = \argmax_\pi V^\pi(s)$ and the corresponding value function and action-value function are denoted  with $\Vstar = V^{\pistar}$ and $\Qstar = Q^{\pistar}$. We sometime add the subscript $M$ to indicate that a certain quantity depends on the MDP $M$ under consideration (e.g., we write $\Vstar_M$). The Euclidean ball in $\R^d$ is defined as $\B = \{ x\in\R^d \mid \| x \|_2 \leq 1 \}$.
The \emph{Bellman optimality operator} $\T$ and the \emph{Bellman evaluation operator} $\T^\pi$ are mappings between action value functions: 
\begin{align*}
(\T Q)(s,a) & = r(s,a) + \gamma\E_{s' \sim p(s,a)} \sup_{a' \in \ActionSpace_{s'}}Q(s',a') \\ 
(\T^{\pi} Q)(s,a) & = r(s,a) + \gamma\E_{s' \sim p(s,a)} Q(s',\pi(s')).
\end{align*}
\section{Batch Reinforcement Learning}
We first formally define the two  \emph{learning problems}, namely the task of returning a near optimal policy, also known as \emph{best policy identification} (BPI) problem, and the task of predicting the value of a \emph{target policy}, also known as \emph{off-policy evaluation} (OPE) problem. Then, in the next four sub-sections we describe the learning process in more detail. The lower bounds that we later derive will hold for all algorithms of the form described in this section.

A \emph{BPI problem} $(s^\star,\{ M \in \M\} )$ is defined by a starting state $s^\star$ and a class $\M$ of MDPs sharing the same state space, action space and discount factor. 
An \emph{OPE problem} $(s^\star,\{(M,\pi_M) \mid M \in \M\})$ additionally requires us to identify one or more target policies
$\pi_M$ for each $M \in \M$. 

Depending on the problem, the oracle selects a sampling strategy such that for every \emph{problem instance} $(s^\star,M,\pi_M)$ (of an OPE problem) or $(s^\star,M)$ (of a BPI problem) the batch algorithm  experiences a dataset $\mathcal D$ from $M$ and uses it to return an accurate estimate $\Qhat_{\mathcal D}$ of the action-value function of $\pi_M$ at $s^\star$ (for the OPE problem) or a near optimal policy $ \pihatstar_{\mathcal D}$ on $M$ from $s^\star$ (for the BPI problem).
\begin{algorithm}[H]
\footnotesize
\floatname{algorithm}{Protocol}
\begin{algorithmic}[1]
\caption{\footnotesize \textsc{Batch RL with a Strategic Oracle}}
\label{alg:Policy:BatchLearner}
\STATE \emph{(Input)} Oracle receives a batch problem and budget $n$
\STATE \emph{(Query Selection)} Oracle chooses $\mu$ (def. \ref{lowerbound:asm:PolicyFree})
or $T$ (def. \ref{lowerbound:asm:PolicyInduced}).
\STATE \emph{(Data Collection)} Batch algorithm receives the dataset $\mathcal D$
\STATE \emph{(Output)} Batch algorithm returns $\Qhat_{\mathcal D}$ or $\pihatstar_{\mathcal D}$ 
\normalsize
\end{algorithmic}
\end{algorithm}
\subsection{Step I: Input}
\label{sec:step1}
The oracle receives \emph{either} an OPE or a BPI problem together with a \emph{query budget} $n\in \mathbb N$. The oracle has access to each instance of the problem it is given, and in particular it knows the state and action spaces $\StateSpace,\ActionSpace$, the discount factor $\gamma$, and the full set of transition functions and reward functions for each MDP $M\in\M$, but it does not know which of these MDPs in $\M$ it will interact with or what the target policy will be. In addition, the oracle has full knowledge of the batch algorithm.

\subsection{Step II: Query Selection}
\label{sec:step2} 
The purpose of the oracle is to help the batch algorithm by providing it with the best dataset for the specific problem at hand. 
To capture different mechanisms of data acquisition, we consider two methods to specify the \emph{query set}, i.e., the set of state-action pairs where the oracle wants the batch algorithm to observe the rewards and the transitions. In the first mechanism the oracle directly selects the state-actions; we place no restriction on the mechanism to obtain these queries as long as the sampling distribution is fixed for all MDPs in the class.
\begin{definition}[Policy-Free Queries]
\label{lowerbound:asm:PolicyFree} 
A set $\mu$ of state-action pairs is said to be policy-free for an OPE or BPI problem if $\mu$ does not depend on the specific MDP instance $M \in \M$.
\end{definition}
Since the batch algorithm observes the exact reward and transition function at the selected query points, the number of policy-free queries is \emph{the size of the support of the distribution} $\mu$.

The second mechanism to collect data is by selecting deterministic\footnote{This is not a restriction: for any given stochastic policy the agent can sample an action from the distribution of actions in every state before deploying the policy.} policies. The policies will generate random trajectories that the batch algorithm later observes. Since we are interested in what the agent can learn in the limit of infinite data (i.e., under exact feedback) we let the batch algorithm observe \emph{all possible realizations} of such trajectories. With this aim, we define the state-action space reachable in $c$ or less timesteps from $s_0$ using policy $\pi$ as $\Reach(s_0,\pi,c) = $
\begin{align*}
\{(s,a) \mid \exists t < c, \; \text{s.t.} \; \Pro\( (s_t,a_t) = (s,a) \mid \pi,s_0 \) > 0  \}
\end{align*}  
where $(s_t,a_t)$ is the random  state-action encountered at timestep $t$ upon following $\pi$ from $s_0$.
We let the oracle select the best combination of trajectory lengths and number of distinct policies.

\begin{definition}[Policy-Induced Queries]
\label{lowerbound:asm:PolicyInduced}
Consider an OPE or BPI problem and fix a set $T = \{ (s_{0i},\pi_i,c_i) \}_{i=1}^\kappa$ of triplets, each containing a starting state $s_{0i}$, a deterministic policy $\pi_i$ and a trajectory length $c_i$ such that $\sum_{i=1}^\kappa c_i \leq n$. 
Then the query set $\mu$ induced by $T$ is defined as
\begin{align*}
\mu \defeq \bigcup_{(s_{0},\pi,c) \in T} \Reach(s_{0},\pi,c).
\end{align*}
\end{definition} 
The condition $\sum_{i=1}^\kappa c_i \leq n$ attempts to make the amount of information acquired using policy-free queries comparable to the policy-induced query method, although the latter generates $|\mu| \geq n$ if the dynamics are stochastic. In other words, the batch learner always observes at least $n$ state-actions if these are induced by policies.

\begin{remark}
\label{rem:PolicyInducedIsPolicyFree}
A policy-induced query set is also policy-free whenever the dynamics of each MDP $M\in\M$ are the same. 
\end{remark} 

\begin{remark}
	When prescribing a set $T$ the oracle has knowledge of the induced dataset $\mu$ for each choice of $T$ (e.g., by having access to a connectivity graph of the MDP).
\end{remark}

While a policy-free query set may seem less constrained than a policy-induced query set, the latter may implicitly reveal additional information about the dynamics of the MDP whenever the induced trajectories are all different across different MDPs.

\subsection{Step III: Data Collection}
After the oracle has submitted the sampling strategy, the batch learner receives a dataset $\mathcal D$ which contains the \emph{exact} reward and transition \emph{function}, $r(s,a)$ and $p(s,a)$, from the MDP $M \in \M$ for each $(s,a)$ in the query set $\mu$. 
\subsection{Step IV: Output}
\label{sec:step4}
The batch algorithm is finally required to make a prediction using the acquired dataset $\mathcal D$. For the OPE problem, the batch algorithm also receives the target policy $\pi_M$ and it is required to output an estimator $\Qhat_{\mathcal D}$ for the action-value function of the target policy $Q_M^{\pi_M}(s^\star,\cdot)$; for the BPI problem, it must output a near-optimal policy $\pihatstar_{\mathcal D}$ at $s^\star$.

\subsection{Evaluation Criterion}
\label{sec:evaluation}
The \emph{oracle} and the \emph{batch algorithm} together form a \emph{learning algorithm}. This framework allows us to derive batch lower bounds in a strong form as they will hold for any data distribution. We say that the learning algorithm is \emph{$(\epsilon,\delta)$-sound} for an OPE problem
if for every instance 
$(s^\star,M,\pi_M)$ of the problem the returned estimator $\widehat Q_{\mathcal D}$ is accurate w.h.p.:
$$\Pro\( \sup_{a\in \ActionSpace_{s^\star}}|(Q_M^{\pi_M} - \Qhat_{\mathcal D})(s^\star,\cdot) | < \epsilon \) > 1-\delta. $$
Similarly, we say that that the learning algorithm is \emph{$(\epsilon,\delta)$-sound} for a BPI problem if for every instance $(s^\star,M)$  it holds that the returned policy $\pihatstar_{\mathcal D}$ is near optimal w.h.p.:
$$\Pro\( (\Vstar_M - V_M^{\pihatstar_{\mathcal D}})(s^\star)  < \epsilon \) > 1-\delta.$$
As the query set is always non-random and the batch algorithm experiences the exact reward and transition function, the only  randomness lies in the possible randomization internal to the batch algorithm when it returns an answer.

\subsection{Adaptive and Online Algorithms}
Consider a policy-free mechanism. We say that a learning algorithm is \emph{adaptive} if every time the oracle  submits a state-action it receives the feedback from the environment and can use it to select the next state-action to query the MDP.

Likewise, consider a policy-induced mechanism. We say that a learning algorithm is acting \emph{online} if every time the oracle selects a policy and trajectory length, it can use the acquired feedback to  select the next combination of policy and trajectory length (its position is reset in every episode).
\section{Intuition}
\label{main:sec:Intuition}
\begin{figure*}[t!]
\centering    
\begin{adjustbox}{minipage=\linewidth,scale=1.0}
\centering
\subfigure{
\begin{tikzpicture}
\def\r{1.6};
\filldraw[color=gray!50, fill=gray!10](0,0) circle (\r);
\draw[blue!75, ->] (0,0) -- node[very near end, below] {\scriptsize $\phi_1$} ({\r*1/sqrt(2)},{\r*1/sqrt(2)});
\draw[darkgray, dashed, ->]  ({\r*1/sqrt(2)},{\r*1/sqrt(2)}) -- node[very near end, sloped, above] {\scriptsize $\gamma\phi^+_1$} (0,{\r*1/sqrt(2)});
\draw[gray, semithick, densely dotted, ->]  (0,0) --  node[midway, left] {\scriptsize $\phi_1 - \gamma\phi^+_1$}(0,{\r*1/sqrt(2)});
\draw[blue!75, ->] (0,0) -- node[very near end, above] {\scriptsize  $\phi_2$} ({\r*1/sqrt(2)},-{\r*1/sqrt(2)});
\draw[darkgray, dashed, ->]  ({\r*1/sqrt(2)},-{\r*1/sqrt(2)}) --  node[very near end, sloped, below] {\scriptsize  $\gamma\phi^+_2$} (0,{-\r*1/sqrt(2)});
\draw[gray, semithick, densely dotted, ->]  (0,0) --  node[midway, left] {\scriptsize $\phi_2 - \gamma\phi^+_2$} (0,{-\r*1/sqrt(2)});
\draw[draw = none,darkgray, thick ,densely dotted, ->] (0,-{1.2*\r}) -- node[pos = 1.1]{} (0,{1.2*\r});
\end{tikzpicture}}
\subfigure{
\begin{tikzpicture}
\def\r{1.6};
\def\dtheta{0.30}
\def\theta{11}
\def\nmax{6};
\filldraw[color=gray!50, fill=gray!10](0,0) circle (\r);
\foreach \n in {1,...,{\nmax}}{
\draw[blue!75,->] (0,0) -- ({\r*cos((\n+\dtheta)/\nmax*360)},{\r*sin((\n+\dtheta)/\nmax*360)});
\draw[darkgray ,dashed, ->] ({\r*cos((\n+\dtheta)/\nmax*360)},{\r*sin((\n+\dtheta)/\nmax*360)}) -- (0,{\r*sin((\n+\dtheta)/\nmax*360)});
\draw[darkgray, thick ,densely dotted, ->] (0,-{1.2*\r}) -- node[at end, right]
{\scriptsize $\mathcal R$}
(0,{1.2*\r});
\draw[red,densely dotted, thick, ->] (({-1.1*\r},{0}) -- node[near end, below ] {\scriptsize $\mathcal N$}
({1.25*\r},{0});
};
\end{tikzpicture}}
\hspace{-2mm}
\subfigure{
\begin{tikzpicture}
\def\r{1.6};
\def\theta{16}
\filldraw[color=gray!50, fill=gray!10](0,0) circle (\r);
\draw[semithick, gray, fill=gray!50] ({\r*cos(\theta)},{-sin(\theta)*\r}) arc(-\theta:\theta:{\r}) -- cycle;
\draw[semithick, gray, fill=gray!50] ({\r*cos(\theta+180)},{-sin(\theta+180)*\r}) arc(-\theta+180:\theta+180:{\r}) -- cycle;
\draw[blue!75, ->] (0,0) -- node[near end, below] {\tiny $\phi_i$}({\r*0.98},{\r*0.15});
\draw[darkgray, dashed, ->] (({\r*0.98},{\r*0.15}) -- node[near end, above] {\tiny $\gamma\phi^+_i$} ({\r*0.15},{\r*0.15}) ;
\draw[gray, densely dotted, semithick, densely dotted, ->] (({0},{0}) -- node[near end, left] {\tiny $\phi_i - \gamma\phi^+_i$} ({\r*0.15},{\r*0.15});
\draw[draw = none,darkgray, thick ,densely dotted, ->] (0,-{1.2*\r}) -- node[pos = 1.1]{} (0,{1.2*\r});
\end{tikzpicture}}
\hspace{-0mm}
\caption{ 
\emph{Left}: the orthogonal blue vectors $\phi_1,\phi_2$ represent the choice of the algorithm, while $\phi^+_1,\phi^+_2$ are the corresponding successor features. The learner thus acquires information only along the vertical direction.
\emph{Middle}: in the figure $\mathcal R = \Range(\Phi^\top-\gamma(\Phi^+)^\top)$ and $\mathcal N = \Null(\Phi-\gamma\Phi^+)$. 
\emph{Right}: in this case the next-state discounted feature $\gamma\phi^+_i$ cannot project $\phi_i$ onto the vertical plane because of the discount factor $\gamma < 1$.}
\label{lowerbound:fig:main}
\end{adjustbox}
\end{figure*}
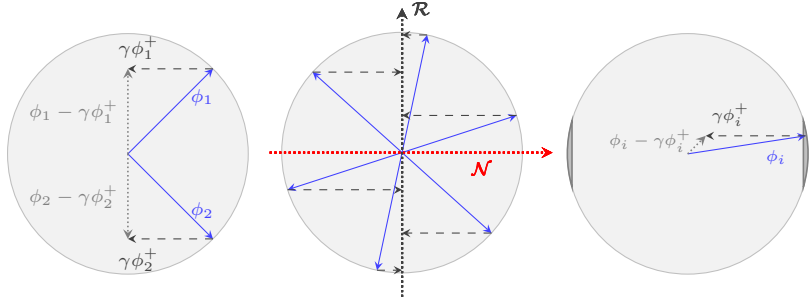
The mechanism that induces hardness for infinite horizon problems must be different than that in finite horizon: the constructions from \citet{weisz2020exponential,wang2020provably} rely on the \emph{extrapolation} issue that compounds the errors multiplicatively. In our case, the reward and transition functions are observed exactly, so there is no error to extrapolate in the first place. Instead, \emph{bootstrapping}, i.e., the fact that the value function in one state depends on the same value function at successor states \citep{sutton2018reinforcement}, is the root cause of hardness; here we provide some intuition. 

While each of our theorems need a different construction, they all build on the intuition presented in this section; at a high level, bootstrapping can ``erase'' the information gained along certain directions in feature space.


Suppose that the oracle is trying to find a good policy-free query set for the off-policy problem on a class of MDPs with feature vectors $\phi(\cdot,\cdot)$ anywhere in the unit Euclidean ball $\B$. 
Denote with $(s_1,a_1),\dots,(s_n,a_n)$ the state-actions chosen by the oracle and with $(s^+_1,a^+_1),\dots,(s^+_n,a^+_n)$ the corresponding successor states and actions\footnote{Assume deterministic successor states; the actions in the successor states are determined by the target policy (OPE problem) or by an $\argmax$ function (BPI problem).}.
Intuitively, choosing a set $\phi(s_1,a_1), \dots, \phi(s_n,a_n)$ of orthogonal feature vectors of maximal length seems to be the best the oracle can do because it gives rise to a covariance matrix $\Phi^\top\Phi$ (where $\Phi$ is described below) that is a multiple of the identity. In \cref{lowerbound:fig:main} (left) we represent the feature space of an RL problem where the learner can choose any feature vector in the Euclidean ball in $\R^2$; the agent's choice of the feature vectors $\phi_i=\phi(s_i,a_i)$ arise from globally optimal experimental design \citep{pukelsheim2006optimal}. 
Define
\begin{align*}
\hspace{-2mm}
\Phi = \begin{bmatrix}
\phi(s_1,a_1)^\top \\
\dots \\
\phi(s_n,a_n)^\top
\end{bmatrix},
r = \begin{bmatrix}
r(s_1,a_1) \\
\dots \\
r(s_n,a_n) 
\end{bmatrix},  
\Phi^+ = \begin{bmatrix}
\phi(s^+_1,a^+_1)^\top \\
\dots \\
\phi(s^+_n,a^+_n)^\top
\end{bmatrix}
\end{align*}

After observing the reward and successor states, we can write down the local (i.e., at the state-actions chosen by the oracle) Bellman equations. If we leverage the known functional form of the action-value function, we can write the following linear system, whose solution (in terms of $\theta \in \R^d$ or action-value function $\Phi\theta$) the batch algorithm seeks to discover:
\begin{align}
\label{eqn:FinalLinearSystem}
\Phi\theta = r + \gamma\Phi^+\theta \quad \longrightarrow \quad  (\Phi -\gamma\Phi^+)\theta = r.
\end{align}
Unlike machine learning or bandit problems where the agent directly observes a response from $\phi(s_i,a_i)$,  bootstrapping the future returns makes the agent observe a response (i.e., the reward) corresponding not to $\phi(s_i,a_i)$ but to  $\phi(s_i,a_i) - \gamma \phi(s^+_i,a^+_i)$, which we call ``effective feature vector'', see \cref{eqn:FinalLinearSystem}. 
Unfortunately, $\phi(s^+_i,a^+_i)$ can act adversarially. As an example, consider \cref{lowerbound:fig:main} (left) in $\R^2$ where the orthogonal feature vectors chosen by the agent are projected along the vertical direction by bootstrapping: the net effect is that the agent only learns along one axis. Mathematically, when
this happens the final system of linear equations in \cref{eqn:FinalLinearSystem} admits infinitely-many solutions since $\Phi - \gamma \Phi^+$ is rank 
deficient\footnote{This follows from the fundamental theorem of linear algebra. In particular, since the row space of $\Phi - \gamma\Phi^+$ does not span $\R^d$, the nullspace of $\Phi - \gamma\Phi^+$ must have dimension at least $1$.}. From a reinforcement learning perspective, the local Bellman equations in \cref{eqn:FinalLinearSystem} admit multiple linear value functions as possible solutions. This happens because the rewards and transitions that the batch algorithm has received could have originated from any MDP whose action-value function is a solution to the local Bellman equations.

Unfortunately, choosing more feature vectors does not easily solve the problem (\cref{lowerbound:fig:main} (middle)). To acquire information in all directions (i.e., to ensure that $\Phi - \gamma\Phi^+$ in \cref{eqn:FinalLinearSystem} is full rank)  the oracle must probe the small spherical cap in \cref{lowerbound:fig:main} (right). Which spherical cap to probe is unknown ahead of sampling (as the target policy or the optimal policy are unknown), and thus the oracle needs to probe all of them if it wants the batch algorithm to confidently solve the problem. Using the fact that there are exponentially many spherical caps in an Euclidean ball in $\R^d$, we conclude.
\paragraph{Exponential separation with online learning}
An online algorithm can detect that the next-state feature matrix $\Phi^+$ is acting adversarially. In addition, $\Phi^+$ reveals the location of the spherical cap in \cref{lowerbound:fig:main} (right). The online algorithm can then probe the spherical cap to ensure that the linear system in \cref{eqn:FinalLinearSystem} is full rank.

\section{Exponential Lower Bounds}
\label{sec:MainResult}
We define the \emph{query complexity to $(\epsilon,\delta)$-soundness of a problem} (OPE or BPI) to be the minimum value for $n$ (as in \cref{lowerbound:asm:PolicyFree,lowerbound:asm:PolicyInduced}) such that there exists a $(\epsilon,\delta)$-sound learning algorithm for that problem. In particular, the query complexity depends on the MDP class $\M$ and can be different for the OPE and BPI tasks and for the policy-free and policy-induced query mechanism. Our lower bounds on the query complexity are \emph{significantly stronger} than typical sample complexity lower bounds: they are really lower bounds on the \emph{size of the support} of the distribution $\mu$  and automatically imply   \emph{infinite sample complexity} lower bounds  since the batch algorithm already observes the exact reward and transition functions where $\mu$ is supported.

The lower bounds 
are expressed in terms of the regularized incomplete beta function $I_x(a,b) = B(x,a,b)/B(1,a,b)$ where $B(x,a,b)$ is the incomplete beta function for some positive real numbers $a,b$ and $x \in [0,1]$; for the precise definitions, please see \cref{app:additional_notation}. For brevity, define $
	\mathcal N(\gamma,d) = I^{-1}_{1-\gamma^2}\(\frac{d-1}{2},\frac{1}{2}\)
$. \cref{cor:LowerBound} ensures $\mathcal N(\gamma,d)$ is exponential in the dimension $d$ for $d \geq 5$ (here the $\approx$ symbol highlights an approximate dependence without a formal definition):
\begin{align*}
\mathcal N(\gamma,d) > 2^{-d}\mathcal N(\gamma,d) & \geq \gamma  \sqrt{d} \( \frac{1}{2^{2.5}(1-\gamma)} \)^{\frac{d-1}{2}} \\
& \approx \(\frac{1}{1-\gamma}\)^d.
\end{align*}
\subsection{Realizability Assumptions}
Unless additional assumptions are made regarding the MDP class $\M$ that defines the OPE and BPI problems, the query complexity of a reasonably sound  learner is exactly the size of the state and action space (we assume exact feedback). One hopes that by restricting the MDP class $\M$, the query complexity of the OPE and BPI problems can be brought down to a more manageable level, in particular, independent of the state-action spaces.

We make one of the following three assumptions (only one assumption will hold at any given time, depending on the theorem); two are known as realizability, and the third is strictly stronger than the first two. The first concerns the OPE problem; $\B$ is the unit Euclidean ball.

\begin{assumption}[$Q^\pi$ is Realizable]
\label{lowerbound:asm:OPE:MDP}
Given an OPE problem $(s^\star,\{(M,\pi_M), M\in\M\})$, there exists a $d$-dimensional map $\phi(\cdot,\cdot)$ such that $\|\phi(\cdot,\cdot)\|_2 \leq 1$ and for any  $M\in \M$ the action-value function of the target policy $\pi_M$ satisfies $\forall (s,a), Q_M^{\pi_M}(s,a) = \phi(s,a)^\top \theta_M$ for some $\theta_M \in \B$.
\end{assumption}
For the BPI problem the representation condition applies to the action-value function  of an optimal policy.

\begin{assumption}[$\Qstar$ is Realizable]
\label{lowerbound:asm:BPI:MDP}
Given a BPI problem $(s^\star,\{M\in\M\})$, there exists a $d$-dimensional feature map $\phi(\cdot,\cdot)$ such that $\|\phi(\cdot,\cdot)\|_2 \leq 1$ and for any  $M\in \M$ there exists $\theta_M^\star\in \B$ such that the optimal action-value function is linear: $\forall (s,a), \Qstar_M(s,a) = \phi(s,a)^\top \theta_M^\star$.
\end{assumption}

A stronger assumption we make is that the action-value function of \emph{every} policy has a linear representation.

\begin{assumption}[$Q^\pi$ is Realizable for every Policy]
\label{lowerbound:asm:ALL:MDP}
Given a BPI problem $(s^\star,\{M\in\M\})$ or an OPE problem $(s^\star,\{(M,\pi_M), M\in\M\})$, there exists a $d$-dimensional feature map $\phi(\cdot,\cdot)$ such that $\|\phi(\cdot,\cdot)\|_2 \leq 1$ and for any MDP $M\in\M$ the value of every policy $\pi$ satisfies $\forall (s,a), Q_M^{\pi}(s,a) = \phi(s,a)^\top \theta^{\pi}_M$ for some $\theta^{\pi}_M \in \B$.
\end{assumption}

The learners that we consider are aware of these assumptions because they can examine each MDP in the class $\M$ they receive (see \cref{sec:step1}).

\subsection{Off-Policy Evaluation} 
 
The first result of this work is contained in the following lower bound for the OPE problem; since all MDPs in $\M$ share the same dynamics, the oracle has full knowledge of the actions it needs to take to visit any state-action it desires.  
\begin{theorem}[OPE Policy-Induced Lower Bound]
\label{lowerbound:thm:OPE:Policy-Induced}
There exists an OPE problem $(s^\star,\{(M,\pi_M), M\in\M\})$  satisfying \cref{lowerbound:asm:OPE:MDP} such that its policy-induced query complexity to $(1,1/2)$-soundness  is at least $
\mathcal N(\gamma,d)$. 
\end{theorem}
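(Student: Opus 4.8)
The plan is to construct an explicit family $\M$ of MDPs that share a common deterministic dynamics, so that by \cref{rem:PolicyInducedIsPolicyFree} the policy-induced query set is also policy-free, and whose difficulty is exactly the ``bootstrapping erases information'' phenomenon described in \cref{main:sec:Intuition}. Concretely, I would place the feature vectors on the unit sphere $\B\subset\R^d$, pair each state with a continuum (or exponentially large net) of actions indexed by directions $w\in\R^d$ on the sphere, and arrange the transition dynamics so that from the start state $s^\star$ the successor feature of $\phi(s^\star,a)$ is (discounted) its projection onto a fixed hyperplane — mimicking \cref{lowerbound:fig:main} (left). The hidden parameter of each MDP will be a direction $w$, and the target policy $\pi_M$ will be chosen so that $Q^{\pi_M}_M(s^\star,\cdot)$ depends on $w$ through exactly the component that bootstrapping projects away; the rewards $r_{w,\pm}$ in the snippet $\rdef$ are the concrete realization, nonzero only on the two antipodal spherical caps $\C_\gamma(w)\cup\C_\gamma(-w)$. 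Realizability (\cref{lowerbound:asm:OPE:MDP}) is enforced by construction: the rewards are designed precisely so that $\phi(s,a)^\top\theta_M$ solves the Bellman evaluation equations with $\|\theta_M\|_2\le 1$.

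Next I would carry out the information-theoretic argument. Fix any learning algorithm with query budget $n$, i.e.\ a set $T$ of policy rollouts with total length $\le n$; since all dynamics coincide, the induced query set $\mu$ is a fixed finite set of state-action pairs, independent of which $M\in\M$ is realized, and the batch algorithm sees the exact $r(s,a),p(s,a)$ there. The key geometric claim is that if $n < \mathcal N(\gamma,d)$, then for any such $\mu$ there exist two admissible directions $w$ and $w'$ (giving MDPs $M_{w,+}$ and $M_{w',+}$, say, or more symmetrically $M_{w,+}$ vs $M_{w,-}$) whose reward functions agree on \emph{every} $(s,a)\in\mu$ — because $\mu$ intersects none of the relevant spherical caps $\C_\gamma(w)$. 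The transition functions are identical across the whole class, so the entire dataset $\mathcal D$ is \emph{identical} under these two instances, hence the batch algorithm's (possibly randomized) output distribution is the same. But the quantities to be estimated, $Q^{\pi}(s^\star,\cdot)$, differ by at least $2$ in sup-norm across the two instances (the $(1-\gamma)a^\top(\pm w)$ rewards get un-discounted to an $\Omega(1)$ gap at $s^\star$), so no single estimator can be within $\epsilon=1$ of both with probability $>1/2$. This contradicts $(1,1/2)$-soundness, giving the lower bound $n\ge\mathcal N(\gamma,d)$.

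The main obstacle — and the technical heart — is the packing/covering estimate: showing that any set $\mu$ of fewer than $\mathcal N(\gamma,d)$ state-action pairs must miss the caps of some admissible direction, and quantifying $\mathcal N(\gamma,d)=I^{-1}_{1-\gamma^2}\!\left(\tfrac{d-1}{2},\tfrac12\right)$ as the right scale. This reduces to: the normalized surface measure of a spherical cap $\C_\gamma(w)$ of the relevant half-angle (determined by $\gamma$, since $\gamma<1$ controls how much of the sphere the discounted successor feature can ``reach'', cf.\ \cref{lowerbound:fig:main} (right)) equals $\tfrac12 I_{1-\gamma^2}(\tfrac{d-1}{2},\tfrac12)$ by the standard cap-area formula, so a single point lies in at most that fraction of caps; a union bound over the $|\mu|\le n$ points shows that if $n$ times this fraction is $<1$ (equivalently $n<\mathcal N(\gamma,d)$) an uncovered direction survives. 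I would then invoke \cref{cor:LowerBound} (the bounds collected in $\lb$) to convert this incomplete-beta quantity into the clean exponential form $\mathcal N(\gamma,d)\gtrsim (1-\gamma)^{-d}$ for $d\ge 5$. Care is needed at two points: ensuring the successor-feature construction is genuinely realizable by a valid MDP with rewards in $[-1,1]$ and returns in $[-1,1]$ (this is what the $(1-\gamma)$ scaling in $\rdef$ buys), and handling the policy-induced (rather than merely policy-free) nature of the queries — but since the dynamics are common to all $M\in\M$, \cref{rem:PolicyInducedIsPolicyFree} collapses this to the policy-free counting argument above.
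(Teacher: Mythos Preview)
Your proposal is correct and follows essentially the same approach as the paper: a class $\M=\{M_{w,\pm}\}$ with common deterministic dynamics (so \cref{rem:PolicyInducedIsPolicyFree} applies), rewards supported only on the antipodal caps $\C_\gamma(w)\cup\C_\gamma(-w)$ as in the macro $\rdef$, realizability $Q^{\pi_w}_{w,\pm}=\phi(\cdot,\cdot)^\top(\pm w)$, and the volume/covering argument that fewer than $\mathcal N(\gamma,d)$ queries must miss some pair of caps, leaving $M_{\wtilde,+}$ and $M_{\wtilde,-}$ indistinguishable while $Q^{\pi_{\wtilde}}(s^\star,\wtilde)=\pm 1$.

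One imprecision worth fixing: you describe the \emph{transition dynamics} as projecting the feature onto a ``fixed hyperplane,'' but the dynamics must be common across all $M\in\M$ (otherwise \cref{rem:PolicyInducedIsPolicyFree} fails) while the projection direction must depend on $w$. The paper resolves this by making the dynamics trivial, $s^+(a)=a$, and letting the \emph{target policy} $\pi_w$ carry the $w$-dependence: $\pi_w(s)=\tfrac{1}{\gamma}(s^\top w)w$ outside the caps. So it is the successor feature \emph{under the target policy}, $\phi(s^+,\pi_w(s^+))$, that performs the projection in \cref{lowerbound:fig:main}. Your proposal already hints at this (``the target policy $\pi_M$ will be chosen so that\dots''), but make sure the construction places the $w$-dependent projection in $\pi_w$, not in $p$.
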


It is useful to compare the \emph{form} of the above lower bound with that of some concurrent results for the off-policy evaluation problem: for example, \citep{wang2020statistical,amortila2020variant}  show that \emph{there exist} a sampling distribution $\mu$ (that induces the best-conditioned covariance matrix), a target policy $\pi$ and an MDP class $\widetilde \M$, each satisfying certain properties, such that the best estimator on the most difficult MDP in $\widetilde \M$ performs poorly if less than exponentially many  samples are used. However, in their case there exists a better batch distribution of polynomial size that solves the problem. 
Our instances are instead much harder, as they command an exponential dataset even for the \emph{best} distribution for the task: since the oracle can prescribe any data distribution, we can claim that \emph{for all} distributions of polynomial size we can find an MDP subclass $\widetilde \M \subseteq \M$ and a target policy such that all MDPs in $\widetilde \M$ generate similar datasets but the value of the target policy is very different on these MDPs in $\widetilde \M$, giving lower bounds in a \emph{stronger form}.
 
\subsection{Best Policy Identification}
As in \cref{lowerbound:thm:OPE:Policy-Induced}, in the next lower bound the oracle knows the set $\mu$ induced by any choice of $T$ (see \cref{lowerbound:asm:PolicyInduced}).
\begin{theorem}[BPI Policy-Induced Lower Bound]
\label{lowerbound:thm:BPI:Policy-Induced}
There exists a BPI problem $(s^\star,\{M\in\M\})$ satisfying \cref{lowerbound:asm:BPI:MDP} with features in dimension $d+1$ such that its policy-induced query complexity to $(1/2,1/2)$-soundness is at least $ 
2^{-d}\mathcal N(\gamma,d)$.
\end{theorem}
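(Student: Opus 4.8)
The plan is to turn the ``bootstrapping erases a direction'' phenomenon of \cref{main:sec:Intuition} into an obstruction for \emph{best policy identification}, rather than for off-policy evaluation as in \cref{lowerbound:thm:OPE:Policy-Induced}. Recall the mechanism: when the oracle's effective feature matrix $\Phi-\gamma\Phi^+$ misses a direction, the local Bellman equations \cref{eqn:FinalLinearSystem} are consistent with an entire line of linear value functions, and that missing direction can be hidden inside an exponentially small spherical cap whose location the oracle cannot anticipate. For \cref{lowerbound:thm:OPE:Policy-Induced} it suffices that the value of one fixed target policy varies along this line; for the BPI problem I need the stronger conclusion that the \emph{optimal action} at $s^\star$ is not pinned down --- indeed that the two candidate optimal behaviours differ in value by at least $1$ --- while \emph{every} MDP consistent with the data still has a linearly realizable $\Qstar$. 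Securing all three requirements at once is exactly what costs the extra feature coordinate (dimension $d+1$ rather than $d$) and the $2^{-d}$ factor.

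Concretely, I would reuse the ball/cap geometry of \cref{main:sec:Intuition}, but confine the features --- hence the available actions --- to the non-negative orthant of $\B$, which is what lets one control the $\sup_{a'}$ inside the Bellman optimality operator $\T$ and keep $\Qstar$ affine. For a direction $w$ in the non-negative orthant of $S^{d-1}$ I would make one small cap $\C_\gamma(w)$ of angular radius $\arccos\gamma$ carry a reward that encodes a sign $\pm$ (in the spirit of the reward used for \cref{lowerbound:thm:OPE:Policy-Induced}), with reward $0$ elsewhere, so that outside this cap all the MDPs in the class are identical. Around $s^\star$ I would attach a deterministic gadget --- a short chain, or a near self-loop --- whose effect is that the sign carried by the cap, amplified through bootstrapping, decides whether the optimal policy from $s^\star$ should drive the process into $\C_\gamma(w)$ or away from it, with an $\Omega(1)$ gap in value between the two choices; this ``deferring of critical information through bootstrapping'' is precisely the ingredient that cannot occur in finite horizon. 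The $(d+1)$-st feature coordinate is a bias/level feature absorbing the affine offset needed to keep $\Qstar$ linear along the gadget. Since the hidden cap must sit in the non-negative orthant of $S^{d-1}$, a packing of that orthant by caps of angular radius $\arccos\gamma$ has size $|\mathcal W|\ge 2^{-d}\mathcal N(\gamma,d)$ --- that orthant carries a $2^{-d}$ fraction of the sphere --- and \cref{cor:LowerBound} certifies this is exponential in $d$.

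The lower bound is then the standard indistinguishability argument. All MDPs in the class $\M=\{M_{w,\pm}:w\in\mathcal W\}$ share their dynamics and differ only through the hidden-cap reward, so a policy-induced query set is policy-free (\cref{rem:PolicyInducedIsPolicyFree}), and a set induced by $T$ with $\sum_i c_i\le n$ contains at most $n$ distinct state-action pairs; each such pair, having its action in the non-negative orthant, lies in $\C_\gamma(w)$ for at most one $w\in\mathcal W$. Hence if $n<2^{-d}\mathcal N(\gamma,d)\le|\mathcal W|$, some $w^\star\in\mathcal W$ has its cap entirely unqueried, so $M_{w^\star,+}$ and $M_{w^\star,-}$ produce identical datasets (the exact reward and transition functions agree at every queried state-action); the batch algorithm therefore returns the same distribution over policies on the two instances, yet the sets of $1/2$-optimal policies at $s^\star$ for $M_{w^\star,+}$ and $M_{w^\star,-}$ are disjoint, so on at least one of them the returned policy is $1/2$-suboptimal with probability at least $1/2$, contradicting $(1/2,1/2)$-soundness. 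Therefore $n\ge 2^{-d}\mathcal N(\gamma,d)$.

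I expect the main obstacle to be the construction of the gadget, i.e., arranging \emph{simultaneously} that (i) $\Qstar_{M_{w,\pm}}$ is linear in the \emph{same} $(d+1)$-dimensional feature map with parameter in $\B$ for every $w$ and every sign --- a global constraint coupling the cap state, the gadget, and the surrounding states, and delicate because the hidden reward vanishes for most actions yet $\Qstar$ must still extend to an affine function of the action there --- and (ii) flipping the sign carried by the cap genuinely flips the $\argmax$ at $s^\star$ with an $\Omega(1)$ gap while all discounted returns remain in $[-1,1]$. Once such a gadget is available, the counting and indistinguishability steps above are essentially those already used to prove \cref{lowerbound:thm:OPE:Policy-Induced}.
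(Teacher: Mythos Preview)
Your high-level plan matches the paper's: restrict actions to the positive orthant $\B^+$, hide the discriminating reward inside a single cap $\C_\gamma(w)$, spend one extra feature coordinate, and run the covering/indistinguishability argument in $\B^+$ to pick up the $2^{-d}$ factor. The place where your plan diverges from the paper, and where I think it actually breaks, is the assertion that ``all MDPs in the class share their dynamics'' so that \cref{rem:PolicyInducedIsPolicyFree} applies directly. In the paper's construction the \emph{transition law itself depends on $w$}: from any non-terminal state, action $a\in\B^+$ reaches an intermediate state $\overline s$ with probability $\min\{a^\top w/\gamma,1\}$ and a terminal state $s^\dagger$ otherwise. This is precisely what makes $\Qstar$ linear. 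For $a$ outside the cap the reward is zero, yet the bootstrapped value $\gamma\cdot(a^\top w/\gamma)\cdot\Vstar(\overline s)=a^\top w$ is already linear in $a$; inside the cap the reward $a^\top w-\gamma$ plus $\gamma\Vstar(\overline s)$ gives the same expression. If instead you keep the dynamics shared and deterministic as you propose, the natural extension of the \cref{lowerbound:thm:OPE:Policy-Induced} construction fails for $\Qstar$: the optimal policy can jump into the cap from anywhere, so $\Vstar$ is constant across non-terminal states and $\Qstar(s,a)=r(s,a)+\gamma\Vstar$ is piecewise (constant outside the cap, affine inside), not globally linear in a fixed feature map. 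The ``hard part'' you flag is therefore not just building a gadget at $s^\star$; it is getting $\Qstar$ to be linear in $a$ at actions where the reward is identically zero, and with shared dynamics there is no place left for the $w$-dependence to enter. The paper then recovers the ``policy-induced $\approx$ policy-free'' conclusion not through \cref{rem:PolicyInducedIsPolicyFree} but by making the state space tiny, $\{s^\star,\overline s,s^\dagger\}$, so that whatever $w$ is, any deterministic policy can touch at most the three pairs $(s^\star,\pi(s^\star)),(\overline s,\pi(\overline s)),(s^\dagger,\vec 0)$, and the induced query set is the same across all $M_w$.

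Two smaller corrections. First, the $(d+1)$-st coordinate in the paper is not a bias/level feature; it is the indicator of a distinguished \emph{reference action} $a^\star$, available only in $s^\star$, with feature $[\vec 0,1]$, reward $\tfrac12$, and an immediate transition to $s^\dagger$. Second, the two MDPs $M_{w,+},M_{w,-}$ do not differ by a sign flip of the cap reward but by presence versus absence: on $M_{w,+}$ playing $w\in\B^+$ from $s^\star$ yields return $1$ and $a^\star$ yields $\tfrac12$; on $M_{w,-}$ every $a\in\B^+$ yields return $0$ and $a^\star$ yields $\tfrac12$. This is what makes the $\tfrac12$-optimal sets at $s^\star$ disjoint. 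Your packing-plus-pigeonhole counting would also work once the construction is fixed, but the paper uses the continuum $w\in\partial\B^+$ together with the volume argument of \cref{lowerbound:lem:LonelyPlus} rather than a finite packing $\mathcal W$.
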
 
Hard BPI problems for adaptive and online algorithms are given by \citet{weisz2020exponential}. Clearly, their construction can be embedded in an infinite horizon MDP, giving a lower bound under their assumptions. However, our framework allows the learner to observe the \emph{exact} reward and transition function, which is equivalent to having infinite data at the selected state-actions. In such case, the construction of \citet{weisz2020exponential} no longer gives rise to hard instances. In particular, the BPI problem with our assumptions becomes straightforward in finite horizon, showing an exponential separation between finite and infinite horizon batch RL.

\subsection{Lower Bounds with Stronger Representation}

We wonder what is achievable if the oracle can specify the queries anywhere in the state-action space without the restriction imposed by following policies (i.e., using a policy-free query set). Under this assumption the situation gets surprisingly worse, as the lower bounds now hold even if the action-value function of \emph{every} policy is linear. 

\begin{theorem}[Policy-Free Lower Bounds]
\label{lowerbound:thm:ALL:Policy-Free}
There exist an OPE problem $(s^\star,\{(M,\pi_M), M\in\M\})$ and a BPI problem $(s^\star,\{M\in\M\})$, which satisfy \cref{lowerbound:asm:ALL:MDP} and share the same $s^\star$ and $\M$, such that their policy-free query complexity to $(1,1/2)$-soundness is at least $
\mathcal N(\gamma,d)$. In addition, an MDP class $\M$ that yields the lower bounds (but with $\frac{1}{\sqrt{d}},1/2)$-soundness) can be constucted with at most $|\ActionSpace| = 2d$ actions. 
\end{theorem}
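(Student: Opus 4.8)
The core object is a class $\M$ of MDPs indexed by a unit vector $w\in\R^d$ together with a sign; write the two instances attached to $w$ as $M_{w,+}$ and $M_{w,-}$. Every MDP in $\M$ shares the starting state $s^\star$, the state and action spaces (the relevant action set being the ball $\B$, or the signed basis vectors $\pm e_1,\dots,\pm e_d$ in the last part of the statement), the discount factor, and -- crucially -- the \emph{transition functions}; $M_{w,+}$ and $M_{w,-}$ agree everywhere except on the rewards inside the two spherical caps $\C_\gamma(w)$ and $\C_\gamma(-w)$, where $r_{w,\pm}(\cdot,a)=(1-\gamma)a^\top(\pm w)$, while $r\equiv 0$ outside. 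Two properties have to be engineered into the class, and this is where the real work lies. \emph{(a) Full realizability} (\cref{lowerbound:asm:ALL:MDP}): every policy of every $M_{w,\pm}$ must have an action-value function linear in one fixed map $\phi$ with $\|\phi\|_2\le 1$ and $\theta^\pi\in\B$. \emph{(b) Deferred information.} Using the bootstrapping mechanism of \cref{main:sec:Intuition}, one routes $(s^\star,a)$ through a short gadget/chain that (i) keeps $Q^\pi(s^\star,\cdot)$ linear in $a$ for every $\pi$, (ii) \emph{amplifies} the $(1-\gamma)$-scale cap reward to an $\Omega(1)$-scale value, so that for the fixed target policy $\pi$ -- which is taken the same across all of $\M$ -- $Q^\pi_{M_{w,+}}(s^\star,w)=+1$ and $Q^\pi_{M_{w,-}}(s^\star,w)=-1$, and (iii) makes every state-action outside $\C_\gamma(w)\cup\C_\gamma(-w)$ carry no information about $w$: both its reward and its transition are identical in $M_{w,+}$ and $M_{w,-}$. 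This is obtained by adapting the constructions behind \cref{lowerbound:thm:OPE:Policy-Induced,lowerbound:thm:BPI:Policy-Induced}, strengthened so that realizability holds for \emph{all} policies and so that the hard region is a set of feature vectors -- hittable only by a query whose action lies inside the caps -- rather than a region reachable only by a specific rollout.

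Given such a class, the rest is a covering argument. The normalized surface measure of $\C_\gamma(w)$ equals $\tfrac12 I_{1-\gamma^2}(\tfrac{d-1}{2},\tfrac12)=\tfrac{1}{2\mathcal N(\gamma,d)}$, hence for any fixed action $a$ and $w$ uniform on the sphere, $\Pr_{w}[a\in\C_\gamma(w)]\le \tfrac{1}{2\mathcal N(\gamma,d)}$. Fix an arbitrary policy-free query set $\mu$ with $|\mu|<\mathcal N(\gamma,d)$ and draw $w$ uniformly; by a union bound, Markov's inequality, and the $w\mapsto -w$ symmetry of the uniform law,
\[
\Pr_{w}\left[\,\mu\cap(\C_\gamma(w)\cup\C_\gamma(-w))\neq\emptyset\,\right]\ \le\ 2\sum_{a\in\mu}\Pr_{w}\left[\,a\in\C_\gamma(w)\,\right]\ \le\ \frac{|\mu|}{\mathcal N(\gamma,d)}\ <\ 1 ,
\]
so some direction $w^\star$ has both of its caps disjoint from $\mu$. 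For that $w^\star$, every $(s,a)\in\mu$ has identical reward and identical transition under $M_{w^\star,+}$ and $M_{w^\star,-}$; the dataset $\mathcal D$ is therefore distributed identically under the two instances, and any (possibly randomized) batch algorithm produces the same output distribution on both.

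Now convert indistinguishability into a soundness lower bound. For the OPE instance, property (ii) yields $Q^{\pi}_{M_{w^\star,+}}(s^\star,w^\star)-Q^{\pi}_{M_{w^\star,-}}(s^\star,w^\star)=2$, so the two events $\{\sup_a|(Q^{\pi}-\Qhat_{\mathcal D})(s^\star,\cdot)|<1\}$ are disjoint and cannot both have probability exceeding $1/2$ under the common law of $\Qhat_{\mathcal D}$; hence no $(1,1/2)$-sound learning algorithm exists. For the BPI instance, $V^\star(s^\star)=1$ on both, attained at action $+w^\star$ on $M_{w^\star,+}$ and at $-w^\star$ on $M_{w^\star,-}$, so a returned policy $\widehat\pi$ with $\widehat\pi(s^\star)=a$ has suboptimality $1-a^\top w^\star$ at $s^\star$ on the first instance and $1+a^\top w^\star$ on the second; the sets of $1$-optimal policies for the two instances are disjoint, so the same argument rules out $(1,1/2)$-soundness. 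As $\mu$ was an arbitrary policy-free set of size below $\mathcal N(\gamma,d)$ and both instances share $s^\star$ and $\M$, the policy-free query complexity of each problem to $(1,1/2)$-soundness is at least $\mathcal N(\gamma,d)$. For the $2d$-action version one replaces the continuum of actions at $s^\star$ by $\pm e_1,\dots,\pm e_d$ and lets the cap structure be carried by a richer state space; the direction $w$ is then spread over $d$ coordinates, so the gap one can force at a single available action drops to $2/\sqrt d$ and the guarantee weakens to $(1/\sqrt d,1/2)$-soundness, while the cap-covering obstruction -- and hence the bound $\mathcal N(\gamma,d)$ -- is unchanged.

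\textbf{Main obstacle.} Everything after the construction is routine. The delicate part is the construction: satisfying \cref{lowerbound:asm:ALL:MDP} -- linearity of $Q^\pi$ for \emph{every} policy, which is strictly stronger than what \cref{lowerbound:thm:OPE:Policy-Induced,lowerbound:thm:BPI:Policy-Induced} require -- while at the same time hiding $w$ from every state-action outside the exponentially small caps and still amplifying a $(1-\gamma)$-scale reward up to an $\Omega(1)$ value, all under $\|\phi\|_2\le 1$ and $\theta^\pi\in\B$. In particular one must ensure that the successor states used for amplification do not themselves betray $w$ when queried directly under the policy-free mechanism; designing a bootstrapping gadget that meets all of these constraints simultaneously is the crux.
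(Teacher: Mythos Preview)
Your reduction skeleton (index $\M$ by $(w,\pm)$, find a direction $w^\star$ whose two caps miss $\mu$, then argue that $M_{w^\star,+}$ and $M_{w^\star,-}$ produce identical datasets) is exactly the paper's, and your probabilistic covering argument is equivalent to the paper's volume computation (\cref{lowerbound:lem:Existence}). The OPE and BPI conclusions from indistinguishability are also correct.

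The gap is in the construction, and it stems from a misconception you state explicitly: you require that all MDPs in $\M$ share the \emph{same} transition function and that queries outside the caps ``carry no information about $w$''; you then flag as the main obstacle that successor states must not ``betray $w$''. This is backwards, and it is precisely why you cannot close the construction. In the paper's class the transitions \emph{do} depend on $w$: from any action $a$ outside the caps the deterministic successor is the satellite state $\tfrac{1}{\gamma}(a^\top w)\,w$, which blatantly reveals $w$ to the batch algorithm. That leak is harmless because the only thing that must remain hidden is the \emph{sign}: $M_{w,+}$ and $M_{w,-}$ share the same $w$ and hence the same transition map, so the dataset on $\mu$ is identical for the pair even though it pins down $w$.

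What makes \cref{lowerbound:asm:ALL:MDP} hold is a second device you are missing entirely: every satellite state $s\in\B$ is given a \emph{single} available action, namely $s$ itself. Thus once the agent leaves $s^\star$ every policy is forced to the same behaviour, so all policies have the \emph{same} action-value function on each $M_{w,\pm}$, and the verification $Q^\pi(s,a)=a^\top(\pm w)$ becomes a two-line Bellman check using the projection-onto-$w$ transition. The amplification you worry about is automatic: inside the cap the state self-loops, so the $(1-\gamma)a^\top(\pm w)$ reward is summed geometrically to $a^\top(\pm w)$. Trying instead to keep transitions $w$-independent while still forcing $Q^\pi$ linear for \emph{every} policy (not just a target policy, as in \cref{lowerbound:thm:OPE:Policy-Induced}) is the wrong constraint set and would leave the construction open; the paper's route is to let the transition perform the projection --- and thereby leak $w$ --- and to hide only the sign.
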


Contrasting \cref{lowerbound:thm:ALL:Policy-Free} with \cref{lowerbound:thm:OPE:Policy-Induced,lowerbound:thm:BPI:Policy-Induced} shows that there is a sharp distinction in what can be achieved depending on the \emph{assumptions on the mechanism} that generates the batch dataset, a distinction which is absent in tabular RL.

\section{Exponential Separation with Online Learning}
\begin{theorem}[Exponential Separation with Online Learning]
\label{lowerbound:thm:BPI:ExponentialSeparation}
Consider the same BPI problem as in \cref{lowerbound:thm:BPI:Policy-Induced}. There exists an  online algorithm that can identify an optimal policy with probability one by observing trajectories of length one with exact feedback from $d+1$ distinct policies from $s^\star$.
\end{theorem}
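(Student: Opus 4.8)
The plan is to exhibit an explicit online algorithm that reconstructs the parameter $\theta_M^\star$ of $\Qstar_M$ \emph{exactly} and then returns the policy greedy with respect to $\phi(\cdot,\cdot)^\top\theta_M^\star$, which is optimal in every state of a discounted MDP. Recall the structure of the hard BPI family underlying \cref{lowerbound:thm:BPI:Policy-Induced}: each instance is indexed by a hidden direction $w$ together with a sign ($+$ or $-$), the reward at $s^\star$ is supported on the two $\gamma$-caps $\C_\gamma(w)\cup\C_\gamma(-w)$, and outside these caps the construction picks the successor features so that each ``effective feature'' $\phi(s^\star,a)-\gamma\phi^+(a)$ lands in $w^\perp$ — this is precisely what prevents a non-adaptively chosen (hence possibly cap-avoiding) query set from pinning down the missing direction, and hence $\theta_M^\star$, through the local Bellman system \cref{eqn:FinalLinearSystem}. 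A trajectory of length one from $s^\star$ under a deterministic policy $\pi$ yields $(s^\star,\pi(s^\star))$ together with the exact $r(s^\star,\pi(s^\star))$ and $p(\cdot\mid s^\star,\pi(s^\star))$; since the construction places all the nontrivial structure immediately downstream of $s^\star$, the observed transition determines the successor feature $\phi^+(\pi(s^\star))$, so each such rollout delivers a complete row $(\phi(s^\star,a)-\gamma\phi^+(a),\, r(s^\star,a))$ of \cref{eqn:FinalLinearSystem}.

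First I would have the algorithm probe $d$ essentially arbitrary actions $a_1,\dots,a_d$ at $s^\star$, collecting the rows $v_i=\phi(s^\star,a_i)-\gamma\phi^+(a_i)$. Generic actions miss the exponentially small caps, so each $v_i\in w^\perp$, and for a generic choice of the $a_i$ the rows $v_1,\dots,v_d$ span the whole hyperplane $w^\perp$; the algorithm then reads off $w$ as the (up to sign) unit normal of $\mathrm{span}\{v_1,\dots,v_d\}$ — this is the step \emph{unavailable to a batch oracle}, which must commit to its queries before $w$ is revealed. It then performs one further length-one rollout with an action $a_{d+1}\in\C_\gamma(w)$. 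By the $\gamma$-cap geometry described in \cref{main:sec:Intuition} (right panel of \cref{lowerbound:fig:main}), $\langle a_{d+1},w\rangle$ is too large in magnitude for $\gamma\langle\phi^+(a_{d+1}),w\rangle$ to cancel it, so $\langle v_{d+1},w\rangle\neq 0$ and the collected matrix $\Phi-\gamma\Phi^+$ attains full rank $d+1$; moreover the reward observed at $a_{d+1}$ carries the sign of $\langle a_{d+1},w\rangle$ on the $+$ instance and the opposite sign on the $-$ instance sharing direction $w$, so the same probe resolves the sign ambiguity. With a full-rank $(d+1)\times(d+1)$ system, $(\Phi-\gamma\Phi^+)\theta=r$ has a unique solution, which must be $\theta_M^\star$ because $\Qstar_M=\phi(\cdot,\cdot)^\top\theta_M^\star$ solves it. Having identified $\theta_M^\star$ itself (not merely $\Vstar_M(s^\star)$), the algorithm returns the globally greedy policy, which is optimal; since the feedback is exact and the procedure deterministic, the whole process succeeds with probability one, using exactly $d+1$ distinct policies and trajectories of length one.

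The main obstacle is making the rank-recovery argument rigorous against the precise construction of \cref{lowerbound:thm:BPI:Policy-Induced}: one must check that the adversarial successor features used to defeat batch algorithms are exactly the ones whose common orthogonal complement exposes $w$, that a generic cap-avoiding choice of $a_1,\dots,a_d$ indeed yields rows spanning the \emph{full} hyperplane $w^\perp$ (rather than a proper subspace, which would force extra probes), and that a single probe inside $\C_\gamma(w)$ provably supplies the remaining dimension within the stated budget of $d+1$ rollouts. A secondary point to verify is the claim used above that a length-one rollout reveals $\phi^+(a)$, i.e.\ that the construction localizes all nontrivial behavior at $s^\star$ so that the successor features are never themselves unknown quantities — this is exactly what distinguishes the bootstrapping-driven hardness here from a genuinely deep MDP and is what the online algorithm ultimately exploits.
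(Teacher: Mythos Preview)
Your proposal is built on the wrong construction. The BPI family behind \cref{lowerbound:thm:BPI:Policy-Induced} is \emph{not} the deterministic OPE-style instance from \cref{main:sec:Intuition} where successor features project onto $w^\perp$. In the actual family (see \cref{main:eqn:p,main:eqn:r}) the action space at $s^\star$ is $\B^+\cup\{a^\star\}$, the reward is supported on the \emph{single} cap $\C_\gamma(w)$ (not two), and the transitions are \emph{stochastic}: for $a\in\B^+$ outside the cap one has $p_w(\overline s\mid s^\star,a)=\tfrac{1}{\gamma}a^\top w$. There is therefore no deterministic successor feature $\phi^+(a)$ to read off from a length-one rollout, and for the Bellman \emph{optimality} operator the quantity you call $\phi^+$ would anyway involve an $\argmax$ over $\B^+$ at $\overline s$ that depends on the unknown $\theta_M^\star$. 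Your central step --- forming $v_i=\phi(s^\star,a_i)-\gamma\phi^+(a_i)$, asserting $v_i\in w^\perp$, and recovering $w$ as the normal to $\mathrm{span}\{v_i\}$ --- simply does not make sense for this family. Finally, ``generic $a_i$'' is not a deterministic algorithm; even in the construction you have in mind you would still owe an explicit choice and a proof that it spans the full hyperplane.

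The paper's argument bypasses the effective-feature machinery entirely and exploits what the transitions actually encode. Play the $d$ actions $\gamma e_1,\dots,\gamma e_d$ at $s^\star$; each lies in $\B^+$ and outside the cap, so exact feedback returns $p_w(\overline s\mid s^\star,\gamma e_i)=\min\{e_i^\top w,1\}=e_i^\top w$, i.e.\ the $i$-th coordinate of $w$. This pins down $w$ exactly after $d$ rollouts. One further rollout at $(s^\star,w)\in\C_\gamma(w)$ returns reward $1-\gamma$ on $M_{w,+}$ and $0$ on $M_{w,-}$, fixing the sign and hence the MDP, from which the optimal policy is read off. That is exactly $d+1$ length-one rollouts, deterministically and with probability one.
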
 
 This result shows \emph{exponential separation with online learning} even when the best batch distribution is used. The key information is hidden in an exponentially small area of the feature space whose position is a priori unknown. This region is too small to be covered by a batch dataset. However, an online algorithm can learn where this information is hidden and then probe such region as we show in \cref{main:sec:Proof}.
 
A similar exponential separation with online learning is available for the BPI problem in \cref{lowerbound:thm:ALL:Policy-Free} (see the end of the proof in the appendix). In addition, assuming access to a generative model an even stronger result is readily available in the literature using the Least-Square Policy Iteration (LSPI) algorithm \citep{lagoudakis2003least}: with a generative model \cite{lattimore2020learning} show that with probability at least $1-\delta$, LSPI
finds an $\epsilon$ optimal policy \emph{for any BPI problem} $(s^\star,\{M \in \M\})$ that satisfies \cref{lowerbound:asm:ALL:MDP} using at most $\poly(d,\frac{1}{1-\gamma},\frac{1}{\epsilon},\ln \frac{1}{\delta})$ samples. However, LSPI is non-batch as it relies on Monte-Carlo rollouts at every iteration. Our result thus shows that it is not possible to start from a batch dataset obtained from a generative model and run LSPI (or any algorithm) successfully without acquiring further data even if a  strong representation holds. 

\section{Proof Sketch of Theorem \ref{lowerbound:thm:BPI:Policy-Induced} and Theorem \ref{lowerbound:thm:BPI:ExponentialSeparation}}
\label{main:sec:Proof}
We first describe the state and action space which are fixed across all MDPs in the class $\M$. Then we describe the instance-dependent reward and transition functions. Finally we prove the theorems.

At a high level, each MDP contains a two-armed bandit instance in $s^\star$ (the starting state). There, the learner has two choices: 1) take the special action $a^\star$ that gives a known return 
or 2) take any other action in the positive orthant $\mathcal B^+ = \{x\in\B  \mid x_i \geq 0, i \in [d]\}$, see \cref{lowerbound:fig:Quarter}. 

Crucially, on $\B^+$ the reward function is almost everywhere zero except inside the exponentially small spherical cap $\mathcal C_\gamma(w) = \{ \| x\|_2 \leq 1 \mid \frac{x^\top w}{\|w\|_2} \geq \gamma \}$ (\cref{lowerbound:fig:Quarter}) for some $w\in\B$. Unless the oracle prescribes an action inside $\mathcal C_\gamma(w)$, \emph{the batch algorithm only observes a zero  reward function} and is unable to distinguish different MDPs using this information. 

\subsection{Setup: State-Action Space and Feature Extractor} 

\paragraph{State space}
The state space $\StateSpace = \{s^\star,\overline s,s^\dagger\}$ consists of a start state $s^\star$, an intermediate state $\overline s$ and a terminal state $s^\dagger$.

\paragraph{Action space} In the starting state $s^\star$ the special action $a^\star$ is available in addition to any action  $a \in \B^+$. In the intermediate state $\overline s$ any action in $ \B^+$ is available but not $a^\star$.
Finally, in the terminal state $s^\dagger$ only $\vec 0 \in \B^+$ is available. Mathematically 
$
\ActionSpace_{s^\star} = \B^+ \cup \{ a^\star \}; \; \ActionSpace_{\overline s} =  \B^+; \; \ActionSpace_{s^\dagger} = \{\vec 0\}.$ 
\paragraph{Feature map}
The feature map only depends on the action:
\begin{align*}
\forall s \in \StateSpace: \;  \phi(s,a) = 
\begin{cases}
[\vec 0,1] & \text{if}\; a = a^\star \quad \text{(only available in $s^\star$).} \\
[a,0] & \text{if}\; a\in \B^+.
\end{cases}
\end{align*}
\subsection{Setup: MDP-specific Rewards and Transitions}
\begin{figure}[b]
\centering  
\hspace{-1.0cm}
\subfigure{\begin{tikzpicture}
\def\r{2.0};
\filldraw[color=gray!50, fill=gray!10] (0,0) -- (\r,0) arc (0:90:\r) -- (0,0);
\filldraw[darkgray] (0,0) circle (1pt) node[anchor=east] {\scriptsize $\vec 0$};
\draw[gray, fill=gray!50] ({\r*cos(30)},{sin(30)*\r}) arc(30:60:{\r}) -- cycle;
\draw[darkgray, dashed, ->] (0,0) -- node[pos = 0.95, below] {\scriptsize $w$} ({\r*1.2/sqrt(2)},{\r*1.2/sqrt(2)});
\node[darkgray] at ({0.2*\r},{0.7*\r}) {\scriptsize $\mathcal B^+$};
\draw[darkgray, ->] ({\r*0.7},{\r*1.1}) to[out=210,in=100] node [pos = -0.25] {\scriptsize $\mathcal C_{\gamma}(w)$} ({\r*0.65},{\r*0.75});
\end{tikzpicture}}
\begin{minipage}[b]{0.64\linewidth}
\caption{Action space $\B^+$. On $M_{w,-}$ the reward function is zero for any action $a \in \B^+$; on $M_{w,+}$ it is nonzero only inside $\mathcal C_\gamma(w)$.} 
\label{lowerbound:fig:Quarter}
\end{minipage}
\end{figure}
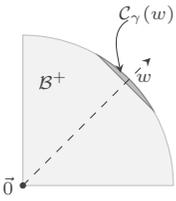
Every MDP $M \in \M$ is identified by a vector $w$ in the outer portion of the positive orthant $\partial\B^+ = \{x\in \B^+ \mid \|x \|_2=1 \}$  and by a $\pm$ sign, and is denoted with $M_{w,+}$ or $M_{w,-}$. 
\paragraph{Transition function}
The transition function $p_w$ depends on the vector $w$ that identifies each MDP in the class, \emph{but not on the sign $+$ or $-$}. Fix the MDP by fixing $w \in \partial \B^+$ (two MDPs correspond to a given choice of $w$). If the agent plays the special action $a^\star$, which is only available in the starting state $s^\star$, it transitions with probability one to the terminal state $s^\dagger$. If the agent plays $a \neq a^\star$, the transition function only depends on the action $a$ (and not on the current state $s$) and the successor state is $\overline s$  with some probability, and is otherwise the absorbing state $s^\dagger$.  

Mathematically, if $a = a^\star$ then $p_w(s^\dagger \mid (s^\star,a^\star)) = 
1$ and if conversely $a \in \B^+$:
\begin{align*}
\begin{cases}
	p_w(\overline s \mid s,a) & = 
\min\{ (1/\gamma) a^\top w, 1 \}, \\ p_w( s^\dagger \mid s,a) & = 1 - p_w( \overline s \mid s,a).
\end{cases}
\numberthis{\label{main:eqn:p}}
\end{align*} 
The definition implies that the successor state is always either $\overline s$ or the terminal state $s^\dagger$.
\paragraph{Reward function}
The reward function $r_{w,\pm}$ depends on both the vector $w \in \partial\B^+$ and on the sign $+$ or $-$ that identifyies the MDP. It is always $\frac{1}{2}$ if the special action $a^\star$ is taken and otherwise it is everywhere $0$ on $M_{w,-}$ or is positive only in the spherical cap on $M_{w,+}$. Mathematically:
\begin{align*}
\text{on} \; M_{w,+}&: \; r_{w,+}(s,a) \defeq \begin{cases} 
\frac{1}{2}\; \; \; \text{if} \; (s,a) = (s^\star,a^\star)\\
\max\{a^\top w - \gamma, 0 \} \; \text{otherwise,} \;
\end{cases} \\
\text{on} \; M_{w,-}&: \; r_{w,-}(s,a) \defeq \begin{cases} 
\frac{1}{2} & \text{if} \; (s,a) = (s^\star,a^\star)\\
0 & \text{otherwise.} \;
\end{cases}
\numberthis{\label{main:eqn:r}}
\end{align*}

\subsection{Proof Sketch of Theorem \ref{lowerbound:thm:BPI:Policy-Induced} (Batch Lower Bound)}
The steps for the proof are the following: we show that 1) $\Qstar$ is linear 2) policy-induced queries are also policy-free for this problem 3) using less than exponentially many queries ensures that at least one spherical cap $\mathcal C_\gamma(\wtilde)$ is not probed 4) the corresponding MDPs $M_{\wtilde,+}$ and $M_{\wtilde,-}$ look the same outside the spherical cap $\mathcal C_\gamma(\wtilde)$ 5) the agent does not have enough information to distinguish $M_{\wtilde,+}$ from $M_{\wtilde,-}$.
\paragraph{Realizability}
By inspection we can verify realizability.
\begin{lemma}[$Q^\star$ is Realizable; \cref{lowerbound:lem:BPI:Policy:Realizability} in appendix]
\label{main:lem:BPI:Policy:Realizability} For any $w\in\partial\B^+$ let $\Qstar_{w,+}$ and $\Qstar_{w,-}$ be the optimal $\Qstar$ values on $M_{w,+}$ and $M_{w,-}$, respectively. It holds that
\begin{align*}
\forall (s,a), \quad \begin{cases}
\Qstar_{w,+}(s,a) = \phi(s,a)^\top [w,\frac{1}{2}] \quad \text{on} \; M_{w,+},\\
\Qstar_{w,-}(s,a) = \phi(s,a)^\top [\vec{0},\frac{1}{2}] \quad \text{on} \; M_{w,-}.
\end{cases}
\end{align*}
\end{lemma}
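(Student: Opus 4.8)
\emph{Strategy.} The plan is to avoid reasoning about optimal policies or running value iteration, and instead invoke the standard fact that, since rewards are bounded (\cref{sec:Defs}), $\T$ is a $\gamma$-contraction in the supremum norm and hence has a unique fixed point, which is exactly $\Qstar$ \citep{puterman1994markov}. It therefore suffices to write down the two candidate functions named in the statement and check that each satisfies the Bellman optimality equation $\T Q = Q$ at every state-action pair; uniqueness then forces these candidates to coincide with $\Qstar_{w,+}$ and $\Qstar_{w,-}$.

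\emph{Candidates and their greedy values.} Spelling out the feature map, the candidate on $M_{w,+}$ is $Q(s^\star,a^\star)=\tfrac12$ and $Q(s,a)=a^\top w$ for every $a\in\B^+$ (and every state $s$, since $\phi$ depends only on $a$), and on $M_{w,-}$ it is $Q(s^\star,a^\star)=\tfrac12$ and $Q(s,a)=0$ for $a\in\B^+$. From these I would read off the greedy values $V(s)=\sup_{a\in\ActionSpace_s}Q(s,a)$. The only nonroutine point is that $\sup_{a\in\B^+}a^\top w = 1$ on $M_{w,+}$: this holds because $w$ itself lies in $\partial\B^+\subseteq\B^+$ and attains $w^\top w=\|w\|_2^2=1$, while Cauchy--Schwarz gives the matching upper bound (the supremum is attained as $\B^+$ is compact and the objective continuous). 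Hence on $M_{w,+}$ one gets $V(\overline s)=1$ and $V(s^\star)=\max\{\tfrac12,1\}=1$, whereas $V(s^\dagger)=0$ because only $\vec 0$ is available there; on $M_{w,-}$ one gets $V(s^\star)=\tfrac12$ and $V(\overline s)=V(s^\dagger)=0$.

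\emph{Verifying the fixed-point equation.} This reduces to a short case split. For the pair $(s^\star,a^\star)$ the transition is deterministic to $s^\dagger$, so $(\T Q)(s^\star,a^\star)=\tfrac12+\gamma V(s^\dagger)=\tfrac12$ on both MDPs, matching the candidate. For a pair $(s,a)$ with $a\in\B^+$, the successor is $\overline s$ with probability $\min\{(1/\gamma)a^\top w,1\}$ and the absorbing $s^\dagger$ otherwise (\cref{main:eqn:p}); on $M_{w,-}$ both successors carry value $0$, so $(\T Q)(s,a)=0$, as claimed. On $M_{w,+}$, $(\T Q)(s,a)=\max\{a^\top w-\gamma,0\}+\gamma\min\{(1/\gamma)a^\top w,1\}$ by \cref{main:eqn:r}; splitting on whether $a^\top w\ge\gamma$, in the first case this is $(a^\top w-\gamma)+\gamma=a^\top w$, and in the second case it is $0+\gamma\cdot(1/\gamma)a^\top w=a^\top w$. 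Either way it equals $a^\top w=\phi(s,a)^\top[w,\tfrac12]$, since the last coordinate of $\phi(s,a)$ vanishes for $a\in\B^+$. The terminal pair $(s^\dagger,\vec 0)$ is just the $a=\vec 0$ instance of this computation.

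\emph{Main obstacle and conclusion.} There is no genuine difficulty here; this is a verification. The one place that needs care is bookkeeping the two truncations in \cref{main:eqn:p,main:eqn:r}: the point of the construction is that both are triggered at the same threshold $a^\top w=\gamma$ --- i.e. at the boundary of the cap $\mathcal C_\gamma(w)$ --- so that the reward term and the bootstrapped-value term telescope into the clean linear function $a^\top w$. Once $\T Q=Q$ is checked, uniqueness of the fixed point of $\T$ identifies the candidates with $\Qstar_{w,+}$ and $\Qstar_{w,-}$, which is the statement of the lemma; in passing, the common feature map $\phi$ witnesses \cref{lowerbound:asm:BPI:MDP} (the parameter vector has norm at most $\sqrt{5}/2$, which is brought within $\B$ by the reward-rescaling convention of \cref{sec:Defs}).
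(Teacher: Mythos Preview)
Your proposal is correct and matches the paper's proof in substance: both verify the Bellman optimality equation by the same case split on whether $a^\top w\ge\gamma$ (equivalently, $a\in\C_\gamma(w)$), obtaining the identical greedy values $V(\overline s)=1,\,V(s^\dagger)=0$ on $M_{w,+}$ and $V(\overline s)=V(s^\dagger)=0$ on $M_{w,-}$. The only cosmetic difference is framing: the paper first names the optimal policy (play $a=w$ on $M_{w,+}$, play $a^\star$ on $M_{w,-}$) and then applies one Bellman backup to $\Vstar$, whereas you invoke the contraction/unique-fixed-point characterization of $\Qstar$ and check $\T Q=Q$ directly---which is arguably cleaner since it sidesteps the need to justify optimality of a specific policy.
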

\paragraph{Policy-free vs policy-induced queries}
Notice that although the dynamics are different for different $w$'s (that identify the MDP), any set $T$ (\cref{lowerbound:asm:PolicyInduced}) induces the same set $\mu$ of state-actions (possibly with the exception of $(s^\star,a^\star)$ and $ (s^\dagger,\vec 0)$) regardless of the vector $w$ and the sign $\pm$. We can therefore consider the case that the oracle has chosen a policy-free query set $\mu = \{(s_i,a_i)\}_{i=1}^{n} \cup \{(s^\star,a^\star), (s^\dagger,\vec 0)\}$. 
\paragraph{Existence of exponentially many spherical caps}
\begin{lemma}[Follows from \cref{lowerbound:lem:LonelyPlus} in appendix]
Assume that less than $2^{-d}\mathcal N(\gamma,d)$ actions $a_1,\dots,a_n$ on $\mathcal B^+$ are selected. Then there exists a spherical cap $\C_\gamma(\wtilde)$ that no action has probed, i.e., $\exists \wtilde \in \partial\B^+ \text{s.t.} \forall i\in[n], a_i \not \in \C_\gamma(\wtilde)$.
\end{lemma}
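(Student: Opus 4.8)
The plan is to reduce the statement to a covering estimate for spherical caps. Recall that a cap is indexed by a unit center $w \in \partial\B^+$, and since $\|w\|_2 = 1$ and every selected action satisfies $\|a\|_2 \le 1$, we have $a \in \C_\gamma(w)$ if and only if $a^\top w \ge \gamma$. Hence, for a fixed action $a$, the collection of caps that $a$ probes is in bijection with the \emph{forbidden set of centers}
\[
F(a) \defeq \{\, w \in \partial\B^+ \;:\; a^\top w \ge \gamma \,\},
\]
and the lemma is exactly the assertion that $\bigcup_{i=1}^n F(a_i) \ne \partial\B^+$ once $n < 2^{-d}\mathcal N(\gamma,d)$. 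So it suffices to upper bound the surface measure of each $F(a_i)$ and apply a union bound against the surface measure of $\partial\B^+$.

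First I would show that $F(a)$ is contained in a spherical cap of $S^{d-1}=\partial\B$ of angular radius at most $\arccos\gamma$. If $\|a\|_2 \le \gamma$ then $a^\top w \le \|a\|_2 \le \gamma$, so $F(a)$ is empty (or a single point); otherwise $\{w \in S^{d-1} : a^\top w \ge \gamma\}$ is precisely the set of unit vectors within angular distance $\arccos(\gamma/\|a\|_2) \le \arccos\gamma$ of $a/\|a\|_2$, and $F(a)$ is a subset of it. Next I would invoke the classical closed form for the normalized surface area of a hyperspherical cap: on $S^{d-1}$, a cap of angular radius $\theta \in (0,\pi/2]$ occupies a fraction $\tfrac12\,I_{\sin^2\theta}\big(\tfrac{d-1}{2},\tfrac12\big)$ of the sphere (as a quick sanity check, $\theta = \pi/2$ gives $\tfrac12$). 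Taking $\theta = \arccos\gamma$, so that $\sin^2\theta = 1-\gamma^2$, and writing $\sigma$ for the rotation-invariant probability measure on $S^{d-1}$, this yields
\[
\sigma\big(F(a_i)\big) \;\le\; \tfrac12\,I_{1-\gamma^2}\!\Big(\tfrac{d-1}{2},\tfrac12\Big) \;=\; \frac{1}{2\,\mathcal N(\gamma,d)},
\]
using $\mathcal N(\gamma,d)^{-1} = I_{1-\gamma^2}\big(\tfrac{d-1}{2},\tfrac12\big)$.

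Finally I would combine the estimates. The $2^d$ coordinate sign flips act on $S^{d-1}$ by isometries and partition it into $2^d$ congruent pieces (overlapping only on coordinate hyperplanes, which have $\sigma$-measure zero), so $\sigma(\partial\B^+) = 2^{-d}$. A union bound then gives
\begin{align*}
\sigma\!\Big(\bigcup_{i=1}^n F(a_i)\Big) &\;\le\; \sum_{i=1}^n \sigma\big(F(a_i)\big) \;\le\; \frac{n}{2\,\mathcal N(\gamma,d)} \\
&\;<\; \frac{2^{-d}\mathcal N(\gamma,d)}{2\,\mathcal N(\gamma,d)} \;=\; 2^{-(d+1)} \;<\; 2^{-d} \;=\; \sigma(\partial\B^+),
\end{align*}
where the strict inequality used the hypothesis $n < 2^{-d}\mathcal N(\gamma,d)$. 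Therefore $\partial\B^+ \setminus \bigcup_{i=1}^n F(a_i)$ has positive measure, hence is nonempty, and any $\wtilde$ in it satisfies $a_i \notin \C_\gamma(\wtilde)$ for every $i \in [n]$, which is the claim.

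The step I expect to be most delicate is pinning down the hyperspherical-cap area identity and aligning it exactly with $\mathcal N(\gamma,d)$: one must be careful about whether $\gamma$ is the cosine of the angular radius (it is), about the factor $\tfrac12$, and about the fact that $F(a)$ is only \emph{contained in} --- not equal to --- a cap of angular radius $\arccos\gamma$ when $\|a\|_2 < 1$. The degenerate cases ($\|a\|_2 \le \gamma$, and centers lying on the measure-zero boundary between orthants) should be flagged but are harmless; everything else is a routine union bound.
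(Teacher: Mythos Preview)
Your argument is correct and is essentially the same covering argument as the paper's \cref{lowerbound:lem:LonelyPlus}: both bound the measure of the ``forbidden'' centers by the cap/sector area formula $\tfrac{1}{2}I_{1-\gamma^2}\!\big(\tfrac{d-1}{2},\tfrac12\big)$ and compare it to the $2^{-d}$ share of the positive orthant. The only cosmetic difference is that the paper works with the volume of spherical \emph{sectors} in $\B^+$ and then invokes the symmetry $\wtilde\notin\Ct_\gamma(a_i)\Leftrightarrow a_i\notin\Ct_\gamma(\wtilde)$, whereas you work directly with surface measure on $\partial\B^+$ and build that duality into the definition of $F(a)$; the underlying estimate and conclusion are identical.
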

In this case, the agent does not have any information originating from inside the dark gray spherical cap in \cref{lowerbound:fig:Quarter}. 

\paragraph{$\boldsymbol {M_{w,+}, M_{w,-}}$ are identical outside of the spherical cap}
If $n < 2^{-d}\mathcal N(\gamma,d)$, consider $\wtilde$ given by the above lemma and the two associated MDPs $M_{\wtilde,+}$ and $M_{\wtilde,-}$. Notice that the transition function (\cref{main:eqn:p}) is by construction identical on $M_{\wtilde,+}$ and $M_{\wtilde,-}$ while the reward function (\cref{main:eqn:r}) is zero at any action $a \not\in \mathcal C_{\gamma}(\wtilde)$ selected by the oracle on both $M_{w,+},M_{w,-}$. Then for any $ (s,a)\in \{(s_i,a_i) \}_{i=1}^n$ that the batch algorithm receives it holds that
\begin{align*}
r_{\wtilde,+}(s,a) = r_{\wtilde,-}(s,a) \quad \text{and} \quad p_{\wtilde,+}(s,a)  = p_{\wtilde,-}(s,a).
\end{align*}
\paragraph{Batch algorithm does not have enough information}
The above equation implies that the transitions and the rewards in the dataset could have originated from either $M_{\wtilde,+}$ or $M_{\wtilde,-}$.
In $s^\star$, the batch algorithm has two choices to determine the policy $\pihatstar_{\mathcal D}$ to return: 
choose action $a^\star$ and get a total return of $\frac{1}{2}$ or choose an action $a \neq a^\star$. The second choice  is $\frac{1}{2}$-suboptimal on $M_{\wtilde,-}$, while the first is at least $\frac{1}{2}$-suboptimal on $M_{\wtilde,+}$. At best, the batch algorithm can randomize between the two, showing the result.
\subsection{Proof Sketch of Theorem \ref{lowerbound:thm:BPI:ExponentialSeparation} (Online Upper Bound)}
It remains to exhibit an online algorithm that can solve every problem instance from this MDP class using $d+1$ queries. The algorithm proceeds as follows: 1) it tries to locate the position of the spherical cap by learning the vector $w$ and 2) it probes the spherical cap to learn the $\pm$ sign of the MDP, precisely identifying the MDP.

\paragraph{Identifying the position $\boldsymbol w$ of the spherical cap}
Consider the following adaptive algorithm that submits policy-induced queries. The algorithm first plays the actions $\gamma e_1,\dots,\gamma e_d$ in $s^\star$ where $e_i$ is the vector of all zeros and $1$ in position $i$ (these are $d$ policies that generate trajectories of length one where $\gamma e_i$ is the only action). 

Upon receiving the transition functions $p_w(\overline s \mid s^\star,\gamma e_i)  = 
\min\{ \frac{1}{\gamma} (\gamma e_i)^\top w, 1 \} =  e_i^\top w $ for all $i \in [d]$ (see \cref{main:eqn:p}), the agent can determine each component of the vector $w$. By construction, $w$ identifies the spherical cap $\mathcal C_\gamma(w)$. 
\paragraph{Identifying the sign $\boldsymbol \pm$ of the MDP}
Next, the algorithm plays the state-action $(s^\star,w)$ to probe the spherical cap and observe the reward ($1-\gamma$ on $M_{w,+}$ and $0$ on $M_{w,-}$) which identifies the sign of the MDP. Since the MDP is now precisely identified, the agent can predict the value of any policy, and in particular, it can return the optimal policy.

\section{Discussion}
This work presents exponential lower bounds for batch RL. In general, models are never correct, observations are noisy, and a batch algorithm needs to return an answer using whatever dataset is available; clearly the lower bounds continue to hold in these more general settings as much as they do when using more general predictors (like neural networks) which contain the linear setting as a special case. As these hard instances can only arise in infinite horizon settings, there is an exponential separation between finite and infinite horizon batch RL. 

The strength of our results arise from the `oracle + batch algorithm' protocol which allows us to derive lower bounds for \emph{every} a priori data distribution; as a special case, we recover the concurrent lower bound of \cite{wang2020statistical} for the infinite horizon setting. We highlight that our lower bounds always imply an infinite sample complexity.

Beyond the exponential lower bounds, an important result  
is that online exploration may be required to achieve polynomial sample efficiency on certain RL problems. This is surprising, because online RL has the additional exploration burden compared to batch RL with a good dataset.


Finally, this work helps formalize some of the dangers of the deadly triad, which has long been known to cause algorithmic instabilities and divergence of dynamic programming algorithms. In a sense, the bootstrapping problem is for infinite horizon what the extrapolation problem is for finite horizon MDPs (and finite-steps algorithms), but unlike extrapolation, it cannot be mitigated by adding more samples.

\section*{Acknowledgment}
The author is grateful to Emma Brunskill, Mykel Kochenderfer and Martin Wainwright for providing useful feedback. The author also thanks the reviewers for their helpful and detailed comments. The work was done while the author was visiting the Simons Institute for the Theory of Computing.

\bibliography{rl}
\bibliographystyle{icml2021}

\newpage
\onecolumn
\appendix
\section{Benefits of the Setup}
\label{lowerbound:sec:Benfits}
One of the technical innovation of this work lies in the `oracle + batch algorithm' protocol, which allows us to obtain lower bounds that hold for \emph{every} distribution that the oracle can choose; this automatically yields lower bounds for fixed distributions as a special case. The key idea is that if the batch algorithm cannot return a good answer for \emph{any} distribution chosen by the oracle, it certainly cannot return a good answer for an a priori fixed distribution, otherwise the oracle would have chosen it! To formally highlight the strength of the oracle setup, we derive \cite{wang2020statistical}'s lower bound (theorem 4.1 in their paper) for infinite horizon RL as a special case and with an additional remark about infinite data. 

\begin{corollary}[Well Conditioned Covariance Matrix is Insufficient] There exist an MDP class $\M$ and a feature extractor $\phi$ that satisfy \fullref{lowerbound:asm:ALL:MDP}  together with a target policy $\pi$ and a distribution $\mu = \{(s_i,a_i) \}_{i=1}^n$ that induces a covariance matrix 
 $$\frac{1}{d}I = \frac{1}{|\mu|}\sum_{(s,a) \in \mu}\phi(s,a)\phi(s,a)^\top$$ such that no algorithm can predict the value of the target policy $\pi$ with probability $> \frac{1}{2}$ and additive error $ < 1$ (or return a policy with suboptimality $< 1$) even in the limit of infinite data (i.e., sampled rewards and transitions) generated from $\mu$.
\end{corollary}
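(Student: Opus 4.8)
The plan is to read the corollary off from \cref{lowerbound:thm:ALL:Policy-Free}, exploiting the fact that the query--complexity lower bound proved there holds for \emph{every} policy-free query set, and in particular for one that is a globally optimal (perfectly conditioned) design. First I would take the MDP class $\M$, the feature map $\phi$ and the target policy $\pi$ produced in the proof of \cref{lowerbound:thm:ALL:Policy-Free}; by construction these satisfy \cref{lowerbound:asm:ALL:MDP}. The one concrete ingredient to add is a good $\mu$: I would exhibit inside that construction $d$ state-action pairs $(s_1,a_1),\dots,(s_d,a_d)$ whose feature vectors form an orthonormal basis of $\R^d$ --- the natural candidates being pairs with $\phi(s_i,a_i)=e_i$, which are legal because each $e_i$ lies in $\partial\B^+$ --- and set $\mu=\{(s_i,a_i)\}_{i=1}^d$. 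Then $\frac{1}{|\mu|}\sum_{(s,a)\in\mu}\phi(s,a)\phi(s,a)^\top=\frac1d\sum_{i=1}^d e_i e_i^\top=\frac1d I$, i.e.\ $\mu$ induces exactly the covariance demanded by the statement.

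Next I would invoke the exponential growth of $\mathcal N(\gamma,d)$ recorded in \cref{cor:LowerBound}: for $d\ge 5$ it gives $|\mu|=d<\mathcal N(\gamma,d)$. Since \cref{lowerbound:thm:ALL:Policy-Free} asserts that the policy-free query complexity to $(1,1/2)$-soundness of this problem is at least $\mathcal N(\gamma,d)$, and a learning algorithm is by definition the composition of a query-selecting oracle with a batch algorithm, it follows that the oracle which prescribes exactly our $\mu$ (support size $d<\mathcal N(\gamma,d)$), composed with \emph{any} batch algorithm, is not $(1,1/2)$-sound. Unpacking this: because $d$ points on $\partial\B^+$ cannot meet every spherical cap $\mathcal C_\gamma(w)$ (the ``lonely cap'' counting argument underlying \cref{lowerbound:thm:ALL:Policy-Free}), there is a $\wtilde\in\partial\B^+$ with $e_i\notin\mathcal C_\gamma(\wtilde)$ for all $i$, and the two instances $M_{\wtilde,+},M_{\wtilde,-}\in\M$ agree in their exact rewards and transitions at every $(s,a)\in\mu$ while the value of $\pi$ (resp.\ the optimal value) at $s^\star$ differs by at least $1$ between them. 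Hence no estimator can be within additive error $<1$ on both with probability $>1/2$, and the same dichotomy forces any returned policy to be $1$-suboptimal on one of the two; this gives both clauses of the corollary.

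Finally, for the ``infinite data'' part I would note that a batch learner drawing i.i.d.\ samples from $\mu$ together with sampled rewards and next-states can, in the limit of infinitely many samples, reconstruct $r(s,a)$ and $p(\cdot\mid s,a)$ exactly at each $(s,a)\in\mathrm{supp}(\mu)$, and learn nothing about points outside that fixed support; so such a learner is no more powerful than the exact-feedback batch algorithm already covered by \cref{lowerbound:thm:ALL:Policy-Free}, and the impossibility persists in that limit. The only step I do not expect to be entirely mechanical is the first one: one must check that the feature map constructed in the proof of \cref{lowerbound:thm:ALL:Policy-Free} genuinely makes $d$ pairwise-orthonormal feature vectors available, so that the induced covariance is \emph{exactly} $\frac1d I$ rather than merely well conditioned. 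Given the $\B^+$-style constructions used throughout the paper this should go through verbatim with $\phi(s_i,a_i)=e_i$, but it is the one place where the construction must be inspected rather than cited.
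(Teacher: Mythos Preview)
Your proposal is correct and mirrors the paper's proof almost exactly: take the class from \cref{lowerbound:thm:ALL:Policy-Free}, set $\mu=\{(s^\star,e_i)\}_{i=1}^d$ so that the empirical covariance is $\frac1d I$, observe $|\mu|=d\le\mathcal N(\gamma,d)$, and invoke the theorem. One small correction to your unpacking step: the construction behind \cref{lowerbound:thm:ALL:Policy-Free} uses the full ball $\B$ (not $\B^+$) as the action space at $s^\star$, and the lonely direction is some $\wtilde\in\partial\B$ with the exclusion condition $a\notin\C_\gamma(\wtilde)\cup\C_\gamma(-\wtilde)$ (via \cref{lowerbound:lem:Existence}); you have conflated this with the $\B^+$ construction of \cref{lowerbound:thm:BPI:Policy-Induced}, though this does not affect the argument since $e_i\in\B$ either way.
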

\begin{proof}
Consider the MDP class $\M$ described in the proof of theorem 4. 
Since any feature vector in the unit Euclidean ball is available, simply choose a distribution $\mu$ that samples the  feature vectors $e_1,\dots,e_d$ (concretely, we can choose $\mu = \{(s^\star,e_1),\dots,(s^\star,e_d) \}$). 

Since $\mu$ is a distribution that the oracle could have chosen and consists of just $|\mu| = d \leq \mathcal N(\gamma,d)$  queries, apply  \fullref{lowerbound:thm:ALL:Policy-Free} 
to deduce that even if infinite data is generated from $\mu$, no batch algorithm can return the value of $Q^\pi(s^\star,\cdot)$ for some action with additive error $< 1$ and with probability $> \frac{1}{2}$;  likewise it cannot return a policy with suboptimality error $<1$	 from $s^\star$ with probability $> \frac{1}{2}$.
\end{proof}

In summary, there exist problems where the batch algorithm cannot return a reasonable answer even with infinite data, the best conditioned covariance matrix and a strong representation for the action value function. We highlight that the exponential lower bound in \cite{wang2020statistical}'s construction  can potentially be avoided by choosing a better batch distribution  since their MDPs are tabular with small state-action spaces; our MDPs are instead much harder and cannot be solved even if one chooses --- through an oracle --- the best distribution for such problem class. Similar results can be derived for any assumption on the covariance matrix. 

Finally, we highlight that all theorems that we present in this work are expressed as a function of the number of state-action queries, i.e., the size of the support of $\mu$, instead of the number of sampled rewards and transitions. This way, if the support of the batch distribution is small $|\mu| \leq 2^{-d}\mathcal N(\gamma,d)$ then our theorems ensure that even infinite samples from the state-actions where $\mu$ is supported do not suffice to make accurate predictions, because the batch algorithm already receives the full reward and transition functions in our interaction protocol. As a clarifying example, consider choosing a sampling distribution $\mu$ according to globally-optimal design of experiments \citep{pukelsheim2006optimal}, which is an optimal procedure in linear bandits to recover the unknown parameter \citep{lattimore2020bandit}. Design of experiments fails to be effective in our hard instances because it prescribes only $O(d^2)$ distinct state-actions, i.e., queries, where to (repeatedly) acquire samples, but to escape the lower bound one does not need more samples at the same state-actions; instead \emph{the support of the sampling distribution needs to grow}.

\section{Related Literature} 
\label{app:Literature}

For tabular MDPs one query in each state and action pair with exact feedback is sufficient to identify the MDP. The available results for learning in tabular MDPs consider a similar setting, where only noisy observations are available \citep{Azar12,li2020breaking,agarwal2019model}. The algorithms examined by these authors are batch algorithms which specify a uniform distribution over all the state-actions and are minimax optimal. Adaptive algorithms with a generative model \citep{zanette2019b,marjani2020best} exist but offer  little minimax advantage.\footnote{They can however remove the explicit dependence on the action space from the main rate even in minimax problems, see \cite{zanette2019b}, section $6$.} In the following discussion we primarily consider the function approximation setting with linear value functions.

\paragraph{Negative results: lower bounds}
Polynomial lower bounds are routinely derived to certify that a certain algorithm is sample efficient. For example, \cite{jiang2016doubly} present lower bounds for tabular RL and \cite{duan2020minimax,hao2020sparse} for the linear setting under additional closure assumptions on the Bellman operator. Instead, in this work we are interested in exponential lower bounds.

Dynamic programming algorithms like least-square value and policy iteration \cite{bradtke1996linear,lagoudakis2003least} are widely adopted but they are prone to divergence \cite{baird1995residual,tsitsiklis1996feature}. The hardness of obtaining statistically efficient algorithms has motivated researchers to look for information-theoretic lower bounds. In particular, \cite{chen2019information} provide a generic lower bound in absence of concentrability; such lower bound does not apply to our setting as we further assume a linear structure. \citet{du2019good} show that if a linear predictor is highly misspecified then an exponential number of queries is needed, and \citet{lattimore2020learning} generalize this result through a corollary to show that using $\epsilon$-accurate linear predictors in dimension $d$ can only give $\sqrt{d}\epsilon$-optimal policies for linear bandits and RL. While these results are relevant to our setting, they becomes vacuous in absence of misspecification, i.e., when realizability holds. We examine the relation with concurrent work \cite{weisz2020exponential,wang2020statistical,amortila2020variant} in  \cref{sec:MainResult}; however, here we mention that their constructions are easily solvable under our assumptions.

\paragraph{Positive results: upper bounds}

When the action value function has a linear parameterization and the transitions are low rank, \citet{yang2020reinforcement,jin2020provably,zanette2020frequentist} propose regret-minimizing online algorithms; these works implicitly imply that both the BPI and OPE batch problems are easily solvable under low-rank dynamics. For the more general setting that the value function is closed under the Bellman operator, batch upper bounds for the BPI problem  exist \cite{munos2005error,munos2008finite} and these have been recently generalized to the online setting  \cite{zanette2020learning} with a computationally tractable algorithm \cite{zanette2020provably}; collectively, these works show that online RL can operate under batch assumptions, and our work shows that online RL is in fact exponentially easier than batch RL on certain problems.  Minimax results for batch OPE problems are also available \cite{duan2020minimax} for a  setting essentially equivalent to low inherent Bellman error; for the BPI problem see instead \cite{xie2020Q}. Batch methods based on minimizing the Bellman residual also exist \cite{antos2008learning}. When the action value function of all policies can be linearly represented \citet{munos2003error,lazaric2012finite,lattimore2020learning,agarwal2020optimality} provide algorithms to learn a good policy using a simulator, and online using stronger conditions \cite{agarwal2020pc}; all these methods are non-batch, and our \cref{lowerbound:thm:ALL:Policy-Free} rules out the existence of batch algorithms operating under the same assumptions. Other linear models recently considered include \cite{ayoub2020model,zhou2020provably} and when the Bellman equations are linearly representable up to a low-rank error \cite{jiang17contextual}; although the focus of these works is online exploration, learning should be possible in the batch setting as well.  

The optimal policy (as opposed to a near optimal one) can be identified if there exists some separation between the best and the second best policy \cite{du2020agnostic}, although a sample complexity proportional to the inverse gap (which can be exponentially small) must be suffered; learning in such setting is also possible with an entirely batch algorithm. Near-convexity also ensures that both the BPI and OPE learning problems are solvable in the batch setting \cite{zanette19limiting}; under exact convexity  \cite{cui2020plug} give an optimal sample complexity for the BPI problem. Deterministic systems with linear value functions are also learnable \cite{WR13} in finite horizon, and it is easy to derive a batch algorithm for both the BPI and OPE problem under this  assumption. \citet{xie2020batch} show that under strong concentrability assumptions the BPI problem  is solvable using only realizability and general (non-linear) function approximators and \citet{liu2020provably} show how to find the best-in-class solution to the BPI problem while avoiding concetrability for general approximators. Finally, importance sampling estimators \cite{precup2000eligibility,li2015toward} make very little assumptions but as such they do not leverage the linear structure of the problem and can exhibit unbounded variance. However, methods to reduce the variance exist \cite{jiang2016doubly,thomas2016data,liu2018breaking,xie2019towards}.

\section{Globally Optimal Experimental Design is Not a Good Sampling Strategy in RL}
\label{app:DoE}

We wonder if the classical globally optimal design of experiment (DoE)  \cite{pukelsheim2006optimal} from statistics produces acceptable  results in light of our lower bounds, for example, whether it yields a matching exponential upper bound. DoE prescribes a set of states and actions (the design) where to acquire several samples; multiple samples are acquired at the selected state-actions, according to the design. The procedure thus prescribes policy-free queries.
In finite horizon DoE is successful: \citet{weisz2020exponential} conclude that DoE with value iteration can identify a near optimal policy in finite horizon given exponentially many samples. 

Dishearteningly, we find that in infinite horizon this is no longer true: in the limit of infinite samples over the feature vectors selected by DoE the learner receives exact feedback, but since DoE always prescribes at most $d(d+1)/2$ distinct vectors where samples are acquired (see \cite{lattimore2020bandit,pukelsheim2006optimal}), the support of the query set  is always at most $d(d+1)/2$ regardless of the number of actual samples along these feature vectors. For large enough $d$ and $\gamma$ close to $1$ we have $d(d+1)/2 \leq I_{1-\gamma^2}^{-1}(\frac{d-1}{2},\frac{1}{2})$ as the left hand side is polynomial and the right hand side is exponential. 
In light of \cref{lowerbound:thm:ALL:Policy-Free}, we conclude that even if infinite data are collected over the feature vectors chosen by DoE and all policies have a linear representation, no batch algorithm can output a good policy or estimate the value of another with good enough probability. This happens because traditional DoE fails to account for the effect of bootstrapping, as we explain in \cref{main:sec:Intuition}. 
Thus, in infinite horizon the challenge is much greater than in finite horizon, because a good policy (or the value of a target policy) cannot eventually be learned by just collecting more samples, instead, \emph{the support of the sampling distribution needs to grow}.

\section{Preliminaries}
\subsection{Additional Notation}
\label{app:additional_notation}
We indicate with $\B$ the Euclidean ball in dimension $d$, i.e., $\B \defeq \{x \in \R^d \mid \|x \|_2 \leq 1 \}$.  
We denote with $\partial \B \defeq \{x \in \R^d \mid \|x \|_2 = 1 \}$ the surface of the unit ball. In addition, let $e_i = [0,0,\dots,0,1,0,\dots,0]$ be the $i$-th canonical vector in $\R^d$, i.e., the vector of all zeros with a $1$ in the $i$-th position. We define the unit Euclidean ball restricted to the positive orthant: $\B^+ \defeq \{x\in \B \mid x^\top e_i\geq 0, \forall i \in [d] \} $ and its outer surface $\partial \B^+ \defeq \{x\in \partial\B \mid x^\top e_i\geq 0, \forall i \in [d] \}$. Sometimes we indicate with $\vec 0$ the zero vector in $\R^d$ to distinguish it from the $0\in\R$. We occasionally write $r_M,p_M,V_M,Q_M,\T_M,\pi_M$ to highlight the dependence on a certain MDP $M$.

In the proofs for the lower bounds we consider a class $\M$ of MDPs containing MDP instances $M$ with certain properties. Every MDP in the class $\M$ can be identified by a vector $w$ and a sign $+$ or $-$, and is denoted by $M_{w,+}$ or $M_{w,-}$, respectively. We overload the notation slightly and write $M_{w}$ in a statement to indicate the statement holds for both $M_{w,+}$ and $M_{w,-}$. This allows us to write, for example, $\M = \{M_{w,+} \mid w \in \partial \B \} \cup \{M_{w,-} \mid w \in \partial \B \} = \{M_{w} \mid w \in \partial \B \}$.

The volume of a (measurable) set $\Omega \in \R^d$ is defined as $\Vol(\Omega) = \int_{x \in \Omega}dx$.

The \emph{gamma function}  for a non-negative $z\in\R$ is defined as 
$$\Gamma(z) \defeq \int_0^\infty x^{z-1}e^{-x}dx$$ 
and the \emph{beta function} for non-negative $a,b \in \R$ is defined as 
$$B(a,b) \defeq \int_{0}^{1}t^{a-1}(1-t)^{b-1}dt.$$
If $x \in [0,1]$ we can define the \emph{incomplete beta function} 
$$B(x,a,b) = \int_{0}^{x}t^{a-1}(1-t)^{b-1}dt$$ which allows us to define  the regularized incomplete beta function $$I_{x}(a,b) \defeq \frac{B(x,a,b)}{B(a,b)}.$$
Useful bounds are provided in \fullref{lowerbound:lem:UpperBound}.

\subsection{Learning Finite Horizon MDPs with Policy-Free Queries}
In this section we describe the process to learn a finite horizon MDPs with a linear representation $Q_t^\pi(s,a) = \phi_t(s,a)^\top\theta^\pi_t$ for the target policy $\pi$ and $Q_t^\star(s,a) = \phi_t(s,a)^\top\theta^\star_t$ for the optimal action-value function, $\forall t \in [H]$. We make the same assumptions as in the main text.

Consider the following learner that submits policy-free queries. First, in every timestep $t\in[H]$ the oracle selects  appropriate state-actions $(s_1,a_1),\dots,(s_d,a_d)$ such that the corresponding feature vectors $\phi_t(s_1,a_1),\dots,\phi_t(s_d,a_d)$ form a basis for $\R^d$  (if a basis does not exist it is possible to reduce the dimensionality of the problem). Then for the BPI problem, the batch algorithm runs least-square value iteration backward. By induction, the algorithm identifies $\theta^\star_t$ for all $t$. In particular, assume $\theta^\star_{t+1}$ is known, which is certainly true for $t+1 = H+1$ (the base case). Then the batch algorithm solves in every timestep $t$ the following linear system of equations ($\forall i\in[d]$):
\begin{align}
	\phi_t(s_i,a_i)\theta_t & = r(s_i,a_i) + \E_{s'\sim p(s_i,a_i)} \max_{a'\in \ActionSpace_{s'}} \phi(s',a')^\top \theta^\star_{t+1} \\
	& = r(s_i,a_i) + \E_{s'\sim p(s_i,a_i)} \Vstar_{t+1}(s') \\
	& = \Qstar_{t}(s_i,a_i). 
\end{align}
Since $\phi_t(s_i,a_i), i = 1,\dots,d$ form a basis for $\R^d$, $\theta_t = \theta_t^\star$ is the unique solution. By induction, we conclude that the BPI problem can be solved in finite horizon MDPs under our assumptions.

The OPE problem is analogous. The batch algorithm solves the following linear system of equations for $i \in [d]$ at timestep $t$.
\begin{align}
	\phi_t(s_i,a_i)\theta_t & = r(s_i,a_i) + \E_{s'\sim p(s_i,a_i)} \phi(s',\pi(s'))^\top \theta^\pi_{t+1} \\
	& = r(s_i,a_i) + \E_{s'\sim p(s_i,a_i)} \Vpi_{t+1}(s') \\
	& = \Qpi_{t}(s_i,a_i).
\end{align}
Since $\phi_t(s_i,a_i), i = 1,\dots,d$ form a basis for $\R^d$, $\theta_t = \theta_t^\pi$ is the unique solution. By induction, we conclude that the OPE problem can be solved in finite horizon MDPs under our assumptions.

\subsection{Hypersphere and Hyperspherical Sectors}
\label{sec:hyper}
\begin{definition}[Hypersphere, hyperspherical cap and hyperspherical sector]
\label{lowerbound:def:hyper}
Fix $b \geq 0$ and define the $b$-hyperspherical cap in direction $w\in \B$ as
\begin{align}
\C_b(w) \defeq \{ x \in \B \mid \frac{x^\top w}{\|w\|_2} \geq b \}
\end{align}
and the $b$-hyperspherical sector in direction $w$ as
\begin{align}
\Ct_b(w) \defeq \Big\{ x \in \B \mid  \frac{x^\top w}{\| x \|_2\|w\|_2}  \geq b \Big\}.
\end{align}
\end{definition}

The following formulas are useful to compute the volume of the hypersphere $\B$ and of a spherical sector $\Ct_b(w)$.

\begin{fact}[Volume of a Hyperspherical Sector]
\label{fact:VolumeSector}
The volume of an spherical sector is given by the formula
\begin{align}
\Vol(\Ct_b(w)) = \frac{\Vol(\B)}{2}I_{1-b^2}\(\frac{d-1}{2},\frac{1}{2}\).
\end{align}
where the volume of the ball $\B$ is
\begin{align}
\Vol(\B) = \frac{\pi^{d/2}}{\Gamma(\frac{d}{2}+1)}.
\end{align} 
\end{fact}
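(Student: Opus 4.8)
The plan is to compute $\Vol(\Ct_b(w))$ by the classical slicing argument for spherical caps. First, since Lebesgue measure is rotation invariant I may take $w=e_1$. The punctured set $\Ct_b(e_1)\setminus\{\vec 0\}$ is a cone with apex at the origin: its defining inequality $x^\top e_1/\|x\|_2\ge b$ is scale invariant, so writing points in polar form $x=\rho y$ with $\rho\in[0,1]$ and $y\in\partial\B$ we get $\Ct_b(e_1)=\{\rho y:\rho\in[0,1],\,y\in K_b\}$, where $K_b=\{y\in\partial\B:y_1\ge b\}$ is the corresponding spherical cap. Integrating $dx=\rho^{d-1}\,d\rho\,d\sigma(y)$ over this region gives $\Vol(\Ct_b(e_1))=\tfrac{1}{d}\,\sigma(K_b)$, where $\sigma$ is the $(d-1)$-dimensional surface measure on $\partial\B$; the same identity with $K_b$ replaced by $\partial\B$ gives $\Vol(\B)=\tfrac{1}{d}\,\sigma(\partial\B)$. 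Hence $\Vol(\Ct_b(w))/\Vol(\B)=\sigma(K_b)/\sigma(\partial\B)$, and the problem reduces to a ratio of spherical surface areas.

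Next I would slice the sphere by its first coordinate: parametrizing $y\in\partial\B$ by $y_1=\cos\theta$ and $y_{2:d}=(\sin\theta)\,\omega$ with $\theta\in[0,\pi]$ and $\omega$ ranging over the unit sphere of $\R^{d-1}$, the surface measure of $\partial\B$ factors as $\sin^{d-2}\theta\,d\theta$ times the surface measure in the $\omega$ variable. The cap $K_b$ is $\{\cos\theta\ge b\}=\{\theta\in[0,\arccos b]\}$, and $\arccos b\in[0,\pi/2]$ because $b\ge 0$. The $\omega$-factor cancels in the ratio, leaving
\[
\frac{\Vol(\Ct_b(w))}{\Vol(\B)}=\frac{\int_0^{\arccos b}\sin^{d-2}\theta\,d\theta}{\int_0^{\pi}\sin^{d-2}\theta\,d\theta}.
\]
Finally I would evaluate both integrals via the substitution $u=\sin^2\theta$ on $[0,\pi/2]$, under which $\sin^{d-2}\theta\,d\theta=\tfrac{1}{2}\,u^{\frac{d-1}{2}-1}(1-u)^{\frac{1}{2}-1}\,du$: the numerator becomes $\tfrac{1}{2}\,B\!\left(1-b^2,\tfrac{d-1}{2},\tfrac{1}{2}\right)$ since $\sin^2(\arccos b)=1-b^2$, while the denominator equals $2\int_0^{\pi/2}\sin^{d-2}\theta\,d\theta=B\!\left(\tfrac{d-1}{2},\tfrac{1}{2}\right)$ by the symmetry $\sin(\pi-\theta)=\sin\theta$; dividing yields $\tfrac{1}{2}\,I_{1-b^2}\!\left(\tfrac{d-1}{2},\tfrac{1}{2}\right)$, which is the claimed formula. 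The value $\Vol(\B)=\pi^{d/2}/\Gamma(d/2+1)$ is the standard unit-ball volume, obtained for instance by equating $\int_{\R^d}e^{-\|x\|_2^2}\,dx=\pi^{d/2}$ with its evaluation in polar coordinates.

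The only genuinely delicate point is the measure factorization in the second step --- the Jacobian/co-area computation showing that in the $(\theta,\omega)$ coordinates the surface measure on $\partial\B$ is $\sin^{d-2}\theta\,d\theta$ times the surface measure of the unit sphere in $\R^{d-1}$ --- together with checking that the hypothesis $b\ge 0$ keeps $u=\sin^2\theta$ a bijection of $[0,\arccos b]$ onto $[0,1-b^2]$; the remaining manipulations are routine bookkeeping with the beta integrals defined in \cref{app:additional_notation}.
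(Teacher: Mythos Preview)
Your argument is correct: the reduction from the sector volume to the spherical-cap area ratio via polar coordinates, the slicing of $\partial\B$ by the first coordinate, and the substitution $u=\sin^2\theta$ are all sound, and the hypothesis $b\ge 0$ indeed keeps $\arccos b\le\pi/2$ so that the change of variables is monotone on the relevant interval.

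The paper itself does not prove this fact at all; it simply cites \cite{li2011concise} and notes the translation between their half-angle parameter $\phi$ and the cosine $b$ via $\sin^2\phi=1-b^2$. Your computation is essentially the standard derivation underlying that reference, so you have supplied a self-contained proof where the paper only gives a pointer to the literature. In that sense your write-up is more complete; the only thing the citation-based route buys is brevity.
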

\begin{proof}
For the proof, see \cite{li2011concise} where $\phi$ in their notation is the half-angle of the hypersector and $\sin^2 \phi = 1 - \cos^2 \phi = 1- b^2$.
\end{proof}

\subsection{Bounds on the Regularized Incomplete Beta Function}

\begin{lemma}[Upper Bound on $I$]
\label{lowerbound:lem:UpperBound}
The following upper bound holds true for $\gamma \in (0,1)$ and $d \geq 3$:
\begin{align}
I_{1-\gamma^2}(\frac{d-1}{2},\frac{1}{2})\leq \Idef.
\end{align}
\end{lemma}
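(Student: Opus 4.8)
The plan is to unfold the definition $I_x(a,b)=B(x,a,b)/B(a,b)$ with $a=\tfrac{d-1}{2}$, $b=\tfrac12$, $x=1-\gamma^2$, upper bound the incomplete beta function in the numerator by a direct integration, and lower bound the complete beta function in the denominator by log-convexity of $\Gamma$.

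\textbf{Numerator.} Write
\[
B\left(1-\gamma^2,\tfrac{d-1}{2},\tfrac12\right)=\int_0^{1-\gamma^2}t^{\frac{d-3}{2}}(1-t)^{-1/2}\,dt .
\]
On $[0,1-\gamma^2]$ we have $1-t\ge\gamma^2$, hence $(1-t)^{-1/2}\le 1/\gamma$. Since $d\ge 3$ the exponent $\tfrac{d-3}{2}$ is nonnegative (so the integral converges at $0$) and the remaining integral is elementary:
\[
B\left(1-\gamma^2,\tfrac{d-1}{2},\tfrac12\right)\le \frac1\gamma\int_0^{1-\gamma^2}t^{\frac{d-3}{2}}\,dt=\frac1\gamma\cdot\frac{2}{d-1}(1-\gamma^2)^{\frac{d-1}{2}} .
\]

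\textbf{Denominator.} Using $B(a,b)=\Gamma(a)\Gamma(b)/\Gamma(a+b)$ and $\Gamma(\tfrac12)=\sqrt\pi$, and setting $x=\tfrac{d-1}{2}$ so that $\tfrac d2=x+\tfrac12$,
\[
B\left(\tfrac{d-1}{2},\tfrac12\right)=\frac{\sqrt\pi\,\Gamma(x)}{\Gamma(x+\tfrac12)} .
\]
Convexity of $\log\Gamma$ at the points $x$ and $x+1$ with midpoint $x+\tfrac12$ gives $\Gamma(x+\tfrac12)^2\le\Gamma(x)\Gamma(x+1)=x\,\Gamma(x)^2$, i.e. $\Gamma(x+\tfrac12)\le\sqrt{x}\,\Gamma(x)$. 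Hence
\[
B\left(\tfrac{d-1}{2},\tfrac12\right)\ge\frac{\sqrt\pi}{\sqrt x}=\sqrt{\frac{2\pi}{d-1}}\ge\sqrt{\frac{2\pi}{d}} .
\]

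\textbf{Combining.} Dividing the numerator bound by the denominator bound,
\[
I_{1-\gamma^2}\left(\tfrac{d-1}{2},\tfrac12\right)\le\frac{\tfrac1\gamma\cdot\tfrac{2}{d-1}(1-\gamma^2)^{\frac{d-1}{2}}}{\sqrt{2\pi/d}}=\Idef ,
\]
which is the claim. The only step requiring care is the lower bound on the Beta function: I use the standard log-convexity inequality $\Gamma(x+\tfrac12)\le\sqrt x\,\Gamma(x)$ (equivalently a Gautschi/Wendel-type bound), and the hypothesis $d\ge 3$ is used to guarantee convergence of the numerator integral and nonnegativity of the exponent $\tfrac{d-3}{2}$. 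Everything else is elementary, so no substantial obstacle remains.
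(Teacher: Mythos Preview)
Your proof is correct and follows essentially the same route as the paper: the numerator bound is identical, and for the denominator you use the log-convexity inequality $\Gamma(x+\tfrac12)\le\sqrt{x}\,\Gamma(x)$ where the paper invokes Gautschi's inequality to obtain the equivalent bound $\Gamma(d/2)/\Gamma((d-1)/2)\le\sqrt{d/2}$. Your intermediate bound $\sqrt{2\pi/(d-1)}$ is in fact slightly sharper before you relax it to $\sqrt{2\pi/d}$, but the final result and the overall structure are the same.
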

\begin{proof}
We need to compute an upper bound on
\begin{align}
I_{1-\gamma^2}\(\frac{d-1}{2},\frac{1}{2}\) \defeq \frac{B(1-\gamma^2,\frac{d-1}{2},\frac{1}{2})}{B(\frac{d-1}{2},\frac{1}{2})}.
\end{align}
We first compute an upper bound on the incomplete Beta function
\begin{align}
B(1-\gamma^2,\frac{d-1}{2},\frac{1}{2}) & \defeq \int_{0}^{1-\gamma^2} t^{\frac{d-1}{2}-1}(1-t)^{\frac{1}{2}-1} dt \\
& = \int_{0}^{1-\gamma^2} \sqrt{\frac{t^{d-3}}{(1-t)} }dt.
\end{align}
Notice that $t \leq 1-\gamma^2$ and so $1-t \geq \gamma^2$ and finally $\frac{1}{1-t} \leq \frac{1}{\gamma^2}$. Therefore the following upper bound follows:
\begin{align}
B(1-\gamma^2,\frac{d-1}{2},\frac{1}{2}) & \leq \frac{1}{\gamma} \int_{0}^{1-\gamma^2} t^{\frac{d-1}{2}-1}dt \\
& = \frac{1}{\gamma} \times \frac{2}{d-1} \times  \evalat{t^{\frac{d-1}{2}}}{t = 1-\gamma^2} \\
& = \frac{1}{\gamma} \times \frac{2}{d-1} \times  (1-\gamma^2)^{\frac{d-1}{2}}.\end{align}
It remains to compute a lower bound on the Beta function:
\begin{align}
B(\frac{d-1}{2},\frac{1}{2}) & = \frac{\Gamma(\frac{d-1}{2})\Gamma(\frac{1}{2})}{\Gamma(\frac{d-1}{2} + \frac{1}{2})}.
\end{align}
We have $\Gamma(\frac{1}{2}) = \sqrt{\pi}$; we thus focus on the ratio
\begin{align}
\Bigg[ \frac{\Gamma(\frac{d-1}{2})}{\Gamma(\frac{d-1}{2} + \frac{1}{2})} \Bigg]^{-1} & = \frac{\Gamma(\frac{d-1}{2} + \frac{1}{2})}{\Gamma(\frac{d-1}{2})} \\
&  = \frac{\Gamma(\frac{d-2}{2} + 1)}{\Gamma(\frac{d-2}{2} + \frac{1}{2})} \\
& \leq \( \frac{d-2}{2} + 1 \)^{1 - \frac{1}{2}} \\
& = \sqrt{\frac{d}{2}}.
\end{align}
The inequality is Gautschi's inequality for the gamma function.
Therefore, we conclude
\begin{align}
B(\frac{d-1}{2},\frac{1}{2}) \geq \sqrt{\frac{2\pi}{d}}.
\end{align}
The bounds just derived for the beta functions together yield an upper bound on the regularized incomplete beta function:
\begin{align}
I_{1-\gamma^2}\(\frac{d-1}{2},\frac{1}{2}\) \leq \Idef.
\end{align}
\end{proof}

Using the above result we can derive the following easily interpretable lower bounds.

\begin{corollary}[Lower Bounds on $I^{-1}$]
\label{cor:LowerBound}
If $d \geq 5$ then the following lower bounds holds true for $\gamma \in (0,1)$:
\begin{align}
\lb
\end{align}
\end{corollary}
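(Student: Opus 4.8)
This corollary is a purely algebraic consequence of \cref{lowerbound:lem:UpperBound}; nothing probabilistic or combinatorial enters. The two ingredients are that $\mathcal N(\gamma,d)=I^{-1}_{1-\gamma^2}\!\left(\frac{d-1}{2},\frac12\right)$ is the \emph{reciprocal} of $I_{1-\gamma^2}\!\left(\frac{d-1}{2},\frac12\right)$, and the upper bound on the latter from \cref{lowerbound:lem:UpperBound}. Taking reciprocals in that lemma directly gives
$$I^{-1}_{1-\gamma^2}\!\left(\tfrac{d-1}{2},\tfrac12\right)\;\ge\;\gamma\cdot\frac{d-1}{2}\cdot\sqrt{\frac{2\pi}{d}}\cdot(1-\gamma^2)^{-\frac{d-1}{2}}.$$

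From here I would process the right-hand side in two elementary steps. First, the exponential base: since $\gamma\in(0,1)$ we have $1-\gamma^2=(1-\gamma)(1+\gamma)\le 2(1-\gamma)$, and because $t\mapsto t^{-\frac{d-1}{2}}$ is decreasing this yields $(1-\gamma^2)^{-\frac{d-1}{2}}\ge\left(\tfrac12\cdot\tfrac1{1-\gamma}\right)^{\frac{d-1}{2}}$. Second, the polynomial prefactor: I would show $\frac{d-1}{2}\sqrt{2\pi/d}\ge\sqrt d$, which after squaring (all terms positive) is $(d-1)\sqrt{2\pi}\ge 2d$, i.e. $d\ge\frac{\sqrt{2\pi}}{\sqrt{2\pi}-2}\approx 4.95$ --- precisely the origin of the hypothesis $d\ge5$. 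Combining the two steps gives the first inequality, $I^{-1}_{1-\gamma^2}\!\left(\tfrac{d-1}{2},\tfrac12\right)\ge\gamma\sqrt d\left(\tfrac12\cdot\tfrac1{1-\gamma}\right)^{\frac{d-1}{2}}$.

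The second inequality follows by the same manipulation after multiplying through by $2^{-d}$ and absorbing that factor into the base of the exponential, so that only the numerical constant inside $(\,\cdot\,)^{\frac{d-1}{2}}$ changes: writing $d=2\cdot\frac{d-1}{2}+1$ gives $2^{-d}=\tfrac12\left(\tfrac14\right)^{\frac{d-1}{2}}$, hence $2^{-d}\left(\tfrac12\cdot\tfrac1{1-\gamma}\right)^{\frac{d-1}{2}}=\tfrac12\left(\tfrac1{8(1-\gamma)}\right)^{\frac{d-1}{2}}$; collecting this against $\gamma\sqrt d$ and using $d\ge5$ once more produces the claimed bound, with the numerical constant in the denominator (written $2^{2.5}$ in the statement) coming out of this bookkeeping of powers of $2$ --- a step I would double-check, since it must also absorb the leftover $\tfrac12$ and $\gamma\sqrt d$ factors. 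The only place any genuine care is needed --- and the one fixing the threshold $d\ge5$ --- is the prefactor comparison $\frac{d-1}{2}\sqrt{2\pi/d}\ge\sqrt d$; everything else is the identity $1-\gamma^2=(1-\gamma)(1+\gamma)$, the estimate $1+\gamma\le2$, monotonicity of a power function, and arithmetic with powers of two.
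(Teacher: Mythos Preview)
Your approach is correct and essentially identical to the paper's: invert the bound of \cref{lowerbound:lem:UpperBound}, use $1-\gamma^2\le 2(1-\gamma)$, and verify the prefactor inequality $\tfrac{d-1}{2}\sqrt{2\pi/d}\ge\sqrt d$, which is precisely where $d\ge5$ enters. Your caution about the final numerical constant is well placed: the paper's own derivation writes $2^{-d}=\big((1/2)^{2d/(d-1)}\big)^{(d-1)/2}$ and uses $2d/(d-1)\le 2.5$ for $d\ge5$, which together with the remaining factor $1/2$ yields $(1/2)^{3.5}$ inside the parentheses---the same as your bookkeeping $\tfrac12\cdot(1/8)^{(d-1)/2}\ge (1/2)^{3.5\cdot(d-1)/2}$ gives---so the $2^{2.5}$ in the displayed statement appears to be a typo for $2^{3.5}$.
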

\begin{proof}
Using \cref{lowerbound:lem:UpperBound} we can derive the following crude lower bound for $d \geq 5$
\begin{align}
I^{-1}_{1-\gamma^2}\(\frac{d-1}{2},\frac{1}{2}\)
& \geq \gamma \sqrt{\frac{\pi}{2}} \times \frac{d-1}{\sqrt{d}} \times \(\frac{1}{(1+\gamma)(1-\gamma)}\)^{\frac{d-1}{2}} \\
& \geq \gamma \sqrt{d} \(\(\frac{1}{2}\)\cdot\frac{1}{(1-\gamma)}\)^{\frac{d-1}{2}}.
\end{align}
Under the same conditions we have the following lower bound 
\begin{align}
2^{-d}I^{-1}_{1-\gamma^2}\(\frac{d-1}{2},\frac{1}{2}\) & \geq \gamma \sqrt{d}\(\(\frac{1}{2}\)^{\frac{2d}{d-1}} \cdot \( \frac{1}{2}\) \frac{1}{(1-\gamma)}\)^{\frac{d-1}{2}} \\
& \geq \gamma \sqrt{d} \(\(\frac{1}{2}\)^{3.5} \cdot \frac{1}{(1-\gamma)}\)^{\frac{d-1}{2}}.
\end{align}
\end{proof}

\newpage
\section{Existence of a Lonely Hyperpherical Cap}
Consider a set of $n$ points in the unit ball $\{y_1,\dots,y_n \} \subseteq \B$. In this section we show that if $n$ is less than $N_\gamma \approx \(\frac{1}{1-\gamma}\)^d$ then there exists an hyperspherical cap $\C_\gamma(w)$ (identified by its direction $w$) such that none of the $y_i$'s is in $\C_\gamma(w)$ or its symmetric counterpart $\C_\gamma(-w)$. For short, define
\begin{align}
N_\gamma \defeq \frac{2}{I_{1-\gamma^2}\(\frac{d-1}{2},\frac{1}{2}\)} = 2\mathcal N(\gamma,d).
\end{align}

\begin{lemma}[Existence of a Lonely Hyperspherical Cone]
\label{lowerbound:lem:Lonely}
Let $\mu = \{y_1,\dots,y_n\} \subseteq \B$ be a collection of $n$ points. If $n < N_\gamma$ then there exists a point $\wtilde \in \partial\B$ such that its $\gamma$-spherical cone does not contain any of the $y_i$'s, i.e.,
\begin{align}
\forall \mu =\{y_1,\dots,y_n\} \subseteq \B \quad \text{s.t.} \quad n < N_\gamma, \; \exists \wtilde \in \partial \B \quad \text{such that}\quad \forall y \in \{y_1,\dots,y_n\}, \; y \not \in \Ct_\gamma\( \wtilde\).
\end{align}
\end{lemma}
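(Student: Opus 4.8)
The plan is a first-moment / union-bound argument: draw $\wtilde$ uniformly at random from the sphere $\partial\B$ and show that, under the hypothesis $n < N_\gamma$, the probability that some $y_i$ lands in $\Ct_\gamma(\wtilde)$ is strictly less than one, so that some realization of $\wtilde$ avoids all of them.

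First I would reduce the problem to directions only. We may discard any $y_i = \vec 0$, since the origin lies in no hyperspherical sector (the defining ratio $\frac{x^\top w}{\|x\|_2\|w\|_2}$ in \cref{lowerbound:def:hyper} is undefined at $x=\vec 0$), and such points are therefore vacuously excluded from $\Ct_\gamma(\wtilde)$ for every $\wtilde$. For the remaining points set $u_i \defeq y_i/\|y_i\|_2 \in \partial\B$. Since $\|\wtilde\|_2 = 1$, the membership condition $y_i \in \Ct_\gamma(\wtilde)$ of \cref{lowerbound:def:hyper} reads $u_i^\top \wtilde \geq \gamma$, so it depends only on the direction $u_i$. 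Thus it suffices, for each fixed unit vector $u_i$, to bound the probability of the spherical cap $E_i \defeq \{\, v \in \partial\B : u_i^\top v \geq \gamma \,\}$ under the uniform measure on $\partial\B$.

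Next I would evaluate $\Pro(\wtilde \in E_i)$. The radial cone over $E_i$ is precisely the hyperspherical sector $\Ct_\gamma(u_i)$, so integrating in polar coordinates gives $\Pro(\wtilde \in E_i) = \Vol(\Ct_\gamma(u_i))/\Vol(\B)$, and by \cref{fact:VolumeSector} this equals $\frac12 I_{1-\gamma^2}\(\frac{d-1}{2},\frac{1}{2}\)$, the same value for every $i$ (as rotational invariance forces). A union bound over $i \in [n]$ then yields
\[
\Pro\(\exists\, i \in [n] : y_i \in \Ct_\gamma(\wtilde)\) \;\le\; \sum_{i=1}^{n} \Pro(\wtilde \in E_i) \;=\; \frac{n}{2}\, I_{1-\gamma^2}\(\frac{d-1}{2},\frac{1}{2}\) \;<\; \frac{N_\gamma}{2}\, I_{1-\gamma^2}\(\frac{d-1}{2},\frac{1}{2}\) \;=\; 1,
\]
where the strict inequality uses $n < N_\gamma = 2 / I_{1-\gamma^2}\(\frac{d-1}{2},\frac{1}{2}\)$. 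Hence the event that $\Ct_\gamma(\wtilde)$ contains none of the $y_i$'s has positive probability, so a vector $\wtilde \in \partial\B$ with the desired property exists, which is exactly the claim.

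There is no serious obstacle; the only point that needs a little care is the identity $\Pro(\wtilde \in E_i) = \Vol(\Ct_\gamma(u_i))/\Vol(\B)$ that lets us invoke \cref{fact:VolumeSector} verbatim. If one prefers to avoid reasoning with surface measure on $\partial\B$, an equivalent route is to sample $Z$ uniformly from the ball $\B$ and set $\wtilde \defeq Z/\|Z\|_2$: then $\wtilde$ is uniform on $\partial\B$ and $\Pro(y_i \in \Ct_\gamma(\wtilde)) = \Pro(Z \in \Ct_\gamma(u_i)) = \Vol(\Ct_\gamma(u_i))/\Vol(\B)$ directly, after which the same union bound closes the argument.
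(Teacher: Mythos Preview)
Your proof is correct and is essentially the paper's argument phrased probabilistically: the paper takes the union of the sectors $\Ct_\gamma(y_i)$, bounds its volume by $n\,\Vol(\Ct_\gamma(y_1))<\Vol(\B)$ via \cref{fact:VolumeSector}, extracts an uncovered point $w$, normalizes it, and then flips the symmetric condition $\wtilde\notin\Ct_\gamma(y_i)\Leftrightarrow y_i\notin\Ct_\gamma(\wtilde)$; your union bound on a uniformly random $\wtilde\in\partial\B$ is exactly this volume inequality divided by $\Vol(\B)$, with the symmetry step absorbed into your definition of the caps $E_i$.
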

\begin{proof}
This follows from a geometrical argument; the idea is that the hyperspherical cones around $y_1,\dots,y_n$ with parameter $\gamma$ are not sufficient to cover the whole hypersphere, leaving a ``gap''. A point in that gap, which we denote\footnote{We save the notation $\wtilde$ for its normalization, i.e., $\wtilde = \frac{w}{\| w\|_2}$.} with $w \neq 0$  is not covered by any of the hyperspherical cones 
$\{\Ct_\gamma\(y_1\),\dots, \Ct_\gamma\( y_n\) \} \subseteq \B$, which in turn means that the hyperspherical cone $\Ct_\gamma(w)$ around such point $w$ cannot cover any of the $y$'s.

More formally, consider the hyperspherical cones $\Ct_\gamma\(y_1\),\dots, \Ct_\gamma\( y_n\)$. The volume of their union is at most
\begin{align}
\text{Vol}\(\Ct_\gamma\(y_1\) \cup \dots \cup \Ct_\gamma\( y_n\)  \) \leq \sum_{i=1}^n \text{Vol}\( \Ct_\gamma(y_i)\) = n \text{Vol}\(\Ct_\gamma\(y_1\)\).
\end{align}

We have
\begin{align}
 \frac{\Vol(\B)}{\text{Vol}\(\Ct_\gamma\(y_1\)\)} =  \frac{2}{I_{1-\gamma^2}\(\frac{d-1}{2},\frac{1}{2}\)} = N_\gamma
\end{align}
where the first equality follows from \fullref{fact:VolumeSector}.
It is easily seen that if $n < N_\gamma$ then there must exist a point $w \neq 0, w \in\B$ (in fact, a whole subset of $\B$ of non-zero volume) not covered by any spherical sector around the $y$'s, i.e.,
\begin{align}
\exists w \in \B \quad  \text{such that} \quad w \not \in \bigcup_{i=1}^n \Ct_\gamma(y_i).
\end{align}

Now denote with $\wtilde \defeq \frac{w}{\| w \|_2}$ its normalization; it follows by definition that $\wtilde \not \in \bigcup_{i=1}^n \Ct_\gamma(y_i)$ and $\wtilde \in \partial B$.  Consider the spherical cone around $\wtilde$, i.e., consider $\Ct_\gamma(\wtilde)$. By symmetry we have that none of the $y_i$'s can be in $\Ct_\gamma(\wtilde)$. This is because $\wtilde \not \in \Ct_\gamma(y_i)$ means
\begin{align}
\frac{y_i^\top\wtilde}{\| y_i \|_2\| \wtilde \|_2}  < \gamma 
\end{align} 
which is equivalent to saying $y_i \not \in \Ct_\gamma(\wtilde)$, and this can be repeated for every $i \in [n]$. 
\end{proof}

We also need the following closely related result in the positive orthant.
\begin{lemma}[Existence of a Lonely Spherical Cone in the Positive Orthant]
\label{lowerbound:lem:LonelyPlus}
Let $\mu = \{y_1,\dots,y_n\} \subseteq \B^+$ be a collection of $n$ points. If $n < 2^{-d}N_\gamma$ then there exists a point $\wtilde \in \partial\B^+$ that also satisfies $e^\top_i \wtilde > 0, \; \forall i \in[d]$ such that its $\gamma$-spherical cone does not contain any of the $y_i$'s, i.e.,
\begin{align}
\forall \mu =\{y_1,\dots,y_n\} \subseteq \B^+, \; n < 2^{-d}N_\gamma, \; \exists \wtilde \in \partial\B^+, \; e^\top_i \wtilde > 0 \; \forall i \in[d], \\
\quad \text{such that}\quad \forall y \in \{y_1,\dots,y_n\}, \; y \not \in \C_\gamma\( \wtilde\).
\end{align}
\end{lemma}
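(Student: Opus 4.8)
The plan is to reduce \cref{lowerbound:lem:LonelyPlus} to the already-proven \cref{lowerbound:lem:Lonely} by a reflection (sign-flip) argument. The key observation is that a collection of $n$ points in the positive orthant ball $\B^+$ can be ``symmetrized'' into a collection of $2^d n$ points in the full ball $\B$ by taking, for each $y_i$ and each sign pattern $\sigma \in \{-1,+1\}^d$, the reflected point $\sigma \odot y_i$ (coordinatewise product). If $n < 2^{-d} N_\gamma$ then this symmetrized set has fewer than $N_\gamma$ points, so \cref{lowerbound:lem:Lonely} produces a direction $\wtilde \in \partial\B$ whose $\gamma$-spherical cone $\Ct_\gamma(\wtilde)$ avoids all $2^d n$ reflected points. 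The point of symmetrizing is that the ``lonely'' direction $\wtilde$ thus obtained is automatically robust to sign flips of the coordinates of $\wtilde$ itself: flipping the signs of $\wtilde$'s coordinates is the same as flipping the signs of all the $y_i$'s, which we have already covered.

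Concretely, first I would fix $\mu = \{y_1,\dots,y_n\}\subseteq\B^+$ with $n < 2^{-d}N_\gamma$ and define the symmetrized set $\mu' = \{ \sigma\odot y : \sigma\in\{-1,+1\}^d,\ y\in\mu\}\subseteq\B$, noting $|\mu'|\le 2^d n < N_\gamma$ (the inequality is strict even after the possible collapsing of duplicates). Apply \cref{lowerbound:lem:Lonely} to $\mu'$ to get $\wtilde\in\partial\B$ with $y'\notin\Ct_\gamma(\wtilde)$ for every $y'\in\mu'$. Next I would let $|\wtilde|$ denote the coordinatewise-absolute-value vector, which lies in $\partial\B^+$; because the spherical-cone membership condition $\frac{x^\top w}{\|x\|_2\|w\|_2}\ge\gamma$ only involves the inner product $x^\top w$, for each original $y\in\mu$ we have $y^\top|\wtilde| = (\sigma^*\odot y)^\top \wtilde$ where $\sigma^*$ is the sign pattern of $\wtilde$'s coordinates, and $\sigma^*\odot y\in\mu'$, so $y^\top|\wtilde| < \gamma\|y\|_2\|\wtilde\|_2$, i.e. $y\notin\Ct_\gamma(|\wtilde|)$. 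Thus $|\wtilde|$ is a lonely direction in $\partial\B^+$.

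The remaining points are minor: since the containment $\C_\gamma(w)\subseteq\Ct_\gamma(w)$ always holds (the cap condition $\frac{x^\top w}{\|w\|_2}\ge\gamma$ implies the cone condition $\frac{x^\top w}{\|x\|_2\|w\|_2}\ge\gamma$ because $\|x\|_2\le 1$ on $\B$), the conclusion $y\notin\Ct_\gamma(|\wtilde|)$ immediately gives $y\notin\C_\gamma(|\wtilde|)$ as required by the statement. The one slightly delicate requirement is the strict positivity $e_i^\top\wtilde>0$ for all $i$; I would handle this by first perturbing: \cref{lowerbound:lem:Lonely}'s proof actually produces a whole open subset of $\B$ of positive volume of lonely directions, so one can pick $w$ in that set with all coordinates nonzero (the zero-coordinate locus has measure zero), and then $|\wtilde|$ has all coordinates strictly positive. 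I expect the main (though still modest) obstacle to be stating this last strict-positivity step cleanly, since it requires invoking the ``positive volume'' refinement inside the proof of \cref{lowerbound:lem:Lonely} rather than just its statement; everything else is a direct reflection argument with no computation.
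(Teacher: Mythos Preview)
Your argument is correct but takes a genuinely different route from the paper's. The paper does not reduce to \cref{lowerbound:lem:Lonely} via symmetrization; instead it repeats the volume argument directly inside the positive orthant: it bounds $\Vol\big(\bigcup_i (\Ct_\gamma(y_i)\cap\B^+)\big)\le n\,\Vol(\Ct_\gamma(y_1))$ and compares with $\Vol(\B^+)=2^{-d}\Vol(\B)$, yielding an uncovered set of positive volume in $\B^+$, from which one picks $w$ with all coordinates strictly positive and normalizes. Your reflection trick (symmetrize $\mu$ over all $2^d$ sign patterns, apply \cref{lowerbound:lem:Lonely} as a black box, then take the coordinatewise absolute value of the resulting direction) is a clean reduction and the inner-product identity $y^\top|\wtilde|=(\sigma^*\odot y)^\top\wtilde$ makes it go through; its cost is exactly the point you flagged, namely that for the strict positivity $e_i^\top\wtilde>0$ you must peek inside the proof of \cref{lowerbound:lem:Lonely} to use the positive-volume refinement. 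The paper's direct approach gets that strict positivity for free (the uncovered positive-volume subset already sits in $\B^+$, and its intersection with the open orthant is nonempty), so it is slightly more self-contained; your approach has the appeal of reusing the earlier lemma rather than re-running its computation.
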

\begin{proof}
The proof for the positive orthant (i.e., restricted to $\B^+$) is nearly identical to \fullref{lowerbound:lem:Lonely}.
Consider the hyperspherical sectors intersected with $\B^+$, i.e., $\Ct_\gamma\(y_1\) \cap \B^+,\dots, \Ct_\gamma\( y_n\)\cap \B^+$. The volume of their union is at most
\begin{align}
\text{Vol}\((\Ct_\gamma\(y_1\) \cap \B^+) \cup \dots \cup (\Ct_\gamma\( y_n\)\cap \B^+)  \) & \leq \sum_{i=1}^n \text{Vol}\( \Ct_\gamma(y_i) \cap \B^+\) \\
& \leq \sum_{i=1}^n \text{Vol}\( \Ct_\gamma(y_i) \) \\
& = n \text{Vol}\(\Ct_\gamma\(y_1\)\).
\end{align}

From geometry we know that the volume of the hypersphere in its positive orthant is $2^{-d}$ times the volume of the hypersphere: 
\begin{align}
\frac{\Vol(\B^+)}{\text{Vol}\(\Ct_\gamma\(y_1\)\)} = \frac{2^{-d}\Vol(\B)}{\text{Vol}\(\Ct_\gamma\(y_1\)\)} =  2^{-d}\frac{2}{I_{1-\gamma^2}\(\frac{d-1}{2},\frac{1}{2}\)} = 2^{-d}\cdot N_\gamma.
\end{align}
where the second equality follows from \fullref{fact:VolumeSector}.
It is easily seen that if $n < 2^{-d}N_\gamma$ then there must exist a whole subset of $\B^+$ of non-zero volume  not covered by any hyperspherical cone around the $y$'s, which allows us to claim
\begin{align}
\exists w \in \B^+, \; w^\top e_i > 0, \forall i\in[d],  \quad  \text{such that} \quad w \not \in \bigcup_{i=1}^n \Ct_\gamma(y_i).
\end{align}

Now denote with $\wtilde \defeq \frac{w}{\| w \|_2}$ its normalization; it follows by definition that $\wtilde \not \in \bigcup_{i=1}^n \Ct_\gamma(y_i)$ and $\wtilde \in \partial \B^+, \; \wtilde^\top e_i > 0, \forall i\in[d]$.  Consider the hyperspherical cone around $\wtilde$, i.e., consider $\Ct_\gamma(\wtilde)$. By symmetry we have that none of the $y_i$'s can be in $\Ct_\gamma(\wtilde)$. This is because $\wtilde \not \in \Ct_\gamma(y_i)$ means
\begin{align}
\frac{y_i}{\| y_i \|_2}^\top \frac{\wtilde}{\|  \wtilde \|_2} < \gamma 
\end{align} 
which is equivalent to saying $y_i \not \in \Ct_\gamma(\wtilde)$, and this can be repeated for every $i \in [n]$. Since $\C_\gamma(\wtilde) \subset  \Ct_\gamma(\wtilde)$, the thesis follows.
\end{proof}
Building on \fullref{lowerbound:lem:Lonely}, we can actually show there exists \emph{two symmetric} hyperspherical sectors that do not contain any of $y$'s. 

\begin{lemma}[Existence of two Symmetric Hyperspherical Cones]
\label{lowerbound:lem:Existence}
Let $\{y_1,\dots,y_n\} \subseteq \B$ be a collection of $n$ points. If $n < N_\gamma/2$ then there exists a point $\wtilde \in \partial\B$ such that the $\gamma$-hyperspherical cones around $\wtilde$ and $-\wtilde$ do not contain any of the $y's$, i.e., 
\begin{align}
\exists \wtilde \in \partial\B \quad \text{such that}\quad y_i \not \in \Ct_\gamma\( \wtilde\) \cup \Ct_\gamma\( -\wtilde\), \quad \quad \forall  i \in [n],
\end{align}
so in particular, $y_i \not \in \C_\gamma\( \wtilde\) \cup \C_\gamma\( -\wtilde\), \forall i \in [n]$.
\end{lemma}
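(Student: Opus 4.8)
The plan is to deduce the statement from \cref{lowerbound:lem:Lonely} by a symmetrization trick. Instead of applying that lemma to the $n$ points $y_1,\dots,y_n$, I would apply it to the \emph{doubled} collection $\mu' \defeq \{y_1,\dots,y_n,-y_1,\dots,-y_n\}$, which still lies in $\B$ since $\|-y_i\|_2 = \|y_i\|_2 \le 1$. Its cardinality is at most $2n$, and the hypothesis $n < N_\gamma/2$ gives $2n < N_\gamma$, so \cref{lowerbound:lem:Lonely} produces a direction $\wtilde \in \partial\B$ whose $\gamma$-spherical cone contains none of the points of $\mu'$; in particular $y_i \notin \Ct_\gamma(\wtilde)$ and $-y_i \notin \Ct_\gamma(\wtilde)$ for every $i \in [n]$.

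The next step is to rewrite the condition on $-y_i$ as a condition on the antipodal cone. From \cref{lowerbound:def:hyper} and the identity $\frac{(-y_i)^\top \wtilde}{\|-y_i\|_2\|\wtilde\|_2} = \frac{y_i^\top(-\wtilde)}{\|y_i\|_2\|\wtilde\|_2}$, membership $-y_i \in \Ct_\gamma(\wtilde)$ is equivalent to $y_i \in \Ct_\gamma(-\wtilde)$; hence $-y_i \notin \Ct_\gamma(\wtilde)$ means $y_i \notin \Ct_\gamma(-\wtilde)$. Combining with $y_i \notin \Ct_\gamma(\wtilde)$ yields $y_i \notin \Ct_\gamma(\wtilde) \cup \Ct_\gamma(-\wtilde)$ for all $i \in [n]$, which is the claim for sectors. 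The claim for caps then follows from the inclusion $\C_\gamma(w) \subseteq \Ct_\gamma(w)$ (for $w \neq 0$): if $\|x\|_2 \le 1$ and $x^\top w/\|w\|_2 \ge \gamma > 0$, dividing by $\|x\|_2 \le 1$ only increases the left-hand side, so $x^\top w/(\|x\|_2\|w\|_2) \ge \gamma$ as well, and therefore $y_i \notin \C_\gamma(\wtilde) \cup \C_\gamma(-\wtilde)$.

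I do not expect a genuine obstacle here: the only points needing care are the elementary bookkeeping $2n < N_\gamma \iff n < N_\gamma/2$ and the degenerate case $y_i = 0$, where the ratio defining sector membership is undefined. For the latter I would adopt the same convention implicit in \cref{lowerbound:lem:Lonely} (the zero vector lies in no spherical sector, and since $-0=0$ the symmetrization contributes nothing for that index), so the conclusion holds vacuously for that point. The sign-flip identity and the inclusion $\C_\gamma \subseteq \Ct_\gamma$ are really the entire content of the argument.
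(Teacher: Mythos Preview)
Your proposal is correct and follows essentially the same approach as the paper: the paper also applies \cref{lowerbound:lem:Lonely} to the doubled set $\{y_1,\dots,y_n,-y_1,\dots,-y_n\}$, uses $2n < N_\gamma$, and then reads off the two inequalities $y_i^\top\wtilde/\|y_i\|_2 < \gamma$ and $-y_i^\top\wtilde/\|y_i\|_2 < \gamma$ to conclude. Your write-up is in fact slightly more careful than the paper's, as you make explicit the sign-flip identity $-y_i\in\Ct_\gamma(\wtilde)\Leftrightarrow y_i\in\Ct_\gamma(-\wtilde)$, the inclusion $\C_\gamma\subseteq\Ct_\gamma$, and the $y_i=0$ degeneracy.
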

\begin{proof}
Consider the augmented set $\{y_1,\dots,y_n,-y_1,\dots,-y_n\}$. Then \fullref{lowerbound:lem:Lonely} applied to this set ensures that if $2n < N_\gamma$ then there exists a point $\wtilde \in \B$ with unit norm $\| \wtilde \|_2 = 1$ such that 
\begin{align}
y \not \in \Ct_\gamma\( \wtilde\), \quad \quad \forall y \in \{y_1,\dots,y_n, -y_1,\dots,-y_n\}.
\end{align}
This means
\begin{align}
\frac{y_i^\top \wtilde}{\| y_i \|_2 } & < \gamma \rightarrow y_i^\top \wtilde <  \| y_i \|_2\gamma  < \gamma\\
\frac{-y_i^\top \wtilde}{\| y_i \|_2 } & < \gamma \rightarrow -y_i^\top \wtilde < \| y_i \|_2 \gamma < \gamma.
\end{align} 
and so in particular, $y_i \not \in \C_\gamma\( \wtilde\) \cup \C_\gamma\( -\wtilde\), \forall i \in [n]$.
\end{proof}

\newpage
\section{Proof of Theorem \ref{lowerbound:thm:OPE:Policy-Induced}}

\subsection{MDP Class}
\label{sec:OPE:Policy:class}

We consider a class $\M$ of MDPs sharing the same state-space $\mathcal S$, action space $\ActionSpace$, discount factor $\gamma$, and transition function $p$. The MDPs differ only in the reward function $r$ and the prescribed target policy. 
We first describe the state and action space, the discount factor and the transition probabilities, which are fixed across all MDPs in the class $\M$.
\paragraph{State space} 
Each state $s$ in the state space $\StateSpace$ can be identified by a point in the Euclidean ball, i.e., we write $\StateSpace = \B$. The starting state is the origin $s^\star = \vec 0\in \B$.

\paragraph{Action space}
In each state $s \in \StateSpace$ the action set coincides with the unit ball, i.e., $\forall s\in \StateSpace, \ActionSpace_s = \B$.  

\paragraph{Discount factor}
The discount factor $\gamma$ is in the interval $(0,1)$.

\paragraph{Feature map}
The feature extractor returns the point in the Euclidean ball corresponding to the action chosen in the selected state. Mathematically  $$\forall (s,a),\;
\phi(s,a)  = a.$$
Since the available actions are always a subset of $\B$, it is easy to see that the image of the feature map is the set $\B$ and $\|\phi(\cdot,\cdot)\|_2 \leq 1$ holds.

\paragraph{Transition Function}
The transition function is deterministic and is only a function of the action chosen in the current state; it is thus convenient to denote with $s^+(a)$ the unique successor state reached upon taking action $a$ (in any state $s$). The successor state is equivalent to the action taken, i.e., 
\begin{align}
\label{eqn:TransitionFunctionMultiOffPolicy}
\successor(a) = a.
\end{align} 
Since $a \in \B$, we have that $\successor(a) \in \B$, which is a valid state, and so the above display identifies a valid transition function.

\subsection{Instance of the Class}
\label{sec:OPE:Policy:instance}
Every MDP in the class $\M$ can be identified by a vector $w \in \partial \B$ and a sign $+$ or $-$, and is denoted by $M_{w,+}$ or $M_{w,-}$, respectively. 
We now describe the MDP-specific reward function and target policy.

\paragraph{Target policy}

The target policy depends on the vector $w\in \partial\B$ that identifies the MDP $M_{w}$, but not on the sign $+$ or $-$ that distinguishes $M_{w,+}$ from $M_{w,-}$ (therefore, knowledge of the target policy does not reveal the exact MDP in the class). In addition, the target policy is deterministic.
Let $\C_\gamma(w)$ be a $\gamma$-hyperspherical cap in direction $w$ (see \fullref{lowerbound:def:hyper}). Then the target policy is defined as  
\begin{align}
\label{eqn:TargetPolicy}
\pi_w(s) = 
\begin{cases}
\frac{1}{\gamma} (s^\top w) w, \quad \quad \quad & \text{if} \; s\not \in \C_\gamma(w)\cup \C_\gamma(-w), \\
s \quad \quad \quad & \text{otherwise}.
\end{cases}
\end{align} 

Since $s \in \B$ and $w \in \partial\B$, the linear algebra operations in the above definition are all well defined.
In addition, $\text{if} \; s\not \in \C_\gamma(w)\cup \C_\gamma(-w)$ we have $\| \frac{1}{\gamma} (s^\top w) w \|_2 = \frac{|s^\top w|}{\gamma}  \|  w \|_2 \leq \frac{\gamma}{\gamma}\| w \|_2  = 1$ using the definition of hyperspherical cap in \fullref{lowerbound:def:hyper}, so $\frac{1}{\gamma} (s^\top w) w \in \B$ as well. This means the action chosen by the target policy $\pi_w(s)$ lives in $\B$ and it is thus a valid action, so the definition of the target policy in the above display is well posed. 

\paragraph{Reward function}
The reward function depends on both the vector $w \in \partial\B$ and on the sign $+$ or $-$ identifying the MDP and it is defined as follows:
\rdef

Notice that this is a valid definition as the linear algebra operations are well defined. 

\begin{remark}[Opposite MDPs]
The prescribed target policy is the same on $M_{w,+}$ and $M_{w,-}$. In particular, the two MDPs have identical transition functions but opposite reward functions. In addition, notice that $M_{w,+} = M_{-w,-}$ and $M_{w,-} = M_{-w,+}$.
\end{remark}
\begin{remark}[Planning to Explore]
\label{rem:Planning}
Since the dynamics are the same for all MDPs $\in \M$, and the oracle has access to each one of them, it must know the  dynamics, i.e., the oracle knows how to navigate the environment.
\end{remark}

\subsection{Realizability}
We compute the action value function for each MDP (and its associated policy) in the class, showing realizability. 
\begin{lemma}[$Q^\pi$ Realizability]
\label{lowerbound:lem:OPE:Policy:Realizability}
For any $w \in \partial\B$ let $Q_{w,+}$ and $Q_{w,-}$ be the action-value functions of $\pi_w$ (the target policy) on $M_{w,+}$ and $M_{w,-}$, respectively. Then it holds that
\begin{align}
\label{eqn:OPE:Policy:Solution}
\forall (s,a), \quad \begin{cases}
Q_{w,+}(s,a) = \phi(s,a)^\top (+w) \quad \text{on} \; M_{w,+},\\
Q_{w,-}(s,a) = \phi(s,a)^\top (-w) \quad \text{on} \; M_{w,-}.
\end{cases}
\end{align}
\end{lemma}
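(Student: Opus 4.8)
The plan is to guess the closed form of $Q^{\pi_w}$ and verify it by a direct fixed-point computation, then invoke uniqueness. Recall from \cref{sec:Defs} that for any policy $\pi$ the action-value function $Q^\pi$ exists, is bounded, and is the unique bounded solution of $Q=\T^\pi Q$, since $\T^\pi$ is a $\gamma$-contraction in $\|\cdot\|_\infty$ on bounded functions. Hence it suffices to check that the candidate $Q_{w,+}(s,a)\defeq \phi(s,a)^\top(+w)=a^\top w$ (recall $\phi(s,a)=a$) satisfies $Q_{w,+}=\T^{\pi_w}Q_{w,+}$ on $M_{w,+}$, and likewise that $Q_{w,-}(s,a)\defeq a^\top(-w)$ is a fixed point of $\T^{\pi_w}$ on $M_{w,-}$. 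Boundedness is immediate: $|a^\top w|\le\|a\|_2\|w\|_2\le 1$ since $a\in\B$ and $w\in\partial\B$.

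For the verification on $M_{w,+}$ I would use that the dynamics are deterministic with $s^+(a)=a$ (\cref{eqn:TransitionFunctionMultiOffPolicy}), so $(\T^{\pi_w}Q_{w,+})(s,a)=r_{w,+}(s,a)+\gamma\, Q_{w,+}\big(a,\pi_w(a)\big)=r_{w,+}(s,a)+\gamma\,\pi_w(a)^\top w$, and then split on whether the successor state $a$ lies in $\C_\gamma(w)\cup\C_\gamma(-w)$. If $a\notin\C_\gamma(w)\cup\C_\gamma(-w)$, then $\pi_w(a)=\tfrac{1}{\gamma}(a^\top w)w$ by \cref{eqn:TargetPolicy}, so $\gamma\,\pi_w(a)^\top w=\gamma\cdot\tfrac{1}{\gamma}(a^\top w)\|w\|_2^2=a^\top w$ using $\|w\|_2=1$, while $r_{w,+}(s,a)=0$; the right-hand side is thus $a^\top w=Q_{w,+}(s,a)$. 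If instead $a\in\C_\gamma(w)\cup\C_\gamma(-w)$, then $\pi_w(a)=a$, so the bootstrapped term is $\gamma\,a^\top w$, while $r_{w,+}(s,a)=(1-\gamma)a^\top w$; summing again gives $a^\top w=Q_{w,+}(s,a)$. So $Q_{w,+}$ is a bounded fixed point of $\T^{\pi_w}$, hence equals $Q^{\pi_w}$ on $M_{w,+}$.

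For $M_{w,-}$ one can either repeat the two-case computation verbatim with $-w$ in place of $w$ — the reward is $0$ outside $\C_\gamma(w)\cup\C_\gamma(-w)$ and $(1-\gamma)a^\top(-w)$ inside, while the cap set $\C_\gamma(w)\cup\C_\gamma(-w)$ and the target policy $\pi_w$ are the same — or, more cheaply, observe that $M_{w,-}=M_{-w,+}$ and $\pi_w=\pi_{-w}$ (since $\tfrac{1}{\gamma}(s^\top(-w))(-w)=\tfrac{1}{\gamma}(s^\top w)w$ and $\C_\gamma(-w)\cup\C_\gamma(w)$ is symmetric), so the claim for $M_{w,-}$ is precisely the already-proved claim for $M_{-w,+}$. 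I do not expect any real obstacle here; the only care needed is in the case split, where the spherical-cap condition is exactly what (i) makes $\tfrac{1}{\gamma}(a^\top w)w$ a legal action in $\B$ and (ii) makes the reward $(1-\gamma)a^\top w$ compensate for the factor-$\gamma$ mismatch in the bootstrapped term so that $\T^{\pi_w}Q_{w,+}=Q_{w,+}$ holds identically.
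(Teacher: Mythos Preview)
Your proof is correct and follows essentially the same route as the paper: verify that the candidate $a^\top(\pm w)$ is a fixed point of $\T^{\pi_w}$ by the same two-case split on $a\in\C_\gamma(w)\cup\C_\gamma(-w)$, with the identical arithmetic. Your symmetry shortcut $M_{w,-}=M_{-w,+}$, $\pi_w=\pi_{-w}$ for the second half is a valid minor economy (the paper states this as a remark but then repeats the computation explicitly).
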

\begin{proof}
Consider $M_{w,+}$. At all $(s,a)$, $Q_{w,+}$ must satisfy the Bellman evaluation equations for $\pi_w$. In particular, consider applying $\T^{\pi_w}_{w,+}$ to $Q_{w,+}$
\begin{align*}
(\T^{\pi_w}_{w,+}Q_{w,+})(s,a) & = r_{w,+}(s,a) + \gamma Q_{w,+}(\successor_{w}(a),\pi_w(\successor_{w}(a))) \\
& = r_{w,+}(s,a) + \gamma Q_{w,+}(a,\pi_w(a))
\numberthis{\label{eqn:OPE:Policy:Bellman}}
\end{align*}
since $\successor_{w}(a) = a$ is the only possible successor state.  We now evaluate the RHS. Two cases are possible: if $a \not \in \C_\gamma(w)\cup \C_\gamma(-w)$ then the reward function is zero and the rhs of the above equation reads
\begin{align}
\gamma Q_{w,+}(a,\pi_w(a)) & = \gamma \phi(a,\pi_w(a))^\top (+w) \\
& = \gamma \( \frac{1}{\gamma} (a^\top w) w \)^\top (+w) \\
& = a^\top w \\
& = \phi(s,a)^\top w \\
& = Q_{w,+}(s,a). 
\end{align}
If conversely $a \in \C_\gamma(w)\cup \C_\gamma(-w)$ then the rhs reads
\begin{align}
(1-\gamma)a^\top w + \gamma Q_{w,+}(a,\pi_w(a)) & = (1-\gamma)a^\top w  + \gamma \phi(a,\pi_w(a))^\top (+w) \\
& = (1-\gamma)a^\top w + \gamma a^\top w  \\
& = a^\top w \\
& = \phi(s,a)^\top w \\
& = Q_{w,+}(s,a).
\end{align}
Thus $Q_{w,+}$ is the action value function of the target policy on $M_{w,+}$.

Similarly we verify that $Q_{w,-}$ solves the Bellman evaluation equation on $M_{w,-}$.
\begin{align}
(\T^{\pi_w}_{w,-}Q_{w,-})(s,a)
& = r_{w,-}(s,a) + \gamma Q_{w,-}(\successor_{w}(a),\pi_w(\successor_{w}(a))) \\
& = r_{w,-}(s,a) + \gamma Q_{w,-}(a,\pi_w(a)).
\end{align}
We now evaluate the RHS. Two cases are possible: if $a \not \in \C_\gamma(w)\cup \C_\gamma(-w)$ then the reward function is zero and the rhs of the above equation reads
\begin{align}
\gamma Q_{w,-}(a,\pi_w(a)) & = \gamma \phi(a,\pi_w(a))^\top (-w) \\
& = \gamma \( \frac{1}{\gamma} (a^\top w) w \)^\top (-w) \\
& = -a^\top w \\
& = \phi(s,a)^\top (-w) \\
& = Q_{w,-}(s,a).
\end{align}
If conversely $a \in \C_\gamma(w)\cup \C_\gamma(-w)$ then the rhs reads
\begin{align}
(1-\gamma)a^\top(-w) + \gamma Q_{w,-}(a,\pi_w(a)) & = (1-\gamma)a^\top(-w) +  \gamma \phi(a,\pi_w(a))^\top (-w) \\
& = (1-\gamma)a^\top(-w) + \gamma a^\top (-w) \\
& = -a^\top w \\
& = \phi(s,a)^\top (-w) \\
& = Q_{w,-}(s,a).
\end{align}
\end{proof}

\subsection{Proof of Theorem \ref{lowerbound:thm:OPE:Policy-Induced}}
\begin{proof}
Consider the MDP class described in \cref{sec:OPE:Policy:class} and \cref{sec:OPE:Policy:instance}. First, from \fullref{lowerbound:lem:OPE:Policy:Realizability} we know that every instance of the OPE problem satisfies \fullref{lowerbound:asm:OPE:MDP} with a given feature map $\phi(\cdot,\cdot)$.

Next, we know that the set of query points is induced by one or more policies. However notice that any fixed policy visits the same states regardless of the MDP because the transition function does not depend on the particular MDP at hand. Therefore, the policy-induced query set is also policy-free. Thus, it suffices to do the proof for a policy free query set $\mu$. In addition, the size of the policy-free query set $\mu$ is exactly $n$ since the MDP is deterministic.

From \fullref{lowerbound:lem:Existence} we know that if $|\mu| < I^{-1}_{1-\gamma^2}(\frac{d-1}{2},\frac{1}{2})$ then $\exists \wtilde \in \partial\B^{}$  such that $\forall (s,a) \in \mu,  a \not \in \C_\gamma(\wtilde) \cup  \C_\gamma(-\wtilde)$. Consider the two associated MDPs $M_{\wtilde,+}$ and $M_{\wtilde,-}$.  Notice that the transition function is by construction identical on $M_{\wtilde,+}$ and $M_{\wtilde,-}$ (and so is the target policy) while their reward functions are zero at any $(s,a)\in \mu$:
\begin{align}
\begin{cases}
r_{\wtilde,+}(s,a) & = r_{\wtilde,-}(s,a) \\
p_{\wtilde,+}(s,a) & = p_{\wtilde,-}(s,a).
\end{cases}
\end{align}
This implies that the reward and transition functions in the dataset could have originated from either $M_{\wtilde,+}$ or $M_{\wtilde,-}$, and likewise the target policy does not indicate whether the prediction concerns $M_{\wtilde,+}$ or $M_{\wtilde,-}$.

In $s^\star = \vec 0$ the value of $Q^{\pi_{\wtilde}}(s^\star,\wtilde)$ is $+1$ on $M_{\wtilde,+}$ and $-1$ on $M_{\wtilde,-}$. The agent has thus two choices: either predict a positive value or a negative one. At best, it can randomize between the two, showing the result.
\paragraph{Learning with an online algorithm with knowledge of the target policy}  
Now consider the following online algorithm that submits policy-induced queries. The algorithm first asks the target policy to output an action on the states $\frac{\gamma}{2} e_1,\dots,\frac{\gamma}{2}  e_d$, which is a basis for $\R^d$. This reveals the vector $w$ of the MDP and the position of each hyperspherical cap. Then the agent probes either hyperspherical cap to gain knowledge about the reward function, which identifies the sign of the MDP.

By construction, notice that none of the state vectors $\frac{\gamma}{2} e_1,\dots,\frac{\gamma}{2}  e_d$ is located in an hyperspherical cap of height $\gamma$ since $|(\frac{\gamma}{2} e_i)^\top w| \leq \frac{\gamma}{2} \| e_i \|_2 \| w \|_2 < \gamma$. Since $e_1,\dots,e_d$ is a basis for $\R^d$, we have that $w \neq 0$ cannot be orthogonal to all of them. 
Therefore, there must exist an index $j \in [d]$ such that $e_j^\top w \neq 0$. The action that the target policy outputs $\pi_w(\frac{\gamma}{2} e_j)$ for any such vector that satisfies $\frac{\gamma}{2} e_j^\top w \neq 0$ and is not in $ \C_\gamma(w)\cup \C_\gamma(-w)$  must be non-zero and must be aligned with  $w$. 
Thus, the agent knows the vector $w$ of the MDP (up to a sign). 
Next, it can play $(s^\star,w)$ (a behavioral policy of length $1$ from the initial state $s^\star$) to probe one of the two hyperspherical caps and to observe the reward $\pm (1-\gamma) $ which identifies the sign of the MDP. Since the specific MDP is now precisely identified, the agent can predict the value of any policy, and in particular, the action-value function of the target policy.
\end{proof}

\newpage
\section{Proof of Theorem \ref{lowerbound:thm:BPI:Policy-Induced} and Theorem \ref{lowerbound:thm:BPI:ExponentialSeparation}}
\subsection{MDP Class}
\label{sec:BPI:Policy:class}

We consider a class $\M$ of MDPs sharing the same state-space $\mathcal S$, action space $\ActionSpace$ and discount factor $\gamma > 0$, but different transition function $p$ and reward function $r$. We first describe the state and action space and the discount factor which are fixed across all MDPs in the class $\M$. 

At a high level, each MDP contains a two-armed bandit instance in $s^\star$ (the starting state). There, the learner has two choices: either take the special action $a^\star$ that leads to the terminal state $s^\dagger$ and gives an immediate reward of $\frac{1}{2}$, or take any other action that leads to either an intermediate state $\overline s$ or the terminal state $s^\dagger$. In any case, the agent never gets back to $s^\star$. 

\paragraph{State space} 
The state space can written as the union of three states, a starting state $s^\star$, an intermediate state $\overline s$ and a terminal state $s^\dagger$. Mathematically:
\begin{align}
\StateSpace = \{s^\star,\overline s,s^\dagger\}.
\end{align} 

\paragraph{Action space}
The action space is as follows. In the starting state $s^\star$ the special action $a^\star$ is available in addition to any action in $\B^+$. In $\overline s$, any action in $\B^+$ is available. Finally, in the terminal state $s^\dagger$ only $\vec 0 \in \B^+$ is available. Mathematically:
\begin{align}
\ActionSpace_s \defeq \begin{cases}
\B^+ \cup \{ a^\star \}  & \text{if} \; s = s^\star \\
\B^+ & \text{if} \; s = \overline s  \\
\{\vec 0\}  & \text{if} \; s = s^\dagger.
\end{cases}
\end{align}
Notice that $\vec 0 \in \B^+$.

\paragraph{Discount factor}
The discount factor $\gamma$ is in the interval $(0,1)$.

\paragraph{Feature map}
For this BPI problem the feature map is the $(d+1)$-dimensional vector  defined as follows
\begin{align}
\phi(s,a) = 
\begin{cases}
[a,0] & \text{if}\; a\in \B^+ \\
[\vec 0,1] & \text{if}\; a = a^\star \quad \text{(this action is only available in $s^\star$).}
\end{cases}
\end{align}
We notice that we must have $\| \phi(\cdot,\cdot) \|_2 \leq 1$ and that the feature map only depends on the action.

\subsection{Instance of the Class}
\label{sec:BPI:Policy:instance}

Next we describe the MDP-specific transition and reward functions. Every MDP is identified by a vector $w \in \partial\B^+$ such that $e_i^\top w > 0, \forall i \in [d]$, and by a $+$ and $-$ sign.

\paragraph{Transition Function}
The transition function $p_w$ depends on the vector $w$ that identifies each MDP in the class, but not on the sign $+$ or $-$. Fix the MDP by fixing $w$ (two MDPs correspond to a given choice of $w$). If the agent plays the special action $a^\star$, which is only available in the starting state $s^\star$, it transitions with probability one to the terminal state $s^\dagger$. Otherwise, the transition function is only a function of the action and the successor state is $\overline s$ with some probability, and is otherwise the absorbing state $s^\dagger$. 

Mathematically, if $a = a^\star$ (which implies $s = s^\star$) then
\begin{align*}
p_w(s' = s^\dagger \mid (s^\star,a^\star)) & = 
1 \\
p_w(s' \neq s^\dagger \mid (s^\star,a^\star)) & = 0.
\end{align*} 
This is a valid definition in $(s^\star,a^\star)$. If conversely $a \in \B^+$:
\begin{align*}
p_w(s' = \overline s \mid (s,a)) & = 
\min\{ \frac{1}{\gamma} a^\top w, 1 \} \\
p_w(s' = s^\dagger \mid (s,a)) & = 1 - p_w(s' = \overline s \mid (s,a)).
\numberthis{\label{eqn:Transition_BPI_policy}}
\end{align*} 

Since the probabilities are positive (notice that in particular $a^\top w \geq 0$ since $a\in\B^+$ and $w\in\B^+$) and add up to one, the definition is well posed. In particular, the definition implies that the successor state is always either $\overline s$ or the terminal state $s^\dagger$:
\begin{align}
\forall (s,a): \qquad p_w(s' = s^\star \mid (s,a)) & = 0.
\end{align}

\paragraph{Reward function}
The reward function $r_{w,+}$ or $r_{w,-}$ depends on both the vector $w \in \partial\B^+$ and on the sign $+$ or $-$ that identifies the MDP. It is always $\frac{1}{2}$ if the special action $a^\star$ is taken and otherwise it is everywhere $0$ on $M_{w,-}$ or is positive only in the hyperspherical cap on $M_{w,+}$. Mathematically, it is defined as follows:

On $M_{w,+}$:
\begin{align}
\forall (s,a), \quad r_{w,+}(s,a) \defeq \begin{cases} 
\frac{1}{2} & \text{if} \; (s,a) = (s^\star,a^\star)\\
\max\{a^\top w - \gamma, 0 \} & \text{otherwise.} \;
\end{cases}
\end{align}
On $M_{w,-}$:
\begin{align}
\forall (s,a), \quad r_{w,-}(s,a) \defeq \begin{cases} 
\frac{1}{2} & \text{if} \; (s,a) = (s^\star,a^\star)\\
0 & \text{otherwise.} \;
\end{cases}
\end{align}

\begin{remark}
$M_{w,+}$ and $M_{w,-}$ have identical transition functions but different reward functions. 
\end{remark}
\begin{remark}
The reward functions on $M_{w,+}$ and $M_{w,-}$ differ only when the chosen action is inside the hyperspherical cap $\mathcal C_\gamma(w)$.
\end{remark}

\subsection{Realizability}
We compute the optimal action value function for each MDP in the class $M$, showing realizability.  
\begin{lemma}[$Q^\star$ is Realizable]
\label{lowerbound:lem:BPI:Policy:Realizability}
For any $w\in\partial\B^+$, let $\Qstar_{w,+}$ and $\Qstar_{w,-}$ be the optimal $\Qstar$ values on $M_{w,+}$ and $M_{w,-}$, respectively. Then it holds that
\begin{align}
\label{eqn:BPI:Policy:Solution}
\forall (s,a), \quad \begin{cases}
\Qstar_{w,+}(s,a) = \phi(s,a)^\top [w,\frac{1}{2}] \quad \text{on} \; M_{w,+},\\
\Qstar_{w,-}(s,a) = \phi(s,a)^\top [\vec{0},\frac{1}{2}] \quad \text{on} \; M_{w,-}.
\end{cases}
\end{align}
\end{lemma}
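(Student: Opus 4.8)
The plan is to verify by direct substitution that the two functions displayed in \cref{eqn:BPI:Policy:Solution} solve the Bellman optimality equations of their respective MDPs, $Q = \T_{w,\pm}Q$, and then to invoke uniqueness of the fixed point. This reduction is legitimate because each action set $\ActionSpace_s$ is compact (one of $\B^+$, $\{\vec 0\}$, or $\B^+\cup\{a^\star\}$) and $a\mapsto \phi(s,a)^\top\theta$ is continuous on $\B^+$, so the supremum in $\T_{w,\pm}$ is attained; then $\T_{w,\pm}$ is a $\gamma$-contraction in $\|\cdot\|_\infty$ on bounded functions, its unique bounded fixed point is $\Qstar_{w,\pm}$, and the greedy policy is optimal, by the standard theory of discounted MDPs \citep{puterman1994markov}. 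Hence it suffices to check the fixed-point identity pointwise for the two candidates.

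First I would read off, for each candidate, the one-step greedy values $g(s') \defeq \max_{a'\in\ActionSpace_{s'}} Q(s',a')$ at the three states. On $M_{w,+}$ the candidate assigns $Q(s,a) = \phi(s,a)^\top[w,\tfrac{1}{2}] = a^\top w$ to every $a\in\B^+$ and $Q(s^\star,a^\star) = \tfrac{1}{2}$; since $w\in\partial\B^+\subseteq\B^+$, Cauchy--Schwarz gives $\max_{a\in\B^+}a^\top w = \|w\|_2 = 1$, attained at $a = w$, so $g(\overline s) = 1$, $g(s^\star) = \max\{1,\tfrac{1}{2}\} = 1$, and $g(s^\dagger) = 0$ (only $\vec 0$ is available in $s^\dagger$). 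On $M_{w,-}$ the candidate is $0$ on all $\B^+$-actions and $\tfrac{1}{2}$ at $a^\star$, so $g(\overline s) = g(s^\dagger) = 0$ and $g(s^\star) = \tfrac{1}{2}$.

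Next I would substitute these values into $\T_{w,\pm}$ at every state--action pair. For $(s^\star,a^\star)$ the transition is deterministically to $s^\dagger$, so on both MDPs $\T Q(s^\star,a^\star) = \tfrac{1}{2} + \gamma\, g(s^\dagger) = \tfrac{1}{2}$, matching the candidate. For $a\in\B^+$ (available in $s^\star$ and $\overline s$) the successor is $\overline s$ with probability $\min\{\tfrac{1}{\gamma} a^\top w, 1\}$ and $s^\dagger$ otherwise, per \cref{eqn:Transition_BPI_policy}; on $M_{w,-}$ the reward is $0$ and both continuation values vanish, so $\T Q(s,a) = 0$, as needed. On $M_{w,+}$, using $g(\overline s) = 1$ and $g(s^\dagger) = 0$, one obtains $\T Q(s,a) = \max\{a^\top w - \gamma, 0\} + \gamma\min\{\tfrac{1}{\gamma} a^\top w, 1\}$; splitting on the sign of $a^\top w - \gamma$ gives $0 + a^\top w$ when $a^\top w \le \gamma$ and $(a^\top w - \gamma) + \gamma$ when $a^\top w > \gamma$, and in both regimes this equals $a^\top w = Q(s,a)$.

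The only delicate point is this last case split: the clipping in the transition kernel of \cref{eqn:Transition_BPI_policy} and the thresholding $\max\{a^\top w - \gamma, 0\}$ in $r_{w,+}$ are deliberately matched so that the reward term and the discounted continuation term sum to exactly $a^\top w$ in both regimes, and one must also notice that the continuation value is the same (namely $1$ on $M_{w,+}$) whether the agent sits in $s^\star$ or $\overline s$ --- this is precisely what makes the single linear ansatz $[w,\tfrac{1}{2}]$ self-consistent across states. The $M_{w,-}$ computation is immediate by comparison. Once $Q = \T_{w,\pm}Q$ is verified for both candidates, uniqueness of the bounded fixed point of $\T_{w,\pm}$ yields \cref{eqn:BPI:Policy:Solution}.
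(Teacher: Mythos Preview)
Your proposal is correct and the computation is essentially the one in the paper, with the same case split on $a^\top w \lessgtr \gamma$ and the same arithmetic showing both branches collapse to $a^\top w$. The structural difference is minor: the paper first argues informally that the optimal policy on $M_{w,+}$ is to play $a=w$ forever, computes $\Vstar_{w,+}$ by summing the resulting geometric series, and then performs a single Bellman backup $\Qstar = r + \gamma \E \Vstar$; you instead posit the candidate $Q$, read off the greedy values $g(s')=\max_{a'}Q(s',a')$ directly from it, and verify $\T Q = Q$. Your route is a little more self-contained (you never need to separately justify that $a=w$ is optimal, and you explicitly invoke compactness of the action sets and uniqueness of the bounded fixed point), while the paper's route makes the optimal policy and its return visible up front. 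One small remark: since no transition ever lands in $s^\star$, the value $g(s^\star)$ is never used as a continuation value, so your comment about it equaling $g(\overline s)$ is not actually needed for the fixed-point check.
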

\begin{proof}
We first consider $M_{w,+}$. On any state $\neq s^\dagger$ the optimal policy is to take action $a = w$, achieving a return of $(1-\gamma)$ at every timestep. This way the agent transitions to the state $s = \overline s$ and then stays put there playing action $a = w$. This yields a total return of $1$. In the terminal state $s^\dagger$ action $\vec 0$ is the only available and gives a reward of zero with a self loop. Thus:
\begin{align}
\Vstar_{w,+}(s) = \begin{cases}
1 & \text{if} \; s \neq s^\dagger \\
0 & \text{if} \; s = s^\dagger.
\end{cases}
\end{align}

Now we apply the Bellman operator to $\Vstar_{w,+}$ to compute the optimal action-value function on $M_{w,+}$.

If $(s,a) = (s^\star,a^\star)$ then
\begin{align}
(\T_{w,+} \Vstar_{w,+})(s^\star,a^\star) = r_{w,+}(s^\star,a^\star) + \gamma\E_{s' \sim p_w(s^\star,a^\star)}\Vstar_{w,+}(s') = \frac{1}{2} + 0.
\end{align}
If $(s,a) \neq (s^\star,a^\star)$ and $a  \not \in \C_\gamma(w)$ then by definition $a^\top w < \gamma $ and so
\begin{align}
(\T_{w,+} \Vstar_{w,+})(s,a) = r_{w,+}(s,a) + \gamma\E_{s' \sim p_w(s,a)}\Vstar_{w,+}(s') = 0 + \gamma \( \frac{1}{\gamma}a^\top w\cdot 1 + 0\) = a^\top w.
\end{align}
If $(s,a) \neq (s^\star,a^\star)$ and $a  \in \C_\gamma(w)$ then 
\begin{align}
(\T_{w,+} \Vstar_{w,+})(s,a) = r_{w,+}(s,a) + \gamma\E_{s' \sim p_w(s,a)}\Vstar_{w,+}(s') = (a^\top w - \gamma) + \gamma \cdot 1 = a^\top w.
\end{align}

This shows that the optimal action-value function on $M_{w,+}$ is
\begin{align}
\Qstar_{w,+}(s,a) = \begin{cases}
[a,0]^\top[w,\frac{1}{2}] & \text{if}\; (s,a) \neq (s^\star,a^\star) \\
[\vec 0,1]^\top[w,\frac{1}{2}] & \text{if}\; (s,a) = (s^\star,a^\star)
\end{cases}
\end{align}
which equals $\phi(s,a)^\top[w,\frac{1}{2}]$.

Now we reason on $M_{w,-}$. In $s^\star$, the optimal policy is play $a^\star$ once and transition to the terminal state $s^\dagger$. If conversely $s \neq s^\star$, the maximum attainable return by any policy is $0$. Thus on $M_{w,-}$ the optimal value function reads
\begin{align}
\Vstar_{w,-}(s) = 
\begin{cases}
\frac{1}{2} & \text{if} \; s = s^\star, \\
0 & \text{if} \; s \neq s^\star.
\end{cases}
\end{align}

Now we apply the Bellman optimality operator to $\Vstar_{w,-}$ to obtain the optimal action value function on $M_{w,-}$.

If $(s,a) = (s^\star,a^\star)$
\begin{align}
(\T_{w,-} \Vstar_{w,-})(s^\star,a^\star) = r_{w,-}(s^\star,a^\star) + \gamma\E_{s' \sim p_w(s^\star,a^\star)}\Vstar_{w,-}(s') = \frac{1}{2} + 0.
\end{align}
Conversely, if  $(s,a) \neq (s^\star,a^\star)$
\begin{align}
(\T_{w,-} \Vstar_{w,-})(s,a) = r_{w,-}(s,a) + \gamma\E_{s' \sim p_w(s,a)}\Vstar_{w,-}(s') = 0 = a^\top \vec 0
\end{align}
This shows that on $M_{w,-}$ the optimal action value function is
\begin{align}
\Qstar_{w,-}(s,a) = \begin{cases}
[a,0]^\top[\vec 0,\frac{1}{2}] & \text{if}\; (s,a) \neq (s^\star,a^\star) \\
[\vec 0,1]^\top[\vec 0,\frac{1}{2}] & \text{if}\; (s,a) = (s^\star,a^\star)
\end{cases}
\end{align}
which equals $\phi(s,a)^\top[\vec 0,\frac{1}{2}]$.
This concludes the proof.
\end{proof}

\subsection{Proof of Theorem \ref{lowerbound:thm:BPI:Policy-Induced}}
\begin{proof}
Consider the MDP class described in \cref{sec:BPI:Policy:class} and \cref{sec:BPI:Policy:instance}. First, from \fullref{lowerbound:lem:BPI:Policy:Realizability} we know that every member of the class satisfies \fullref{lowerbound:asm:BPI:MDP} with a given feature map $\phi(\cdot,\cdot)$. 

Notice that any policy $\pi$ induces the same state-actions (possibly with the exception of the singletons $(s^\star,a^\star)$ and $(s^\dagger,\vec 0)$) regardless of the MDP that the learner is interacting with. We can therefore consider the case that the oracle has chosen a policy-free query set $\mu = \{(s,a)\}_{i=1,2,\dots,n}$ of size $n$ in addition to the singletons $(s^\star,a^\star)$ and $(s^\dagger,\vec 0)$ (these singletons do not convey any additional information). Here the $(s,a)$ pairs are the possible state-actions visited by the behavioral policies chosen by the oracle. 

From \fullref{lowerbound:lem:LonelyPlus} we know that if $|\mu| < 2^{-d}I^{-1}_{1-\gamma^2}(\frac{d-1}{2},\frac{1}{2})$ then $\exists \wtilde \in \partial\B^{+}$ such that $e_i^\top w > 0, \forall i \in [d]$ and  $\forall (s,a) \in \mu,  a \not \in \C_\gamma(\wtilde)$. Consider the two associated MDPs $M_{\wtilde,+}$ and $M_{\wtilde,-}$.  Notice that the transition function is by construction identical on $M_{\wtilde,+}$ and $M_{\wtilde,-}$ while their reward functions are zero at any $(s,a)\in \mu$:
\begin{align}
\begin{cases}
r_{\wtilde,+}(s,a) & = r_{\wtilde,-}(s,a) \\ p_{\wtilde,+}(s,a) & = p_{\wtilde,-}(s,a).
\end{cases}
\end{align}
This implies that the transitions and the rewards in the dataset could have originated from either $M_{\wtilde,+}$ or $M_{\wtilde,-}$. Thus in $s^\star$, the batch algorithm has two choices to determine $\pihatstar_{\mathcal D}$: 
choose action $a^\star$ and get a total return of $\frac{1}{2}$ or choose an action $a \neq a^\star$. The second choice  is $\frac{1}{2}$-suboptimal on $M_{\wtilde,-}$, while the first is $\frac{1}{2}$-suboptimal on $M_{\wtilde,+}$. At best, the batch algorithm can randomize between the two, showing the result.
\end{proof}

\subsection{Proof of Theorem \ref{lowerbound:thm:BPI:ExponentialSeparation}}
\begin{proof}
Now consider the following adaptive algorithm that submits policy-induced queries. The algorithm first plays $\gamma e_1,\dots,\gamma e_d$ in $s^\star$ (these are $d$ policies that generate trajectories of length one) to locate the position of the hyperspherical cap (vector $w$). Then the agent probes the hyperspherical cap to gain knowledge of the reward function, which identifies the sign of the MDP.

Upon playing $\gamma e_1,\dots,\gamma e_d$ in $s^\star$ the agent receives the transition functions $p_w(s' = \overline s \mid (s^\star,\gamma e_i))  = 
\min\{ \frac{1}{\gamma} (\gamma e_i)^\top w, 1 \} =  e_i^\top w $ for all $i \in [d]$. From this, the agent can determine each entry of the vector $w$. Next, it can play the state-action $(s^\star,w)$ to probe the hyperspherical cap and to observe the reward ($1-\gamma$ on $M_{w,+}$ and $0$ on $M_{w,-}$) which identifies the sign of the MDP. Since the specific MDP is now precisely identified, the agent can predict the value of any policy, and in particular, it can return the optimal policy.
\end{proof}

\newpage
\section{Proof of Theorem \ref{lowerbound:thm:ALL:Policy-Free}}
\subsection{MDP Class Definition}
\label{sec:ALL:Distribution:class}
We consider a class $\M$ of MDPs sharing the same state-space $\mathcal S$, action space $\ActionSpace$ and discount factor $\gamma > 0$, but different reward function $r$, transition function $p$ and target policy (for the OPE problem). A feature map $\phi$ from the state-action space to $\R^d$ is prescribed  and is fixed across different MDPs of the class. We first define the state and action space, the discount factor and the feature extractor which are fixed across all MDPs in the class $\M$.

We present \emph{two versions} of the construction, one with \emph{continuous} action space and one with \emph{small action} space. The proof for either case is the same; the two constructions are reported here to highlight the impact of the action space in such lower bound. 

\paragraph{State space} 
The state space $\mathcal S$ consists of a starting state  $s^\star$ and  a set of satellite states which can be identified with the unit ball $\B$.  Mathematically we can write
\begin{align}
\StateSpace = \{ s^\star \} \cup \B, \quad \{ s^\star \} \cap \B = \emptyset.
\end{align}

\paragraph{Action space (Continuous $\ActionSpace$)}
The starting state $s^\star$ has actions in the Euclidean ball; each satellite state has a unique action. Mathematically	
\begin{align}
\label{eqn:ALL:Distribution:ActionSet}
\ActionSpace(s) \defeq 	\begin{cases}
\B \quad & \text{if} \quad  s = s^\star \\
\{s\} \quad & \text{if} \quad  s \in \B. 
\end{cases}
\end{align}
In particular $\forall s \in \StateSpace, \ActionSpace(s) \subseteq \B$.

\paragraph{Action space (Small $\ActionSpace$)}
The starting state $s^\star$ has the canonical vectors in the Euclidean ball as available actions (along with their `negative counterpart'); each satellite state has a unique action. Mathematically	
\begin{align}
\label{eqn:ALL:Distribution:ActionSetExtended}
\ActionSpace(s) \defeq 	\begin{cases}
E \defeq \{e_1,\dots,e_d,-e_1,\dots,-e_d \} \quad & \text{if} \quad  s = s^\star \\
\{s\} \quad & \text{if} \quad  s \in \B. 
\end{cases}
\end{align}
In particular $\forall s \in \StateSpace, \ActionSpace(s) \subseteq \B$.

\paragraph{Discount factor}
The discount factor $\gamma$ is in the interval $(0,1)$.

\paragraph{Feature map}
The feature extractor returns the action chosen in the selected state. Mathematically  
\begin{align}
\phi(s,a) & = a.
\end{align}
Since the available actions are always a subset of $\B$ (see e.g., \cref{eqn:ALL:Distribution:ActionSet}), it is easy to see that the image of the feature map is (contained in) the set $\B$.

\paragraph{Target Policy} In the OPE problem, the target policy is identical in every MDP $M$ in the class, and in particular, it returns the only action available in each state $\neq s^\star$. In $s^\star$ it takes action $\vec 0$.

\subsection{Instance of the Class}
\label{sec:ALL:Distribution:instance}
Every MDP in the class $\M$ can be identified by a vector $w \in \B$ and a sign $+$ or $-$, and is denoted by $M_{w,+}$ or $M_{w,-}$, respectively. Next we describe the transition function and the reward function on $M_w$ and the target policy.

\paragraph{Transition function}
The transition function depends only on the vector $w\in \B$ that identifies the MDP $M_{w}$, but not on the sign $+$ or $-$ that distinguishes $M_{w,+}$ from $M_{w,-}$. For a given MDP $M_w$, the transition function is deterministic and depends only on the action chosen; it is convenient to represent the only possible successor state $s' \in \StateSpace$ by the function $s' =\successor_w(a)$.
Mathematically
\begin{align}
\label{eqn:ALL:Distribution:TransitionFunction}
\successor_w(a) = 
\begin{cases}
\frac{1}{\gamma} (a^\top w) w, \quad \quad \quad & \text{if} \; a\not \in \C_\gamma(w)\cup \C_\gamma(-w), \\
a \quad \quad \quad & \text{otherwise}.
\end{cases}
\end{align} 

It is easy to see that the linear algebra operations in the above display are well defined. In addition, $\successor_w(a) \in \B \subset \StateSpace$, because $a \in \B$ and if $a \not \in \C_\gamma(w)\cup \C_\gamma(-w)$ we have $\| \frac{1}{\gamma} (a^\top w) w \|_2 \leq \frac{|a^\top w|}{\gamma}\| w \|_2 \leq 1$, so $\frac{1}{\gamma} (a^\top w) w \in \B$ as well. This also means that the successor state is never the starting  state $s^\star$.

\paragraph{Reward function}
The reward function depends on both the vector $w \in \B$ and on the sign $+$ or $-$ that identifies the MDP and it is defined as follows:
\rdef

\begin{remark}
$M_{w,+}$ and $M_{w,-}$ have identical transition functions but opposite reward functions.
\end{remark}
\begin{remark}
\label{rem:sign}
$M_{w,+}$ and $M_{-w,-}$ have identical transition and reward functions i.e., they are the same MDP $M_{w,+}= M_{-w,-}$. Likewise, we have $M_{w,-}=M_{-w,+}$.
\end{remark}

\subsection{Realizability}
For each MDP in the class, we compute the action value function of an arbitrary policy $\pi$, showing realizability.

\begin{lemma}[Realizability: Verifying assumption \ref{lowerbound:asm:ALL:MDP}]
\label{lowerbound:lem:ALL:Distribution:Realizability}
For any vector $w \in \partial \B$,
and any policy $\pi$, let $Q^\pi_{w,+}$ and $Q^\pi_{w,-}$ be the action-value functions of $\pi$ on $M_{w,+}$ and $M_{w,-}$, respectively. Then it holds that
\begin{align}
\label{eqn:ALL:Distribution:Solution}
\forall (s,a), \quad \begin{cases}
Q^\pi_{w,+}(s,a) = \phi(s,a)^\top (+w) \quad \text{on} \; M_{w,+},\\
Q^\pi_{w,-}(s,a) = \phi(s,a)^\top (-w) \quad \text{on} \; M_{w,-}.
\end{cases}
\end{align}
\end{lemma}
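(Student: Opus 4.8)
The plan is to verify directly that the two proposed closed forms are fixed points of the respective Bellman evaluation operators and then conclude by uniqueness of the fixed point. On $M_{w,+}$ set $Q(s,a) := \phi(s,a)^\top(+w) = a^\top w$; this is bounded ($|a^\top w|\le\|a\|_2\|w\|_2\le 1$), and since rewards lie in $[-1,1]$ and $\gamma\in(0,1)$ the operator $\T^{\pi}_{w,+}$ is a $\gamma$-contraction in $\sup$-norm on bounded functions whose unique fixed point is $Q^{\pi}_{w,+}$; hence it suffices to check $\T^{\pi}_{w,+}Q=Q$ on every $(s,a)$. The argument on $M_{w,-}$ is the mirror image with $w$ replaced by $-w$.

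The crucial structural point — and the reason the statement holds for \emph{every} $\pi$, not just a designed target policy as in \cref{lowerbound:lem:OPE:Policy:Realizability} — is that each satellite state $s\in\B$ has the single available action $\{s\}$, so $\pi(s)=s$ and $\phi(s,\pi(s))=s$ regardless of $\pi$; moreover $\successor_w(a)\in\B$ always, so the successor of any $(s,a)$ is a satellite state and the post-first-step behaviour of $\pi$ is entirely forced. Because the dynamics are deterministic, this gives
\[
(\T^{\pi}_{w,+}Q)(s,a)\;=\;r_{w,+}(s,a)+\gamma\,Q\big(\successor_w(a),\pi(\successor_w(a))\big)\;=\;r_{w,+}(s,a)+\gamma\,\successor_w(a)^\top w .
\]

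It then remains to split on whether $a\in\C_\gamma(w)\cup\C_\gamma(-w)$. If $a\notin\C_\gamma(w)\cup\C_\gamma(-w)$, then $r_{w,+}(s,a)=0$ and $\successor_w(a)=\tfrac1\gamma(a^\top w)w$, so the right-hand side equals $\gamma\cdot\tfrac1\gamma(a^\top w)\|w\|_2^2=a^\top w$, where we use $\|w\|_2=1$ (i.e.\ $w\in\partial\B$). If $a\in\C_\gamma(w)\cup\C_\gamma(-w)$, then $r_{w,+}(s,a)=(1-\gamma)a^\top(+w)$ and $\successor_w(a)=a$, so the right-hand side equals $(1-\gamma)a^\top w+\gamma a^\top w=a^\top w$. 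In both cases $(\T^{\pi}_{w,+}Q)(s,a)=a^\top w=\phi(s,a)^\top(+w)=Q(s,a)$, and exactly the same two cases cover $s=s^\star$. Replacing $w$ by $-w$ throughout, and using $r_{w,-}=-r_{w,+}$, gives the claim on $M_{w,-}$.

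I do not expect a real obstacle: the content is the same two-line case analysis as in \cref{lowerbound:lem:OPE:Policy:Realizability}, and the only genuinely new ingredient is noticing that the single-action satellite states make $\pi(\successor_w(a))$ independent of $\pi$, so nothing special about a target policy is needed. One should also remark that the computation is word-for-word identical for the continuous and small-action versions of the construction, since both have $\ActionSpace(s^\star)\subseteq\B$ and share the same feature map and transition function.
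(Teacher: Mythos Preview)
Your proposal is correct and follows essentially the same approach as the paper: verify that the candidate $Q$ is a fixed point of $\T^{\pi}_{w,\pm}$ via the two-case split on $a\in\C_\gamma(w)\cup\C_\gamma(-w)$, using that satellite states have a single action so $\pi(\successor_w(a))=\successor_w(a)$. Your additions (the explicit contraction/uniqueness justification, the remark that the computation is identical for the continuous and small-action variants, and the $r_{w,-}=-r_{w,+}$ symmetry shortcut for $M_{w,-}$) are sound refinements but not a different route.
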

\begin{proof}
Let $\T^{\pi}_{w,+}$ and $\T^{\pi}_{w,-}$ be the Bellman evaluation operators for $\pi$ on $M_{w,+}$ and $M_{w,-}$, respectively.
We need to show that the proposed solutions in \cref{eqn:ALL:Distribution:Solution} satisfy the Bellman evaluation equations at all $(s,a)$ pairs.

First, we focus on $M_{w,+}$ and apply the Bellman operator $\T^\pi_{w,+}$ to $\Qpi_{w,+}$. 

If $a \not \in \C_{\gamma}(w) \cup \C_{\gamma}(-w)$ then notice that the successor state is $s^+(a) = \frac{1}{\gamma} (a^\top w) w$. Furthermore, $\pi$ must  return the only action available in the successor state (and the successor state is never $s^\star$). Thus we can write 
\begin{align}
(\T^{\pi}_{w,+}Q^\pi_{w,+})(s,a) & = r_{w,+}(s,a) + \gamma Q^\pi_{w,+}(\successor_{w}(a),\pi(\successor_{w}(a))) \\
& = 0 + \gamma Q^\pi_{w,+}(\successor_{w}(a),\pi(\successor_{w}(a))) \\
& = \gamma Q^\pi_{w,+}(\successor_{w}(a),\successor_{w}(a)) \\
& = \gamma \phi(\successor_{w}(a),\successor_{w}(a))^\top(+w) \\
& = \gamma (\successor_{w}(a))^\top (+w) \\
& = \gamma \(\frac{1}{\gamma} (a^\top w) w\)^\top (+w) \\
& = a^\top w \\
& = \phi(s,a)^\top w \\
& = Q^\pi_{w,+}(s,a).
\end{align}

If conversely $a \in \C_{\gamma}(w) \cup \C_{\gamma}(-w)$ then $\successor_{w}(a) = a$, and as before the policy can only take the only action available there, giving
\begin{align}
(\T^{\pi}_{w,+}Q^\pi_{w,+})(s,a) & = (1-\gamma)a^\top(+w) + \gamma Q^\pi_{w,+}(\successor_{w}(a),\pi(\successor_{w}(a))) \\
& = (1-\gamma)a^\top(+w) + \gamma Q^\pi_{w,+}(a,\pi(a)) \\
& = (1-\gamma)a^\top(+w) + \gamma \phi(a,a)^\top(+w) \\
& = (1-\gamma)a^\top(+w) + \gamma a^\top (+w) \\
& = a^\top (+w) \\
& = \phi(s,a)^\top (+w) \\
& = Q^\pi_{w,+}(s,a).
\end{align}
This shows that $Q^\pi_{w,+}(s,a) = \phi(s,a)^\top (+w)$ is the value of $\pi$ on $M_{w,+}$.

The argument to verify that $Q^{\pi}_{w,-} = \phi(s,a)^\top (-w)$ is the value of $\pi$ on $M_{w,-}$ is identical, as follows.

If $a \not \in \C_{\gamma}(w) \cup \C_{\gamma}(-w)$ then
\begin{align}
(\T^{\pi}_{w,-}Q^\pi_{w,-})(s,a) & = r_{w,-}(s,a) + \gamma Q^\pi_{w,-}(\successor_{w}(a),\pi(\successor_{w}(a))) \\
& = 0 + \gamma Q^\pi_{w,-}(\successor_{w}(a),\pi(\successor_{w}(a))) \\
& = \gamma Q^\pi_{w,-}(\successor_{w}(a),\successor_{w}(a)) \\
& = \gamma (\successor_{w}(a))^\top (-w) \\
& = \gamma \(\frac{1}{\gamma} (a^\top w) w\)^\top (-w) \\
& = a^\top (-w) \\
& = \phi(s,a)^\top (-w).
\end{align}

Otherwise, if $a \in \C_{\gamma}(w) \cup \C_{\gamma}(-w)$ then\begin{align}
	(\T^{\pi}_{w,-}Q^\pi_{w,-})(s,a) & = (1-\gamma)a^\top(-w) + \gamma Q^\pi_{w,-}(\successor_{w}(a),\pi(\successor_{w}(a))) \\
	& = (1-\gamma)a^\top(-w) + \gamma Q^\pi_{w,-}(a,\pi(a)) \\
& = (1-\gamma)a^\top(-w) + \gamma a^\top (-w) \\
& = a^\top (-w) \\
& = \phi(s,a)^\top (-w).
\end{align}
\end{proof}

\subsection{Proof of Theorem \ref{lowerbound:thm:ALL:Policy-Free}}

\begin{proof}
Consider the MDP class described in \cref{sec:ALL:Distribution:class} and \cref{sec:ALL:Distribution:instance}. First, from \fullref{lowerbound:lem:ALL:Distribution:Realizability} we know that every MDP in $\M$ satisfies \fullref{lowerbound:asm:ALL:MDP} with a given feature map $\phi(\cdot,\cdot)$.

Let $\mu$ be the policy-free query set chosen by the oracle. Using \fullref{lowerbound:lem:Existence} we can claim that if $|\mu| < I^{-1}_{1-\gamma^2}(\frac{d-1}{2},\frac{1}{2})$ then $\exists \wtilde \in \partial\B^{}$ such that $\forall (s,a) \in \mu,  a \not \in \C_\gamma(\wtilde) \cup  \C_\gamma(-\wtilde)$. Consider the two associated MDPs $M_{\wtilde,+}$ and $M_{\wtilde,-}$.  Notice that the transition function is by construction identical on $M_{\wtilde,+}$ and $M_{\wtilde,-}$ while their reward function is zero at any $(s,a)\in \mu$:
\begin{align}
\begin{cases}
r_{\wtilde,+}(s,a) & = r_{\wtilde,-}(s,a) \\
p_{\wtilde,+}(s,a) & = p_{\wtilde,-}(s,a).
\end{cases}
\end{align}
This implies that the reward and transition functions in the dataset could have originated from either $M_{\wtilde,+}$ or $M_{\wtilde,-}$. In addition, the target policy is fixed for all MDPs and so it does not reveal the MDP instance.

\paragraph{Conclusion for continuous action space}

In this section we focus on the construction with continuous actions space.

Consider the best policy identification problem. In $s^\star$ the algorithm has two choices to determine $\pihatstar_{\mathcal D}$: choose an action $a$ such that $a^\top \wtilde > 0$ or $a^\top \wtilde \leq 0$. In the first case, it obtains negative return on $M_{\wtilde,-}$ and in the second case it obtains negative return on $M_{\wtilde,+}$. However, the value of the optimal policy in $s^\star$ is $+1$ in both cases. Even if the batch algorithm randomizes the output, with probability at least $1/2$ the returned policy is at least $1$-suboptimal.

Likewise, consider the off-policy evaluation problem where $\pi$ is the target policy. The batch algorithm has two choices to estimate $Q^{\pi}_M(s^\star,\wtilde)$: either a positive or a negative value. However $Q^{\pi}_{M_{\wtilde,+}}(s^\star,\wtilde)$ equals $+1$ and $Q^{\pi}_{M_{\wtilde,-}}(s^\star,\wtilde)$ equals $-1$. At best, the batch algorithm  can randomize between a positive and a negative value, making an error of at least $1$ with probability at least $1/2$.

\paragraph{Conclusion for small action space}

In this section we focus on the construction with small actions space.

Now \fullref{lowerbound:lem:SufficientInnerProduct} ensures that that for any $\wtilde, \|\wtilde \|_2 = 1$ there exists an action $e_j \in \ActionSpace(s^\star)$ such that either $e_j^\top \wtilde \geq \frac{1}{\sqrt{d}}$ or $-e_j^\top \wtilde \geq \frac{1}{\sqrt{d}}$; define the action in $s^\star$ most aligned with $\wtilde$ to be $\widetilde e = \argmax_{e \in \ActionSpace(s^\star)} e^\top \wtilde \geq \frac{1}{\sqrt{d}}$. 

Consider the best policy identification problem. In $s^\star$ the algorithm has two choices to determine $\pihatstar_{\mathcal D}$: choose an action $a$ such that $a^\top \wtilde > 0$ or $a^\top \wtilde \leq 0$. In the first case, it obtains negative return on $M_{\wtilde,-}$ and in the second case it obtains negative return on $M_{\wtilde,+}$. However, the value of the optimal policy in $s^\star$ is at least $\widetilde e^\top \wtilde \geq \frac{1}{\sqrt{d}} $ on $M_{w,+}$ and at least  $(-\widetilde e)^\top (-\wtilde) \geq \frac{1}{\sqrt{d}} $ on $M_{w,-}$ (notice that $\widetilde e$ is available in the construction with small action space). Even if the batch algorithm randomizes the output, with probability at least $1/2$ the returned policy is at least $\frac{1}{\sqrt{d}}$-suboptimal.

Likewise, consider the off-policy evaluation problem where $\pi$ is the target policy. The batch algorithm has two choices to estimate $Q^{\pi}_M(s^\star,\widetilde e)$: either a positive or a negative value. However $Q^{\pi}_{M_{\wtilde,+}}(s^\star,\widetilde e) \geq \frac{1}{\sqrt{d}}$ and $Q^{\pi}_{M_{\wtilde,-}}(s^\star,\widetilde e) \leq -\frac{1}{\sqrt{d}}$. At best, the batch algorithm  can randomize between a positive and a negative value, making an error of at least $\frac{1}{\sqrt{d}}$ with probability at least $1/2$.

\paragraph{Learning with an adaptive or online algorithm (continuous action space)}
Now consider the following adaptive algorithm that submits policy-free queries (the algorithm that submits policy-induced queries is analogous). At a high level, the algorithm first plays $\frac{\gamma}{2} e_1,\dots,\frac{\gamma}{2}  e_d$ in $s^\star$. This is a basis for $\R^d$ and by construction it does not intersect any hyperspherical cap. However, this  allows the agent to locate the position of the hyperspherical caps in the given MDP (vector $w$). Then the agent probes either hyperspherical cap to gain knowledge of the reward function, which identifies the sign of the MDP.

More formally, consider an MDP $M_w, w \in \partial\B$ and notice that none of the query actions (described above) is located in an hyperspherical cap of height $\gamma$:  we have $|(\frac{\gamma}{2} e_i)^\top w| \leq \frac{\gamma}{2} \| e_i \|_2 \| w \|_2 < \gamma$. In addition, $e_1,\dots,e_d$ is a basis for $\R^d$ which implies  that $w \neq 0$ cannot be orthogonal to all of them. Therefore, there must exist an index $j \in [d]$ such that $e_j^\top w \neq 0$. Since $\frac{\gamma}{2}e_j \not \in \C_{\gamma}(w) \cup \C_{\gamma}(-w)$, the successor state $s^+_w(\frac{\gamma}{2} e_j)$ must be non-zero and must be parallel to $w$. Thus, the agent knows the vector $w$ of the MDP (up to a sign). Next, it can play $(s^\star,w)$ to probe one of the two hyperspherical caps and to observe the reward $\pm (1-\gamma) $ which identifies the sign of the MDP using \cref{rem:sign}. Since the specific MDP is now precisely identified, the agent can predict the value of any policy, and therefore, it can return an optimal policy.
\end{proof} 

\subsection{Helper Lemma}

To prove \fullref{lowerbound:thm:ALL:Policy-Free} with small action space we need the following helper lemma.
\begin{lemma}[Sufficient Inner Product]
\label{lowerbound:lem:SufficientInnerProduct}
Let $\{e_1,\dots,e_d\} \subseteq \R^d$ be the set of canonical vectors and $\wtilde \in \R^d, \| \wtilde \|_2 = 1$. Then there exists a canonical vector $e_j$ such that
\begin{align}
	|e_j^\top \wtilde | \geq \frac{1}{\sqrt{d}}.
\end{align}
\begin{proof}
Let $\wtilde_j$ be the $j$ th component of $\wtilde$. From the hypothesis we must have
\begin{align}
\label{eqn:sum}
	1 = \| \wtilde \|^2_2 = \sum_{j=1}^d \wtilde^2_j.
\end{align}	
This implies that at least one component $\wtilde_j$ for some $j \in [d]$ must be greater (in absolute value) than $\frac{1}{\sqrt{d}}$, i.e., 
\begin{align}
	\exists j \in [d] \quad \text{such that} \quad |\wtilde_j| \geq \frac{1}{\sqrt{d}} 
\end{align}
otherwise $\| \wtilde \|_2 < 1$ in \cref{eqn:sum}, contradiction. Since $e_1,\dots,e_j$ are the canonical vectors, the thesis $| e_j^\top \wtilde | = | \wtilde_j | \geq \frac{1}{\sqrt{d}}$ now follows.
\end{proof}

\end{lemma}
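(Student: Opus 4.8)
The plan is to prove \cref{lowerbound:lem:SufficientInnerProduct} by an elementary averaging (pigeonhole) argument on the coordinates of $\wtilde$. Because the feature map in this construction is the identity on actions, $e_j^\top \wtilde$ is nothing but the $j$-th coordinate $\wtilde_j$ of $\wtilde$, so the claim reduces to the purely arithmetic fact that a unit vector in $\R^d$ must have at least one coordinate of absolute value at least $\frac{1}{\sqrt{d}}$.

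First I would rewrite the hypothesis $\|\wtilde\|_2 = 1$ as $\sum_{j=1}^d \wtilde_j^2 = 1$. Then I would argue by contradiction: if $|\wtilde_j| < \frac{1}{\sqrt{d}}$ held for every $j \in [d]$, then $\wtilde_j^2 < \frac{1}{d}$ for each $j$, and summing the $d$ inequalities would give $\sum_{j=1}^d \wtilde_j^2 < d \cdot \frac{1}{d} = 1$, contradicting the normalization. Hence some index $j \in [d]$ satisfies $|\wtilde_j| \geq \frac{1}{\sqrt{d}}$, and since $e_j^\top \wtilde = \wtilde_j$ this is exactly $|e_j^\top \wtilde| \geq \frac{1}{\sqrt{d}}$, as claimed.

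There is no real obstacle here; the statement is one line of linear algebra. The only subtlety worth noting is the direction of the bound: passing through the contrapositive (all coordinates \emph{strictly} below $\frac{1}{\sqrt{d}}$ in absolute value) is what delivers the non-strict inequality $\geq \frac{1}{\sqrt{d}}$ in the conclusion. I would additionally recall where this lemma feeds into the main argument, namely the small-action-space branch of the proof of \cref{lowerbound:thm:ALL:Policy-Free}: it ensures that among the $2d$ actions $\{\pm e_1,\dots,\pm e_d\}$ available in $s^\star$ there is always one, say $\widetilde e$, with $\widetilde e^\top \wtilde \geq \frac{1}{\sqrt{d}}$, so that the optimal value at $s^\star$ is at least $\frac{1}{\sqrt{d}}$ on $M_{\wtilde,+}$ (and symmetrically, using $-\widetilde e$, on $M_{\wtilde,-}$); this $\frac{1}{\sqrt{d}}$ gap between the two indistinguishable MDPs is precisely what yields the stated $(\frac{1}{\sqrt{d}},\frac{1}{2})$-soundness lower bound with only $|\ActionSpace| = 2d$ actions.
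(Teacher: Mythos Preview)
Your proposal is correct and follows exactly the same route as the paper's proof: both write $1 = \|\wtilde\|_2^2 = \sum_{j=1}^d \wtilde_j^2$ and argue by contradiction that if every coordinate satisfied $|\wtilde_j| < \frac{1}{\sqrt{d}}$ the sum would be strictly less than $1$, then conclude via $e_j^\top\wtilde = \wtilde_j$. Your additional remark on how the lemma feeds into the small-action-space branch of \cref{lowerbound:thm:ALL:Policy-Free} is accurate and matches the paper's use of it.
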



\end{document}